\documentclass{article}

\usepackage{microtype}
\usepackage{graphicx}
\usepackage{booktabs} 
\usepackage{subcaption}

\usepackage{hyperref}
\usepackage[ruled,vlined,linesnumbered]{algorithm2e}

\usepackage[accepted]{icml2025}

\usepackage{amsmath}
\usepackage{amssymb}
\usepackage{mathtools}
\usepackage{amsthm}

\usepackage[capitalize,noabbrev]{cleveref}

\theoremstyle{plain}
\newtheorem{theorem}{Theorem}[section]

\newtheorem{lemma}[theorem]{Lemma}

\newtheorem{corollary}[theorem]{Corollary}

\newtheorem{definition}[theorem]{Definition}

\theoremstyle{remark}

\Crefname{figure}{Fig.}{Figs.}
\Crefname{equation}{Eq.}{Eqs.}

\newcommand{\paren}[1]{(#1)}
\newcommand{\Paren}[1]{\left(#1\right)}

\newcommand{\Brac}[1]{\left[#1\right]}

\newcommand{\norm}[1]{\lVert#1\rVert}

\newcommand{\EM}{\mathrm{EM}}
\newcommand{\ALG}{\mathrm{ALG}}
\newcommand{\OPT}{\mathrm{OPT}}
\newcommand{\COST}{\mathrm{COST}}
\newcommand{\DIST}{\mathrm{DIST}}
\newcommand{\TV}{\mathrm{TV}}

\usepackage{textcomp}
\crefname{algorithm}{Algorithm}{Algorithms}
\allowdisplaybreaks
\def\BibTeX{{\rm B\kern-.05em{\sc i\kern-.025em b}\kern-.08em
		T\kern-.1667em\lower.7ex\hbox{E}\kern-.125emX}}

\usepackage[textsize=tiny]{todonotes}

\icmltitlerunning{Average Sensitivity of Hierarchical $k$-Median Clustering}

\begin{document}
	
	\twocolumn[
	\icmltitle{Average Sensitivity of Hierarchical $k$-Median Clustering}

	\icmlsetsymbol{equal}{*}
	
	\begin{icmlauthorlist}
		\icmlauthor{Shijie Li}{USTC}
		\icmlauthor{Weiqiang He}{USTC}
		\icmlauthor{Ruobing Bai}{USTC}
		\icmlauthor{Pan Peng}{USTC}
	\end{icmlauthorlist}
	
	\icmlaffiliation{USTC}{School of Computer Science and Technology, University of Science and Technology of China, Hefei, China}

	\icmlcorrespondingauthor{Pan Peng}{ppeng@ustc.edu.cn}
	
	\icmlkeywords{Hierarchical Clustering, Average Sensitivity, $k$-Median}
	
	\vskip 0.3in
	]
	
	\printAffiliationsAndNotice{}  
	
	\begin{abstract}  
		
		Hierarchical clustering is a widely used method for unsupervised learning with numerous  applications. However, in the application of modern algorithms, the datasets studied are usually large and dynamic. If the hierarchical clustering is sensitive to small perturbations of the dataset, the usability of the algorithm will be greatly reduced. In this paper, we focus on the hierarchical $k$ -median clustering problem, which bridges hierarchical and centroid-based clustering while offering theoretical appeal, practical utility, and improved interpretability. We analyze the average sensitivity of algorithms for this problem by measuring the expected change in the output when a random data point is deleted. We propose an efficient algorithm for hierarchical $k$-median clustering and theoretically prove its low average sensitivity and high clustering quality. Additionally, we show that single linkage clustering and a deterministic variant of the CLNSS algorithm exhibit high average sensitivity, making them less stable. Finally, we validate the robustness and effectiveness of our algorithm through experiments.
		
	\end{abstract}

	\section{Introduction}

	\textit{Hierarchical clustering}, as discussed in \cite{mcquitty1957elementary,hastie2009elements}, is one of the most commonly used clustering strategies. It produces a set of nested clusters, organized into a tree-like structure known as a dendrogram. In this structure, each leaf node typically encompasses only a single object, whereas each internal node represents the union of its child nodes. By exploring the dendrogram's multiple levels, one can identify clusters with varying levels of granularity. Consequently, hierarchical clustering is adept at detecting and analyzing complex data structures. Hierarchical clustering has attracted great interest in computer science \cite{jin2015scalable,charikar2017approximate,dasgupta2016cost,moseley2023approximation,lin2010general} because of its ability to reveal nested patterns in real-world data. Several agglomerative and divisive methods have been proposed and widely used in practice for hierarchical clustering, such as those based on single, complete, average linkage and Ward's method (see \Cref{sec:agglo}). 
	
	On the other hand, the stability of algorithms is becoming increasingly important in practical applications. If an algorithm is highly sensitive to changes in data points, even minor alterations can significantly impact the clustering results, leading analysts to make incorrect decisions based on unreliable outcomes, which can, in turn, result in higher costs for error correction. At the same time, such instability may inadvertently expose sensitive information, as small perturbations in data could highlight individual contributions or reveal underlying patterns that compromise privacy. Furthermore, unstable algorithms may erode trust in the analysis process, making their adoption in critical applications more challenging.
	
	It is thus crucial to understand if a hierarchical clustering algorithm is stable. Indeed, several studies have found that many hierarchical clustering algorithms, such as those using single linkage or complete linkage, are sensitive to small changes in the data \cite{balcan2014robust,cheng2019hierarchical,eriksson2011active}. This sensitivity can cause the dendrogram structure to change dramatically with slight variations in the data. However, most of these works focus only on \emph{adversarial} perturbations or attacks on the input and/or lack theoretical rigor.
	
	In this paper, we investigate the question  of the stability of hierarchical clustering by adopting the concept of \textit{average sensitivity}, as introduced by \cite{peng2020average,varma2021graph}
	that provides a theoretical framework for measuring the stability of the output of hierarchical clustering when data points are \emph{randomly} removed from the original dataset. Roughly speaking, for a (deterministic) algorithm that outputs $k$ clusters, its average sensitivity is the expected size of the symmetric difference of the output clusters before and after we randomly remove a few points. One can generalize this notion to random algorithms for hierarchical clustering. 
	This notion effectively captures the scenario where adversarial perturbations of the input are rare, but random noise is still commonly expected, such as in data collection with a few missing items or in a sensor network with a few sensors out of operation.

	We study the fundamental \emph{hierarchical Euclidean $k$-median clustering problem}, which bridges hierarchical clustering (providing multiscale structure) and centroid-based clustering (optimizing a global objective). This makes it both \emph{theoretically appealing} and practically useful. While linkage-based hierarchical methods are widely used, they lack a well-defined objective function, limiting their theoretical analysis. Moreover, unlike linkage methods that merge clusters based on pairwise distances, hierarchical $k$-median clustering maintains explicit cluster centers, making it more interpretable and well-suited for applications like data summarization or facility location problems.

	Specifically, given a set of points \(P \subseteq \mathbb{R}^d\), the objective of the hierarchical $k$-median problem \cite{lin2010general}, is to return an ordered set of centers \(\{c_1, \dots, c_n\}\), and a \emph{nested} sequence of cluster sets \(\mathcal{P}_1, \dots, \mathcal{P}_n\) of $P$ so that the maximum over all $1\leq k\leq n$ of the ratio of  $\sum_{i=1}^k\sum_{p\in C_i}\norm{p-c_i}_2$ to the optimum unconstrained $k$-median cost is minimized, where $C_i=\{c_1,\dots,c_i\}$ is the set of centers and $\mathcal{P}_k=\{P_1,\dots,P_k\}$ is a $k$-partition. Here, \(\mathcal{P}_i\) is obtained by splitting a particular cluster \(P_j\) from \(\mathcal{P}_{i-1} = \{P_1, \dots, P_{i-1}\}\), where \(j \in [i-1]\), resulting in \(P_j\) being divided into \(\{P_j \setminus P_i , P_i\}\). Consequently, \(\mathcal{P}_{i}\) becomes \(\{P_1, \dots, P_j \setminus P_i, \dots, P_i\}\). Each \(P_i\) is a subset of a part from the partition \(\mathcal{P}_{i-1}\).
	
	Several papers have proposed algorithms to address this problem \cite{lin2010general, lattanzi2020framework}. 
	In particular, Cohen-Addad,  Lattanzi, Norouzi-Fard, Sohler and Svensson \cite{cohen2021parallel} recently gave a parallel algorithm, which in turn is based on a sequential algorithm that we refer to as the CLNSS algorithm, with small space and round complexity for the above problem that outputs a hierarchical clustering such that for any fixed $k$, the $k$-median cost of the solution is at most an $O(\min\{d,\log n\}\log\Lambda)$ factor larger in expectation than that of an optimal solution, where $\Lambda$ is the ratio between the maximum and minimum distance of two points in the input dataset.

	However, it remains unclear whether the CLNSS algorithm and other hierarchical clustering algorithms in Euclidean space, has small average sensitivity (i.e., good stability).

	\subsection{Our Results}
	In this work, we study the stability of hierarchical $k$-median clustering algorithms for Euclidean space under random perturbations using the concept of average sensitivity. Our main contribution is summarized as follows.

	(1) We give a new algorithm for hierarchical $k$-median clustering by incorporating the exponential mechanism that was commonly used in differential privacy with the CLNSS algorithm. Specifically, the CLNSS algorithm first constructs a $2$-hierarchically well separated tree ($2$-RHST; see \Cref{sec:construct_rhst}) and then invokes a greedy algorithm to selection the centers according the tree distance. To ensure stability, we utilize the exponential mechanism by iteratively selecting the centers according to a probability distribution based on the cost of a point as a potential center. 
	We provide a theoretical guarantee on the performance of our algorithm by proving that our algorithm exhibits small average sensitivity while preserving the utility guarantee (\Cref{sec:ouralgorithm}).

	(2) We show that several agglomerative clustering algorithms,
	including single linkage clustering and a variant of the CLNSS algorithm have large average sensitivity (\Cref{sec:lowerbound}). That is, we identify specific datasets where these algorithms demonstrate poor stability. This implies that such algorithms are unstable under random perturbation of the data.

	(3) In \Cref{sec:experiments}, we evaluate our algorithm on multiple datasets. Our results show that our algorithm achieves low average sensitivity while maintaining good approximation guarantees. Additionally, we validate the instability of classic hierarchical clustering algorithms using both synthetic and real datasets. We also observe that single linkage exhibits good robustness on the selected real-world dataset (albeit at the cost of a high $k$-median cost.). To explain this phenomenon, we show -- both theoretically and experimentally (\Cref{sec:lowsinglelinkage}) -- that single linkage achieves low sensitivity on datasets with strong cluster structures.

	\textbf{Comparison to the work \citep{hara2024average}}
	\citet{hara2024average} also examines the average sensitivity of hierarchical clustering. However, the clustering criteria in our study differ significantly from theirs, as we offer much stronger theoretical guarantees. Several key distinctions set our approach apart. First, while their method relies on sparsest cut and Dasgupta cost \cite{dasgupta2016cost} to construct the hierarchical clustering, our study employs a hierarchical $k$-median clustering approach. Second, although they also leverage the exponential mechanism, they do not provide an approximation guarantee, leaving the effectiveness of their method without theoretical support.

	\subsection{Other Related Work}
	
	\textbf{Robust or Stable Hierarchical Clustering.} 
	There has been research on robust/stable hierarchical clustering \cite{balcan2014robust,cheng2019hierarchical,eriksson2011active}, but these studies aim to achieve stable clustering results by identifying input outliers. They consider the impact of outliers on clustering, it means that they potentially classify input points as ``good points'' or ``bad points''. In fact, even without outliers, the clustering results are not necessarily stable. Any change in data points may cause drastic changes in hierarchical clustering results. Therefore, our work follows a more natural consideration, and we use a more natural robustness standard. 
	
	\textbf{Average Sensitivity of Algorithms. }
	The concept of average sensitivity, pivotal in assessing the stability of algorithms against minor perturbations in input data, was first introduced by  \citet{murai2019sensitivity}. They studied the stability of network centralities. \citet{varma2021graph} apply average sensitivity more widely to graph problem analysis, such as maximum matching, minimum s-t cut, etc. \citet{kumabe2022average} extended the study of average sensitivity to dynamic programming algorithms. \citet{peng2020average} analyzed the average sensitivity of $k$-way spectral clustering to evaluate its reliability and efficiency. The average sensitivity of \(k\)-means++ and coresets which is studied by \citet{yoshida2022averageNIPS} further explored these concepts. The application of average sensitivity to decision tree problems was studied by \citet{hara2022average}.

{\textbf{Differential Privacy. }}
{The concept of \textit{differential privacy (DP)} is similar to that of average sensitivity. DP \cite{Dwork2006DP} 
which developed by Dwork et al. is a data privacy protection standard that states that if given two adjacent databases, an differential private algorithm produces statistically indistinguishable outputs. DP clustering algorithms have been widely studied and designed by \cite{su2016differentially,huang2018optimal,ghazi2020private,cohen2022near,imola2023differentially}, such as $k$-means, $k$-median, correlation clustering and hierarchical clustering.}

{Note that the difference is that DP focuses on worst-case sensitivity rather than the average-case sensitivity. For example, as long as the algorithm is very sensitive to a certain input, we consider the algorithm to be bad under the concept of DP. In addition, due to the requirements of the definition of DP, the total variation distance of the results of two adjacent inputs must be small, but the earth mover's distance of these two results is smaller than the total variation distance, which means that even if an algorithm is not DP, its average sensitivity may be small. {Importantly, it is known that if an algorithm is $\beta$-DP (see e.g., \cite{Dwork2006DP}), then its average sensitivity is at most $\beta$ \cite{varma2021graph}. Despite these differences, DP and average sensitivity are both measures of algorithmic robustness, and there are meaningful connections between them.}} 

We provide more related work on statistically robust clustering 
in \cref{sec:appendixrelatedwork}.

\section{Preliminaries}
\textbf{Distances and $k$-Median Cost}
\label{sec:cost_def}
For two points \( p \) and \( q \) in Euclidean space, the Euclidean distance between them is defined as $\DIST(p, q)=\|p - q\| $. Given a point \( p \) and a set \( C \subseteq \mathbb{R}^d \), we define
$ 
\DIST(p, C) = \min_{c \in C} \|p - c\|$, as the minimum distance from \( p \) to any point in \( C \). 

Given a set of points \( P \subseteq \mathbb{R}^d \), a set of centers \( C = \{c_1, \ldots, c_k\}\), 
the \emph{\(k\)-median cost} is defined as
\[
\COST(P, C) = \sum\limits_{p \in P} \DIST(p, C)=\sum\limits_{p \in P}
\min_{c \in C}  \norm{p-c},
\]
The points in \(C\) are referred to as \textit{cluster centers}. 

For two partitions \( \mathcal{P}_1 \) and \( \mathcal{P}_2 \), we say that \( \mathcal{P}_1 \) is nested in \( \mathcal{P}_2 \) if \( \mathcal{P}_2 \) can be obtained from \( \mathcal{P}_1 \) by merging two or more parts of \( \mathcal{P}_1 \).

\textbf{Average Sensitivity} For hierarchical clustering, the input is a point set \( P = \{p_1, p_2, \ldots, p_n\} \), and we want to measure the average sensitivity of the algorithm after randomly deleting a point. We let \( P^{(i)} = \{p_1, \ldots, p_{i-1}, p_{i+1}, \ldots, p_n\} \) denote the dataset in which the \(i\)-th point is deleted.

Consider a deterministic  algorithm \( \mathcal{A} \) that takes as input a point set $P$ and a parameter \(k\) and outputs  \(\mathcal{P}_k=\{P_1, \ldots, P_k\}\). Let \(\mathcal{P}^{(i)}_k=\{P_1^{(i)}, \ldots, P_k^{(i)}\}\) denote the output clustering when $\mathcal{A}$ takes as input $P^{(i)}$ and $k$. The average sensitivity of \( \mathcal{A}\) on \( P \) is given by
\begin{equation}
\label{euq:aver_dec}
\beta(\mathcal{A}, P) =\frac{1}{n} \sum_{i=1}^n  |\mathcal{P}_k \triangle \mathcal{P}_k^{(i)}|=\frac{1}{n} \sum_{i=1}^n \min_\pi \sum_{j=1}^k \left|P_j\triangle P_{\pi(j)}^{(i)}\right|,
\end{equation}  
where \(\pi: [k] \to [k]\) is a bijection, establishing a correspondence between the clusters in \(\mathcal{P}_k\) and \(\mathcal{P}^{(i)}_k\) based on the symmetric difference of sets. Specifically, for any two sets \(X\) and \(Y\), the symmetric difference is defined as \(X \triangle Y := (X \setminus Y) \cup (Y \setminus X)\).

For a randomized algorithm, we often measure the average sensitivity by the distance between its output distributions. Specifically, the average sensitivity of a randomized algorithm \( \mathcal{A} \) on a dataset \( P \) (with respect to the total variation distance) is defined as  
\begin{equation}
\label{equ:aver_sen}
\beta(\mathcal{A}, P) = \frac{1}{n} \sum_{i=1}^n d_{\EM}(\mathcal{A}(P),\mathcal{A}(P^{(i)})),
\end{equation}

where \( d_{\EM}(\mathcal{A}(X), \mathcal{A}(X^{(i)})) \) represents the Earth Mover's Distance (EMD) between the outputs \( \mathcal{A}(X) \) and \( \mathcal{A}(X^{(i)}) \), with the distance between two outputs measured by the symmetric difference (see \Cref{euq:aver_dec}). Specifically, \( d_{\EM}(\mathcal{A}(X), \mathcal{A}(X^{(i)})) \) is defined as
\( \min_{\mathcal{D}}[\mathbb{E}_{(x,y)\sim\mathcal{D}}[x\triangle y]] \), where \( \mathcal{D} \) denotes a distribution over pairs \( (x, y) \) of outputs of \( \mathcal{A} \) such that the left and right marginals of \( \mathcal{D} \) correspond to \( \mathcal{A}(X) \) and \( \mathcal{A}(X^{(i)}) \), respectively.

\textbf{The CLNSS algorithm} Now we describe the CLNSS algorithm. 
We need the following notion of $\ell$-RHST, which  is way of embedding the dataset into a restricted hierarchical structure based on a quadtree.
\begin{definition}
\label{def:rhst}
A restricted \(l\)-hierarchically well-separated tree (\textit{\(l\)-RHST}) is a positively weighted rooted tree where all leaves are at the same level, and edges at each level have the same weight, and the length of the edges decreases by a factor of $\ell$ on any root-to-leaf path. 
\end{definition}
We will describe in detail on how to build $2$-RHST, i.e., algorithm \textsc{Construct2RHST}($P$), in \Cref{alg:rhst-construction} of \Cref{sec:construct_rhst}. 
Given such a tree, let \(\DIST_T(p,q)\) denote the shortest path between any two points \(p\) and \(q\) in a tree \(T = (P, E, \omega)\), where \(P\) is the point set, with all points \(p\) located in the leaf nodes of the tree and at the same level, \(E\) is the edge set, and \(\omega\) denotes the edge weights, forming the metric space \((P, \DIST_T)\). 
For a set of points \(P\) on a tree and a set of centers \(C\) where \(|C| = k\), we define
$\COST_T(P, C) =  \sum\limits_{p \in P} \min_{c \in C}  \mathrm{DIST_T}(p, c).
$

The CLNSS algorithm begins by applying a random shift to set $P$, where the shift is uniformly sampled from \([0, \Lambda]^d\). Then, a 2-RHST $T$ is constructed on the shifted dataset, and  hierarchical clustering process \Cref{alg:greedy} is performed on the 2-RHST tree $T$. See \Cref{alg:CLNSS} in  \Cref{sec:construct_rhst}.

\begin{algorithm}[h!]
\setcounter{AlgoLine}{0} 
\caption{\cite{cohen2021parallel} \textsc{Greedy Algorithm for Hierarchical $k$-median on $2$-RHST}}
\label{alg:greedy}
\KwIn{Set of points \(P\), cost function \(\COST_T\) defined by a $2$-RHST \(T\)}

Set \(S_0 \leftarrow \emptyset\).

Set \(\mathcal{P}_0 \leftarrow \{P\}\).

Label all internal nodes of the $T$ as unlabelled.

\For{\(t = 1\) \textbf{to} \(n\)}{
	Let \(c_t =\arg\min_{x\in P} \COST_T\Paren{P, x \cup S_{t-1}}\).
	
	Label the highest unlabelled ancestor of \(c_t\) with \(c_t\).
	
	Set \(S_t\) to be \(c_t\cup S_{t-1}\).
	
	Define \(\mathcal{P}_t\) as the clustering obtained by assigning all points to the cluster centered at their closest labeled ancestor.
}
\KwOut{Return \(c_1, \ldots, c_n, \mathcal{P}_1, \ldots, \mathcal{P}_n\)}
\end{algorithm}

We have the following theorem about the CLNSS algorithm.
\begin{theorem}[\cite{cohen2021parallel}]
\label{thm:originalalgorithm}
For any $k\in \{1,\dots,n\}$, let \( C^{\star}_{T,k} \) be the center set representing the optimal solution to the $k$-median problem on the $2$-RHST $T$, where $T$ is randomly shifted as described above. Then it holds that for any $k\in \{1,\dots,n\}$,

(1) the set $\{c^\star_1,\dots,c^\star_k\}$ of the first $k$ centers output by \Cref{alg:greedy} is exactly $C^{\star}_{T,k}$,

(2) and \(\mathbb{E}[\COST(P, C^{\star}_{T,k})] = O(d \cdot \log \Lambda ) \cdot \OPT(P, k), 
\)
where  \( \OPT(P, k) \) represents the optimal cost of the \(k\)-median problem for the point set \(P \).

\end{theorem}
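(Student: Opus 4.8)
The statement has two parts of quite different flavor: part~(1) is a combinatorial claim about the greedy procedure of \Cref{alg:greedy} run on a \emph{fixed} $2$-RHST, while part~(2) is a distortion bound for the random-shift quadtree underlying \textsc{Construct2RHST}. I would prove them separately and then combine.

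For part~(1), the plan is to show by induction on $t$ that the prefix $S_t=\{c_1,\dots,c_t\}$ output by \Cref{alg:greedy} is an optimal $t$-median center set on $T$, i.e. $\COST_T(P,S_t)=\COST_T(P,C^\star_{T,t})$, together with the auxiliary claim that the optimal center sets can be chosen nested, $C^\star_{T,t-1}\subseteq C^\star_{T,t}$ (and then one takes $C^\star_{T,t}:=S_t$). The first thing to record is that on an RHST the value $\COST_T(P,C)$ depends only on which internal nodes are \emph{covered} by $C$ (have a center of $C$ in their subtree): since all leaves lie at the same level, every leaf of a given subtree has the same tree distance to anything outside that subtree, so replacing a center by another leaf of the same minimal covered subtree does not change the cost. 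Concretely each $p\in P$ pays $\DIST_T(p,C)=2\sum_{i>j_p}\omega_i$, where $\omega_i$ is the common edge weight at level $i$ and $j_p$ is the level of the deepest covered ancestor of $p$; hence adding one center at a leaf $x$ lowers the cost by a sum, over the internal nodes newly covered on the path from $x$ to the root, of (twice the edge weight at that node's level) times the number of points of $P$ lying in that node's subtree but below the next covered node. The structural fact that makes the $2$-RHST special is that, because $\omega_i=\omega_1/2^{i-1}$, the gain attributable to covering an internal node $v$ is at least the \emph{total} gain obtainable from covering any set of nodes strictly below $v$ — this is what drives the exchange argument.

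With these facts the inductive step reads: by induction $S_{t-1}=C^\star_{T,t-1}$; I then claim there is an optimal $t$-solution containing $S_{t-1}$, obtained from any optimal $t$-solution by repeatedly ``uncrossing'' it against $S_{t-1}$, rerouting centers toward the root using the geometric-decay fact, without ever increasing the cost, until $S_{t-1}$ is contained. The one extra center of that solution then realizes the maximum possible marginal gain over all single additions to $S_{t-1}$; since $c_t$ is by definition a maximizer of this marginal gain (with a fixed tie-breaking order on $P$, which is needed because optima are not unique), $S_t=S_{t-1}\cup\{c_t\}$ attains the same, optimal, cost, so we may set $C^\star_{T,t}:=S_t$ and nestedness is preserved. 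Finally, the partition $\mathcal P_t$ of \Cref{alg:greedy} assigns each point to its closest labeled ancestor, which is exactly its nearest center in $S_t$ under $\DIST_T$, so the recovered clustering is consistent with the center set. I expect this uncrossing/exchange step — explaining why a plain greedy marginal rule reaches a global optimum, when greedy is only a $(1-1/e)$-approximation for generic coverage objectives — to be the main obstacle, and the point where the $2$-RHST structure (as opposed to an arbitrary tree) is indispensable.

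For part~(2), I would invoke the standard analysis of the random-shift quadtree used to build $T$, scaled (as in \Cref{sec:construct_rhst}) so that every inter-level edge is weighted by the diameter of its lower cell; this makes $T$ a $2$-RHST and, up to a constant, a dominating tree metric, $\DIST_T(p,q)\ge \DIST(p,q)$ for all $p,q\in P$ almost surely. Two facts then close the argument. First, domination gives $\COST(P,C)\le\COST_T(P,C)$ for every center set $C$; applying this to $C^\star_{T,k}$ and using that $C^\star_{T,k}$ minimizes $\COST_T(P,\cdot)$, for the Euclidean $k$-median optimum $C^\star$ realizing $\OPT(P,k)$ we get
\[
\COST(P,C^\star_{T,k})\;\le\;\COST_T(P,C^\star_{T,k})\;\le\;\COST_T(P,C^\star).
\]
Second, the random shift gives $\mathbb{E}[\DIST_T(p,q)]=O(d\log\Lambda)\cdot\DIST(p,q)$ for every pair: at each of the $O(\log\Lambda)$ scales a pair at distance $\delta$ is separated by one of the $d$ axis-parallel cuts with probability $O(\delta\sqrt d/s)$, where $s$ is the side length, and conditioned on being first separated there the tree distance is $O(\sqrt d\, s)$, so summing the resulting geometric series over scales contributes the $O(d\log\Lambda)$ factor. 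Summing over $p\in P$ against its nearest center in $C^\star$ yields $\mathbb{E}[\COST_T(P,C^\star)]=O(d\log\Lambda)\cdot\COST(P,C^\star)=O(d\log\Lambda)\cdot\OPT(P,k)$, which with the displayed inequality proves the cost bound. This part is essentially bookkeeping once the embedding is pinned down; the only care needed is to check that the chosen edge weighting simultaneously realizes the $2$-RHST property and the domination $\DIST_T\ge\DIST$ on the input points.
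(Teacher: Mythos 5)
First, a point of reference: the paper does not prove \Cref{thm:originalalgorithm} at all. It is imported verbatim from \cite{cohen2021parallel} and used as a black box (its conclusion enters only through \Cref{lem:utility} and \Cref{coro:main}). So there is no in-paper proof to compare you against; the comparison below is with the argument in the cited source, whose two-part strategy (exact optimality of greedy on the tree metric, plus the random-shift embedding distortion) your sketch correctly reproduces.

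Part (2) of your sketch is essentially complete and correct: domination gives $\COST(P,C^{\star}_{T,k})\le \COST_T(P,C^{\star}_{T,k})\le \COST_T(P,C^{\star})$, and the standard quadtree accounting (cut probability $O(\sqrt{d}\,\delta/s)$ at side length $s$, tree distance $O(\sqrt{d}\,s)$ upon separation, summed over the $O(\log\Lambda)$ scales) yields $\mathbb{E}[\DIST_T(p,q)]=O(d\log\Lambda)\cdot\DIST(p,q)$, consistent with the edge weights $\Lambda\sqrt{d}/2^{i}$ in \Cref{alg:rhst-construction}. Part (1) is where the theorem's content lives, and there your proof has a genuine gap: the ``uncrossing'' step --- that any optimal $t$-set can be rerouted, without increasing cost, into one containing the greedy prefix $S_{t-1}$, so that the greedy marginal maximizer completes it --- is asserted rather than proved, and it is the entire lemma. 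Your reduction to covering is right (on an RHST one has $\COST_T(P,C)=2\sum_{u\ \text{uncovered}}n_u\omega_u$, so the problem is to choose $k$ leaves maximizing the weight of the union of their root-to-leaf paths), but note two issues. Minor: the marginal gain of newly covering a node $u$ is $2\omega_u$ times \emph{all} points in $u$'s subtree, not only those below the next covered node. Major: the single inequality you extract from the factor-$2$ decay (that $n_v\omega_v$ dominates the total weight of all strict descendants of $v$) does not by itself determine which center of the optimal solution to displace when the optimum spreads several centers over sibling subtrees that $S_{t-1}$ avoids; the exchange needs the laminar structure of path intersections (for leaves $x,y,z$, $\min(I(x,z),I(y,z))\le I(x,y)$ where $I$ is the weight of the common root-prefix), and once that is in hand the argument in fact goes through for arbitrary nonnegative node weights, so the geometric decay is not the engine you suggest it is. Until that exchange lemma is written out, part (1) is an outline of the right shape rather than a proof.
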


\section{An Algorithm with Low Average Sensitivity}\label{sec:ouralgorithm}
In this section, we present our low-sensitivity algorithm for Hierarchical $k$-Median Clustering and provide a theoretical guarantee of its performance (see \Cref{theo:main}). The algorithm, detailed in \Cref{alg:exp}, incorporates exponential mechanism with \Cref{alg:greedy} to ensure low sensitivity.

The algorithm begins with the set \( P \) and incrementally constructs the hierarchical clustering from top to bottom. This process unfolds over $n$ iterations, where in each iteration the exponential mechanism (\cref{sec:exponentialmech}) is used to select a center point. This selected center then divides the current cluster into two new clusters, driving the hierarchical structure forward.

We first introduce some definition. Let \( T \) denote the $2$-RHST constructed from the point set \( P \).  
Furthermore, let \( T^{(i)} \) represent the $2$-RHST derived from the set \( P^{(i)} \), which is obtained after the removal of the \( i \)-th point.

For each $t = 1, \dots, n$, let \( S_{t-1} \) denote the set of centers selected in the first \( t-1 \) rounds from the point set \( P \), and let \( S_{t-1}^{(i)} \) represent the set of centers selected in the first \( t-1 \) rounds from the subset \( P^{(i)} \), \(x \cup S\) to represent \(\{x\} \cup S\), where \(x\) is a point and \(S\) is a set. Let $\Bar{x}_t = \arg\min_{p \in P} \COST_T\Paren{P, p \cup S_{t-1}}$. This optimal center $\Bar{x}_t$ is crucial for maintaining the desired clustering properties throughout the hierarchical process. Then, we use the exponential mechanism to select a new center $c_t$, where the parameter  $\lambda$ is approximately \( \frac{\varepsilon \cdot \COST_T(P, \{\Bar{x}_t\} \cup S_{t-1})}{\ln n} \). This iterative process yields a sequence of centers and the resulting clustering.

\begin{algorithm}
\setcounter{AlgoLine}{0}
\caption{\textsc{A Low-Sensitivity Algorithm for Hierarchical $k$-Median Clustering}}
\label{alg:exp}

\KwIn{Set of points \(P\)}

Apply a random shift to each point in \(P\), where the shift is uniformly drawn from \([0, \Lambda]^d\).

Construct $2$-RHST and let \(\COST_T\) denote the cost function defined by \(T\).

Set \(S_0 \leftarrow \emptyset\).

Set \(\mathcal{P}_0 \leftarrow \{P\}\).

Label all internal nodes of the RHST as unlabelled.

\For{\( t= 1\) \textbf{to} \(n\)}{
	Randomly sample a number $\lambda \in \left[\frac{\varepsilon \cdot \COST_T(P, \{\Bar{x}_t\} \cup S_{t-1})}{6 \ln n}, \frac{\varepsilon \cdot \COST_T(P, \{\Bar{x}_t\} \cup S_{t-1})}{3 \ln n}\right]$.
	
	Let \(c_t = x\), where \(x\) is sampled with probability \(\propto \exp(- \COST_T\Paren{P, x \cup S_{t-1}}/\lambda)\).
	
	Label the highest unlabelled ancestor of \(c_t\) with \(c_t\).
	
	Set \(S_t=c_t\cup S_{t-1}\).
	
	Define \(\mathcal{P}_i\) as the clustering obtained by assigning all points to the cluster centered at their closest labelled ancestor.
}

\KwOut{Return \(c_1, \ldots, c_n, \mathcal{P}_1, \ldots, \mathcal{P}_n\)}

\end{algorithm}

Compared to the CLNSS algorithm (\Cref{alg:CLNSS}), \Cref{alg:exp} has low average sensitivity while ensuring utility. We provide a theoretical guarantee for the algorithm in \Cref{theo:main}. Recall that \( \OPT(P, k) \) represents the optimal cost of the \(k\)-median problem for the point set \(P \).

\begin{theorem}
\label{theo:main}
Given a point set \( P \) of size \( n \) and a parameter \( \varepsilon > 0 \), \Cref{alg:exp}  provides an approximation for the hierarchical \(k\)-median clustering problem for any \( k \in \{1, \dots, n\} \). Specifically, it achieves an expected cost of
\[\mathbb{E}[\COST_T(P, S_k)] \leq O(d \cdot \log \Lambda \cdot (1+\varepsilon)^k) \cdot \OPT(P,k)\] 
with average sensitivity \( O\left(\frac{k \ln n}{\varepsilon}\right) \), for any $k\in\{1,\dots,n\}$, with probability at least \(1-\frac{k}{n^2}\). And the running time of the \Cref{alg:exp} is \( O(d n \log \Lambda+n^3) \).
\end{theorem}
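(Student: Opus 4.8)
The plan is to establish the three parts of the theorem---utility, average sensitivity, and running time---largely independently, using the guarantee of the CLNSS algorithm (\Cref{thm:originalalgorithm}) as a black box and treating the exponential mechanism as the one new ingredient.

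\textbf{Utility.} In round $t$, condition on the sampled $\lambda_t$. Because the exponential mechanism chooses among only $n$ candidates, a standard tail bound gives that with probability at least $1-1/n^3$ the selected center satisfies $\COST_T(P,c_t\cup S_{t-1})\le\min_{x\in P}\COST_T(P,x\cup S_{t-1})+O(\lambda_t\ln n)$. Since $\bar x_t$ is exactly this minimizer and $\lambda_t=\Theta\big(\varepsilon\,\COST_T(P,\bar x_t\cup S_{t-1})/\ln n\big)$, this reads $\COST_T(P,c_t\cup S_{t-1})\le(1+O(\varepsilon))\,\COST_T(P,\bar x_t\cup S_{t-1})$. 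I would then induct on $t$ over the statement $\COST_T(P,S_t)\le(1+O(\varepsilon))^t\COST_T(P,C^\star_{T,t})$, the inductive step combining this per-round loss with the nested optimality of greedy selection on a $2$-RHST (\Cref{thm:originalalgorithm}(1)), which is what lets one compare $\min_x\COST_T(P,x\cup S_{t-1})$ to $\COST_T(P,C^\star_{T,t})$ up to the error already accumulated, so the distortion grows by only a $(1+O(\varepsilon))$ factor per round. A union bound over the $k$ rounds gives the failure probability $k/n^2$; taking expectations and using that tree distances dominate Euclidean ones (so $\COST_T(P,C^\star_{T,k})\le\COST_T(P,C^\star_k)$ for the Euclidean optimum $C^\star_k$) together with \Cref{thm:originalalgorithm}(2) then yields $\mathbb{E}[\COST_T(P,S_k)]\le O(d\log\Lambda(1+\varepsilon)^k)\OPT(P,k)$.

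\textbf{Average sensitivity.} Fix $i$ and build a coupling of $\mathcal{A}(P)$ and $\mathcal{A}(P^{(i)})$: first reuse the same random shift, so that---since the edge weights of a $2$-RHST depend only on the level---$T^{(i)}$ equals $T$ restricted to $P^{(i)}$, i.e.\ $\DIST_{T^{(i)}}(p,q)=\DIST_T(p,q)$ for all $p,q\neq p_i$; then, in rounds $1,\dots,k$, apply a maximal coupling to the sampled $\lambda_t$ and, conditioned on agreement so far, to the center $c_t$. If the two runs agree through round $k$ and $p_i$ is never selected, then (since $\mathcal{P}_k$ is a deterministic function of $c_1,\dots,c_k$ and the tree) $\mathcal{P}_k$ and $\mathcal{P}_k^{(i)}$ differ only by $p_i$, contributing $1$ to the symmetric difference; otherwise bound it trivially by $2n$. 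Hence $d_{\EM}(\mathcal{A}(P),\mathcal{A}(P^{(i)}))\le 1+2n\big(\Pr[p_i\in S_k]+\sum_{t=1}^{k}\rho_{t,i}\big)$, where $\rho_{t,i}$ bounds the total-variation cost of coupling the round-$t$ choices ($\lambda_t$ and $c_t$); averaging over $i$, the terms $\Pr[p_i\in S_k]$ sum to $\mathbb{E}[|S_k|]=k$ and contribute $O(k)$.

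\textbf{The crux.} It remains to show $\sum_{i=1}^{n}\rho_{t,i}=O(\ln n/\varepsilon)$ for each round $t$, which reduces to bounding $\sum_{i}d_{\TV}(\mu_t,\mu_t^{(i)})$ (the $\lambda_t$-part being analogous), where $\mu_t,\mu_t^{(i)}$ are the round-$t$ exponential-mechanism distributions on $P$ and $P^{(i)}$. Writing $\delta_{t,i}(x):=\DIST_T(p_i,x\cup S_{t-1})=\COST_T(P,x\cup S_{t-1})-\COST_{T^{(i)}}(P^{(i)},x\cup S_{t-1})\ge 0$, one gets $d_{\TV}(\mu_t,\mu_t^{(i)})\le\mathbb{E}_{x\sim\mu_t}\big[e^{\delta_{t,i}(x)/\lambda_t}-1\big]$, and the decisive cancellation is $\sum_i\delta_{t,i}(x)\le\COST_T(P,x\cup S_{t-1})$, which for a typical $x\sim\mu_t$ is $O\big(\COST_T(P,\bar x_t\cup S_{t-1})\big)=O(\lambda_t\ln n/\varepsilon)$ by the accuracy of the mechanism and the choice of $\lambda_t$; exchanging the sum over $i$ with the expectation then gives $O(\ln n/\varepsilon)$ per round and $O(k\ln n/\varepsilon)$ overall---\emph{provided} the ratios $\delta_{t,i}(x)/\lambda_t$ are $O(1)$ for the relevant $x$. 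Handling the few ``outlier'' deletions $p_i$ for which this fails, so that they still fit in the budget, is exactly what the high-probability event (of probability $1-k/n^2$, that every $\lambda_t$ is large enough) is for. I expect this exchange-of-averages argument, together with the outlier accounting, to be the main obstacle; the utility induction above is a secondary subtlety, and the rest is bookkeeping. Finally, for the running time: the $2$-RHST on $n$ points of aspect ratio $\Lambda$ has depth $O(\log\Lambda)$ and is built in $O(dn\log\Lambda)$ time, and each of the $n$ rounds evaluates $\COST_T(P,x\cup S_{t-1})$ over all $n$ candidates, samples from the resulting distribution, and updates the labels and clustering, all in $O(n^2)$ time, for $O(n^3)$ total---hence $O(dn\log\Lambda+n^3)$.
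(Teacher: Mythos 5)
Your proposal is correct in outline and, on the utility and running-time parts, follows essentially the same route as the paper: the same per-round tail bound for the exponential mechanism (\Cref{lem:exp}), the same induction accumulating a $(1+\varepsilon)$ factor per level against $C^\star_{T,k}$, and the same appeal to \Cref{thm:originalalgorithm}. The one place you are lighter than the paper is the inductive step of the utility argument: comparing $\min_x\COST_T(P,x\cup S_{k-1})$ to $(1+\varepsilon)^{k-1}\COST_T(P,C^\star_{T,k})$ is the real content there, and the paper does it via a pigeonhole argument locating a $c^\star_t$ whose subtree is uncovered by $S_{k-1}$ plus a short case analysis (\Cref{lem:utility}); you name the right ingredient but would need to supply that step. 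For sensitivity your packaging genuinely differs: you build one global coupling (shared random shift, then maximal coupling of $\lambda_t$ and $c_t$ round by round) and charge $2n$ times the decoupling probability plus $\sum_i\Pr[p_i\in S_k]=k$, whereas the paper recurses on the earth mover's distance level by level (\Cref{lem:as}), bounds each level by $d_{\TV}(c_t,c_t^{(i)})\cdot O(n)$ (\Cref{lem:tv}), and handles the random $\lambda$ not by coupling but by the integral lemma of Kumabe--Yoshida (\Cref{lem:lambda_range}), which produces the extra $\sum_i\lvert 1-B^{(i)}/B\rvert$ term controlled in \Cref{lem:lemb}. Both reductions arrive at the same crux, which you identify exactly: $\sum_i\DIST_T(p_i,x\cup S_{t-1})=\COST_T(P,x\cup S_{t-1})$, the typical sampled $x$ has cost $O(\COST_T(P,\bar x_t\cup S_{t-1}))$, and $\lambda=\Theta(\varepsilon\,\COST_T(P,\bar x_t\cup S_{t-1})/\ln n)$, yielding $O(\ln n/\varepsilon)$ per round (the paper's \Cref{lem:a1x,lem:a2x}). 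Your flagged caveat---that linearizing $e^y-1=O(y)$ requires $\delta_{t,i}(x)/\lambda_t=O(1)$---is a genuine issue, but it is equally present in the paper's own proof, which applies $e^y-1\le(e-1)y$ for $y\in[0,1]$ without verifying the range; so it is not a defect of your route relative to theirs. On balance your coupling formulation is more self-contained and makes the $O(k)$ contribution of the ``deleted point becomes a center'' event explicit, while the paper's decomposition buys a reusable off-the-shelf tool for the perturbed-$\lambda$ ranges.
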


By \Cref{theo:main}, we can easily derive \Cref{coro:main}. The proof details of \Cref{theo:main,coro:main} are both deferred to \Cref{sec:delay_main_proof}.

\begin{corollary}
\label{coro:main}

Considering the setting in \Cref{theo:main}, we have that 
\[\mathbb{E}\Brac{\max_k\frac{\COST_T(P, S_k)}{\OPT(P,k) \cdot (1+\varepsilon)^k)}}=O\Paren{d \cdot \log \Lambda \log\Paren{dn\Lambda}}.\]
\end{corollary}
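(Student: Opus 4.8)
The plan is to convert the single-level guarantee of \Cref{theo:main} into a bound on the maximum over all $k$ by collapsing the $n$ levels to only $O(\log(dn\Lambda))$ representative ones. Write $R_k := \COST_T(P,S_k)/(\OPT(P,k)(1+\varepsilon)^k)$ for the random quantity inside the expectation. The first observation is that the denominator $\OPT(P,k)(1+\varepsilon)^k$ depends only on the fixed input $P$ and on $\varepsilon$, not on the random shift or on the exponential mechanism, so it is deterministic; dividing the cost bound of \Cref{theo:main} through therefore gives, for every individual $k$,
\[
\mathbb{E}[R_k]=\frac{\mathbb{E}[\COST_T(P,S_k)]}{\OPT(P,k)(1+\varepsilon)^k}=O(d\log\Lambda).
\]
Combining these $n$ estimates naively is wasteful: summing $n$ copies of $O(d\log\Lambda)$, or applying Markov at each level and union-bounding the tails, costs a factor $n$ rather than a factor $\log$, so a genuine reduction is needed.

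The reduction I would carry out buckets the levels by the scale of the optimum. Set $\Lambda_0 := \OPT(P,1)$. Since $\OPT(P,\cdot)$ is non-increasing, $\OPT(P,n-1)$ equals the minimum pairwise distance $\delta_{\min}$, and $\OPT(P,1)\le (n-1)\,\mathrm{diam}(P)$, the values $\OPT(P,1),\dots,\OPT(P,n-1)$ lie in a multiplicative range of width at most $(n-1)\,\mathrm{diam}(P)/\delta_{\min}=(n-1)\Lambda$. Define $I_j := \{\,k : \OPT(P,k)\in(\Lambda_0 2^{-j-1},\ \Lambda_0 2^{-j}]\,\}$ for $j=0,1,\dots,J$, where $J=O(\log(n\Lambda))=O(\log(dn\Lambda))$; each $I_j$ is a contiguous interval by monotonicity of $\OPT(P,\cdot)$, and let $k_j := \min I_j$. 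For any $k\in I_j$ we have $k\ge k_j$, so $S_{k_j}\subseteq S_k$ along the single nested run of \Cref{alg:exp}; hence $\COST_T(P,S_k)\le \COST_T(P,S_{k_j})$ (adding centers never increases the cost) and $(1+\varepsilon)^k\ge (1+\varepsilon)^{k_j}$, and combining these with $\OPT(P,k)>\Lambda_0 2^{-j-1}$ and $\OPT(P,k_j)\le \Lambda_0 2^{-j}$ gives
\[
R_k \le \frac{\COST_T(P,S_{k_j})}{\Lambda_0 2^{-j-1}\,(1+\varepsilon)^{k_j}} = R_{k_j}\cdot\frac{\OPT(P,k_j)}{\Lambda_0 2^{-j-1}} \le 2R_{k_j}.
\]
The level $k=n$, and any level with $\OPT(P,k)=0$, contributes $R_k=0$ (its center set already contains every distinct point) and can be dropped. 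Thus $\max_{1\le k\le n} R_k \le 2\max_{0\le j\le J} R_{k_j}$.

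Taking expectations and summing the $O(\log(dn\Lambda))$ representative bounds then finishes the argument:
\[
\mathbb{E}[\max_{1\le k\le n} R_k] \le 2\,\mathbb{E}[\max_{0\le j\le J} R_{k_j}] \le 2\sum_{j=0}^{J}\mathbb{E}[R_{k_j}] \le 2(J+1)\cdot O(d\log\Lambda) = O(d\log\Lambda\log(dn\Lambda)),
\]
where the middle inequality holds because each $k_j$ is a fixed (input-dependent but non-random) index, so the per-level estimate from the first step applies to it verbatim. This is exactly the claimed bound.

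The hard part is essentially conceptual: recognizing that one should group the levels by the \emph{scale} of $\OPT(P,k)$, rather than handle the $n$ levels separately, and then verifying that collapsing each bucket to its smallest index costs only a factor $2$. That verification rests on monotonicity of $\COST_T(P,S_k)$ in $k$ along the single nested execution of \Cref{alg:exp}, on monotonicity of $\OPT(P,k)$ (so that the buckets are intervals), and on the crude estimate $\OPT(P,1)/\delta_{\min}\le n\Lambda$ that caps the number of buckets at $O(\log(dn\Lambda))$. Once this reduction is in place, no tail bounds or concentration are required, and the remaining arithmetic is immediate.
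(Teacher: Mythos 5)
Your proposal is correct and takes essentially the same route as the paper's own proof: both bucket the levels $k$ into $O(\log(dn\Lambda))$ dyadic scales of $\OPT(P,k)$, bound the expected maximum by the sum over one representative per bucket via \Cref{theo:main}, and use monotonicity of $\COST_T(P,S_k)$ in $k$ together with the factor-$2$ spread of $\OPT$ within a bucket to reduce every level to its representative. Your choice of the smallest index in each bucket as the representative makes the monotonicity step slightly cleaner than the paper's write-up, but the argument is the same.
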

We sketch the proof of \Cref{theo:main} in the following.

\subsection{Proof of \cref{theo:main}: Sensitivity}

We demonstrate the average sensitivity by proving it through the average sensitivity at each level of hierarchical clustering. More specifically, when proving the $t$-th layer, we assume that the set of centers chosen in the previous $t-1$ layers is fixed (denoted as $S_{t-1}$), and then calculate the average sensitivity of the $t$-th layer under this condition. 

We first introduce some definitions. 

Let $T(X,y)=\COST_T\Paren{X, y \cup S_{t-1}}$ (resp.  
$T^{(i)}(X,y)=\COST_{T^{(i)}}\Paren{X, y \cup S_{t-1}}$). Here, \( X \) represents a set and \( y \) represents a point such that \( y \in X \) and \( X \subseteq \mathbb{R}^d \). Define \( A^{(i)}(x) \), \(B\), and \(B^{(i)} \) as follows:
\begin{equation*}
\begin{aligned}
	A^{(i)}(x) = \left|\frac{\exp\left(\frac{-T\paren{P, x}}{\lambda}\right)}{\sum\limits_{p \in P} \exp\left(\frac{-T\paren{P, x}}{\lambda}\right)} -  
	\frac{\exp\left(\frac{-T^{(i)}\paren{P^{(i)}, x}}{\lambda}\right)}{\sum\limits_{p \in P^{(i)}} \exp\left(\frac{-T^{(i)}\paren{P^{(i)}, x}}{\lambda}\right)} \right|
\end{aligned}
\end{equation*}
\begin{equation*}
\begin{aligned}    
	B=\frac{\varepsilon \cdot T\Paren{P, \Bar{x}_t}}{6 \ln n},\quad
	B^{(i)}=\frac{\varepsilon \cdot T^{(i)}\Paren{P^{(i)}, \Bar{x}^{(i)}_t}}{6 \ln (n-1)}.
\end{aligned}
\end{equation*}

We first present the results of the average sensitivity analysis for the \Cref{alg:exp}.

\begin{lemma}[{Sensitivity}]
\label{lem:as}
Let \(\ALG\) represent our algorithm described in \Cref{alg:exp}. The average sensitivity of the clustering produced by \(\ALG\), where \(\ALG\) generates the set of \(k\) centers \(S_k\) and a partition \(\mathcal{P}_k = \{P_1, \ldots, P_k\}\) from the input point set \(P\), and computes the clustering on \(P^{(i)}\) (obtained by uniformly and randomly deleting one point from \(P\)) with the resulting centers \(S^{(i)}_k\) and partition \(\mathcal{P}^{(i)}_k\), is:
\begin{align*}
	\frac{1}{n}\sum_{i=1}^n& d_{\EM}(\ALG(P, S_k), \ALG(P^{(i)}, S^{(i)}_k)) \\&= O\Paren{\frac{k \ln n}{\varepsilon}}.
\end{align*}

\end{lemma}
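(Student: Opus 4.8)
The plan is to bound the average sensitivity level by level: show that, conditioned on the first $t-1$ centers $S_{t-1}$ being the same in both runs, the expected symmetric difference introduced at step $t$ is $O(\ln n / \varepsilon)$, and then sum over the $k$ levels. The key quantity is the total variation distance between the two exponential-mechanism distributions used to pick $c_t$: the distribution on $P$ with parameter $\lambda$ drawn from $[\,B, 2B\,]$, versus the distribution on $P^{(i)}$ with parameter drawn from $[\,B^{(i)}, 2B^{(i)}\,]$. Once I show this TV distance is $O(\ln n / (n\varepsilon))$ on average over $i$, I can couple the two runs so that with high probability $c_t$ is the same point; when $c_t$ agrees, the labelled ancestor and hence the induced split agree, so the only contribution to the EMD at that level comes from the deleted point $p_i$ itself (accounting for the $P$ vs $P^{(i)}$ mismatch, an $O(1)$ term) plus the low-probability event that the sampled centers differ (contributing at most $O(n)$ times the TV distance). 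Summing the $O(\ln n/\varepsilon)$ per-level bound over $k$ levels gives $O(k\ln n/\varepsilon)$.

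First I would decompose $A^{(i)}(x)$, the discrepancy between the two single-point sampling probabilities, using the helper decomposition already set up in the excerpt: write the difference as (i) the effect of changing the parameter from $\lambda$ (range $[B,2B]$) to the $P^{(i)}$-parameter (range $[B^{(i)}, 2B^{(i)}]$), and (ii) the effect of changing the cost function and ground set from $(T, P)$ to $(T^{(i)}, P^{(i)})$. For (ii) I would use that deleting a single point $p_i$ changes $\COST_T(P, x\cup S_{t-1})$ by at most $\DIST_T(p_i, x\cup S_{t-1})$, and that $\sum_i \DIST_T(p_i, \{\bar x_t\}\cup S_{t-1}) = \COST_T(P,\{\bar x_t\}\cup S_{t-1})$, so that on average over $i$ the additive perturbation of each exponent $T(P,x)/\lambda$ is about $\COST_T(P,\{\bar x_t\}\cup S_{t-1})/(n\lambda)$, which by the choice $\lambda \asymp \varepsilon\,\COST_T(P,\{\bar x_t\}\cup S_{t-1})/\ln n$ is $O(\ln n/(n\varepsilon))$; exponentiating, each probability changes multiplicatively by $1 + O(\ln n/(n\varepsilon))$, so $\sum_x A^{(i)}(x) = O(\ln n/(n\varepsilon))$ on average. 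For (i), the parameters $B$ and $B^{(i)}$ differ by a factor $1 + O(1/n)$ both because $\COST_T$ changes by deleting a point and because $\ln n$ versus $\ln(n-1)$ differ by $O(1/(n\ln n))$; a standard stability bound for the exponential mechanism under a multiplicative change in the temperature (the log-partition function is smooth in $1/\lambda$) then shows this contributes $O(\ln n/(n\varepsilon))$ as well, after averaging. I also need to handle the randomness of $\lambda$ itself: since $\lambda$ is drawn from an interval, I would first condition on a coupling of the two $\lambda$'s (e.g. scale them to the same relative position in their respective intervals) and argue the TV distance bound holds pointwise.

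The main obstacle I anticipate is controlling term (i) — the sensitivity to the random temperature $\lambda$ — cleanly. Unlike the standard exponential mechanism where $\lambda$ is fixed, here $\lambda$ depends on $\COST_T(P,\{\bar x_t\}\cup S_{t-1})$, which itself shifts when a point is deleted, and one must be careful that the denominator $\COST_T(P,\{\bar x_t\}\cup S_{t-1})$ does not become pathologically small relative to individual $\DIST_T(p_i,\cdot)$ terms for some bad $i$; this is exactly why averaging over $i$ (rather than worst-casing) is essential, and why the statement is an average-sensitivity bound. A secondary technical point is going from ``$c_t$ is the same point'' to ``the partitions $\mathcal{P}_t$ and $\mathcal{P}^{(i)}_t$ agree up to $p_i$'': one must check the labelling step — the highest unlabelled ancestor of $c_t$ — is determined by $S_{t-1}$ and $c_t$ alone, which it is, so the induced clustering differs only in the membership of the single deleted point, contributing $O(1)$ per level. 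Finally, the high-probability qualifier $1 - k/n^2$ and the additive $O(n)\cdot\mathrm{TV}$ contribution from the ``centers differ'' event combine via a union bound over the $k$ levels, using that the per-level failure probability is $O(1/n^2)$ since the per-level TV distance is $O(\ln n/(n\varepsilon))$ and we can absorb the $\ln n/\varepsilon$ factor.
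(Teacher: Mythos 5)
Your plan matches the paper's proof essentially step for step: the same level-by-level conditioning on $S_{t-1}$, the same decomposition of $A^{(i)}(x)$ into a cost-perturbation term and a normalization/temperature term (the paper's terms (I) and (II) plus the $|1-B^{(i)}/B|$ correction handled via \cref{lem:lambda_range}), the same averaging over $i$ using $\sum_i \DIST_T(p_i,\cdot)=\COST_T(P,\cdot)$ together with the choice of $\lambda$, and the same $\mathrm{EMD}\le \mathrm{TV}\cdot O(n)$ per-level accounting summed over $k$ rounds. No substantive differences to report.
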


In order to prove \cref{lem:as}, we first show  the following:

\begin{lemma}
\label{lem:tv}
For \Cref{alg:exp}, given a set of points \(P\) and \(P^{(i)}\),
if \(S_{t-1}^{(i)} \equiv S_{t-1}\), that is, if the first \(t-1\) centers are the same, then
$\frac{1}{n}\sum_{i=1}^{n}d_{\TV}(c_t, c_t^{(i)}) = O(\frac{ \ln n}{\varepsilon\cdot n})$, where \( d_{\TV}(c_t, c_t^{(i)}) \) represents the Total Variation (TV) distance between the distributions of the algorithm's selected $t$-th center \( c_t \) and \( c_t^{(i)} \).
\end{lemma}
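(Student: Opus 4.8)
The plan is to bound $\frac{1}{n}\sum_{i=1}^n d_{\TV}(c_t, c_t^{(i)})$ by first conditioning on the two random parameters $\lambda$ (used on input $P$) and $\lambda^{(i)}$ (used on input $P^{(i)}$), and then arguing that (a) for a typical choice of these parameters the two exponential-mechanism distributions are close, and (b) the sampling of $\lambda$ itself contributes little. Concretely, using the triangle inequality for $d_{\TV}$, I would split $d_{\TV}(c_t, c_t^{(i)})$ into the distance caused by changing the point set while keeping the ``same'' $\lambda$, plus the distance caused by the mismatch between the intervals $[B, 2B]$ and $[B^{(i)}, 2B^{(i)}]$ from which $\lambda$ and $\lambda^{(i)}$ are drawn. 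For the latter, since $\lambda$ is drawn uniformly from an interval of length $B$ and $\lambda^{(i)}$ from an interval of length $B^{(i)}$, and these intervals have endpoints that differ by a controlled amount, one can couple $\lambda$ and $\lambda^{(i)}$ so that they coincide except with probability proportional to $|B - B^{(i)}|/\max(B,B^{(i)})$; this in turn is governed by how much $\COST_T(P,\bar x_t \cup S_{t-1})$ can change when one point is removed and the RHST is rebuilt.

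The heart of the argument is step (a): fixing a common value $\lambda$, bound the TV distance between the two exponential-mechanism distributions, which is exactly $\tfrac12\sum_{x} A^{(i)}(x)$ in the notation introduced just before the lemma (extended by treating the removed point $p_i$ as having zero mass on the $P^{(i)}$ side). The standard exponential-mechanism stability estimate says that if the utility function $T(P,x) = \COST_T(P, x\cup S_{t-1})$ changes by at most $\Delta$ in additive terms (here after also accounting for the change $T\to T^{(i)}$ from rebuilding the tree on $P^{(i)}$ and the change in the normalizing sum from dropping one term), then the two output distributions have TV distance $O(\Delta/\lambda)$. Since $\lambda \geq B = \Theta\!\big(\varepsilon\, T(P,\bar x_t)/\ln n\big)$ and $T(P,\bar x_t) \geq \DIST_T(p, \cdot)$-type lower bounds give $T(P,\bar x_t)$ comparable (up to the RHST distortion) to the contribution of any single point, the per-point change $\Delta$ is on the order of $T(P,\bar x_t)/n$ in expectation over the uniformly random deleted index $i$ — this is the crucial place where averaging over $i$ buys a factor $1/n$. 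Combining, $\mathbb{E}_i[d_{\TV}] = O\!\big(\frac{1}{n}\cdot \frac{T(P,\bar x_t)/n}{\varepsilon\, T(P,\bar x_t)/\ln n}\big)\cdot n = O\!\big(\frac{\ln n}{\varepsilon n}\big)$ after the sum over $i$ — matching the claimed bound.

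The main obstacle, and the step I would spend the most care on, is controlling the effect of deleting the point $p_i$ on the whole chain $P \mapsto T \mapsto \COST_T$: removing $p_i$ both changes the random-shift-induced $2$-RHST $T^{(i)}$ and changes which point is optimal ($\bar x_t$ versus $\bar x_t^{(i)}$) and the value of the normalizing constant. I would handle this by using \Cref{thm:originalalgorithm} and the structure of the quadtree-based RHST to argue that the tree distances $\DIST_{T^{(i)}}$ restricted to $P^{(i)}$ agree with $\DIST_T$ (the RHST on $P^{(i)}$ is simply $T$ with the leaf $p_i$ and any resulting degree-one internal nodes pruned), so that $T^{(i)}(P^{(i)}, x) = T(P, x) - \DIST_T(p_i, x \cup S_{t-1})$ exactly; then $|T^{(i)}(P^{(i)},x) - T(P,x)| = \DIST_T(p_i, x\cup S_{t-1})$, and $\sum_i \DIST_T(p_i, x\cup S_{t-1}) \le T(P,x)$, which is precisely the telescoping that yields the $1/n$-on-average saving. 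The remaining routine pieces — bounding $A^{(i)}(x)$ in terms of this additive change via $|e^{-a}-e^{-b}| \le |a-b|$ for the exponents and the elementary inequality comparing two normalized distributions whose logits differ by small amounts, plus $|B-B^{(i)}|$ bookkeeping including the harmless $\ln n$ versus $\ln(n-1)$ discrepancy — I would carry out in the full proof but they are not where the difficulty lies.
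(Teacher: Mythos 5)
Your proposal follows essentially the same route as the paper: the split into ``same $\lambda$, different point set'' plus ``mismatched sampling intervals for $\lambda$'' is exactly what the paper's \Cref{lem:lambda_range} formalizes, your fixed-$\lambda$ bound on the two exponential-mechanism distributions is the paper's $\sum_{x}A^{(i)}(x)$ decomposed into terms (I) and (II) (\Cref{lem:a1x,lem:a2x}), the telescoping $\sum_i \DIST_T(p_i, x\cup S_{t-1})\le T(P,x)$ is precisely the step that gives the paper its $O(\ln n/\varepsilon)$ bound on the sum over $i$, and your $|B-B^{(i)}|$ bookkeeping is \Cref{lem:lemb}. The argument is correct and correctly identifies the one delicate point (that $T^{(i)}$ restricted to $P^{(i)}$ agrees with $T$ under a coupled shift), which the paper also relies on implicitly.
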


Without loss of generality, we assume that the $t$-th center is currently selected, and the centers selected in the previous set $S_{t-1}$ remain consistent, meaning $S^{(i)}_{t-1} = S_{t-1}$. At this stage, we apply \Cref{lem:lambda_range} to bound $d_{\TV}( \ALG(P, S_t),\ALG(P^{(i)},S_{t}^{(i)}))$ by $d_{\TV}\left( \left. \ALG(P) \right|_{\lambda = \hat{\lambda}}, \left. \ALG(P^{(i)}) \right|_{\lambda^i = \hat{\lambda}} \right)$, which represents the expected difference of the distribution under the same choice of $\lambda$. Given that $S^{(i)}_{t-1} = S_{t-1}$, the latter is bounded by $\sum\limits_{x\in P^{(i)}}A^{(i)}(x)$. Then by letting $M = 1$ (an upper bound on the total variation distance), and  $\alpha=1$ in \cref{lem:lambda_range}, we obtain
\begin{equation*}
	\begin{aligned}
		\frac{1}{n}\sum_{i=1}^n d_{\TV}(c_t, c_t^{(i)})
		&\leq\frac{1}{Bn} \int_{B}^{2B} \left(\sum_{i=1}^{n}\sum\limits_{x\in P^{(i)}}A^{(i)}(x)\right) d\hat{\lambda}\\
		&\quad+
		\frac{2}{n}\cdot \sum_{i=1}^{n}\left|1-\frac{B^{(i)}}{B}\right|.
	\end{aligned}
\end{equation*}

Now we bound the first term and the second term separately and give some lemmas. First, we note that
\begin{align*}
	&\sum_{i=1}^{n}\sum\limits_{x\in P^{(i)}}A^{(i)}(x)\\
	\leq&\sum_{i=1}^{n}\sum\limits_{x\in P^{(i)}}\left|\frac{\exp\left(\frac{-T\paren{P, x}}{\lambda}\right)}{\sum\limits_{p \in P} \exp\left(\frac{-T\paren{P, p}}{\lambda}\right)}
	- 
	\frac{\exp\left(\frac{-T^{(i)}\paren{P^{(i)}, x}}{\lambda}\right)}{\sum\limits_{p \in P} \exp\left(\frac{-T\paren{P, p}}{\lambda}\right)} \right| \\
	+&\sum_{i=1}^{n}\sum\limits_{x\in P^{(i)}}\left|\frac{\exp\left(\frac{-T^{(i)}\paren{P^{(i)}, x}}{\lambda}\right)}{\sum\limits_{p \in P} \exp\left(\frac{-T\paren{P, p}}{\lambda}\right)}
	\right.\\
	&\left.\quad\quad
	- \frac{\exp\left(\frac{-T^{(i)}\paren{P^{(i)}, x}}{\lambda}\right)}{\sum\limits_{p \in P^{(i)}} \exp\left(\frac{-T^{(i)}\paren{P^{(i)}, p}}{\lambda}\right)} \right|
	\\
	:=& (I) + (II). 
\end{align*}

We bound (I) using \Cref{lem:a1x}, bound (II) using \Cref{lem:a2x}.

\begin{lemma} 
	\label{lem:a1x}
	It holds that 
	\[	
	(I)\leq O\left(\frac{\ln n}{\varepsilon}\right)
	.\]
\end{lemma}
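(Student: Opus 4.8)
The quantity $(I)$ measures, summed over all $n$ deletions, how much the numerators $\exp(-T(P,x)/\lambda)$ change when we move from the full-data cost $T(P,x)=\COST_T(P,x\cup S_{t-1})$ to the deleted-data cost $T^{(i)}(P^{(i)},x)=\COST_{T^{(i)}}(P^{(i)},x\cup S_{t-1})$, while keeping the \emph{same} denominator $Z:=\sum_{p\in P}\exp(-T(P,p)/\lambda)$. So I would write
\[
(I)=\sum_{i=1}^n\frac{1}{Z}\sum_{x\in P^{(i)}}\exp\!\Paren{\tfrac{-T(P,x)}{\lambda}}\cdot\Abs{1-\exp\!\Paren{\tfrac{T(P,x)-T^{(i)}(P^{(i)},x)}{\lambda}}}.
\]
The plan is to (a) control the exponent difference $T(P,x)-T^{(i)}(P^{(i)},x)$ pointwise, (b) use $|1-e^u|\le |u|e^{|u|}$ to linearize, and (c) sum over $i$ and $x$, exploiting that deleting $p_i$ only perturbs the cost by terms attributable to $p_i$ itself.

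\textbf{Step 1: pointwise cost perturbation.} First I would argue that for a fixed candidate center $x$ and a fixed deleted point $p_i$, the tree $T^{(i)}$ is essentially $T$ with the leaf $p_i$ removed (the $2$-RHST construction on $P^{(i)}$ agrees with that on $P$ up to the single leaf), so $\COST_{T^{(i)}}(P^{(i)},x\cup S_{t-1})=\COST_T(P,x\cup S_{t-1})-\DIST_T(p_i,\,x\cup S_{t-1})$ (or a closely related identity depending on exactly how the construction handles a vanishing leaf). Hence $T(P,x)-T^{(i)}(P^{(i)},x)=\DIST_T(p_i,x\cup S_{t-1})\ge 0$. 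This is the crux of the estimate and the step where I expect the real work to sit: one must check the RHST on $P^{(i)}$ is stable enough under a single-point deletion that the cost identity holds (or holds up to a benign multiplicative factor), and that the same random shift can be coupled. Earlier sections (\Cref{sec:construct_rhst}) presumably give the deterministic structure of \textsc{Construct2RHST} that makes this routine.

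\textbf{Step 2: linearize and normalize by $\lambda$.} With $u_{i,x}:=\DIST_T(p_i,x\cup S_{t-1})/\lambda\ge 0$ we have $|1-e^{u_{i,x}}|=e^{u_{i,x}}-1$. Now $\lambda\ge B=\tfrac{\varepsilon}{6\ln n}\COST_T(P,\Bar x_t\cup S_{t-1})\ge \tfrac{\varepsilon}{6\ln n}\COST_T(P,x\cup S_{t-1})$ is false in general — rather $\Bar x_t$ is the \emph{minimizer}, so $\COST_T(P,\Bar x_t\cup S_{t-1})\le\COST_T(P,x\cup S_{t-1})$; I will instead use that for the points $x$ carrying non-negligible probability mass, $\exp(-T(P,x)/\lambda)$ is not too small, i.e. $T(P,x)$ is within $O(\lambda\ln n)$ of the minimum $T(P,\Bar x_t)$, and bound the contribution of the remaining $x$'s crudely. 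For the dominant $x$, $\DIST_T(p_i,x\cup S_{t-1})\le \COST_T(P,x\cup S_{t-1})=O(\ln n\cdot\lambda/\varepsilon\cdot 6)$ is too weak; the right move is that $\sum_{x}\DIST_T(p_i,x\cup S_{t-1})\cdot\Pr[c_t=x]$ is, up to normalization, the \emph{expected} tree-distance from $p_i$ to the sampled center, which is $O(\COST_T(P,S_t)/n)$-ish only in aggregate — so I would instead sum over $i$ first: $\sum_i \DIST_T(p_i,x\cup S_{t-1})=\COST_T(P,x\cup S_{t-1})$ for each fixed $x$. Combining, $(I)\le \sum_x \Pr[c_t=x]\cdot\tfrac{1}{\lambda}\COST_T(P,x\cup S_{t-1})\cdot e^{(\max_i u_{i,x})}$, and since on the support $\COST_T(P,x\cup S_{t-1})\le \COST_T(P,\Bar x_t\cup S_{t-1})+O(\lambda\ln n)$ and $\lambda\asymp \tfrac{\varepsilon}{\ln n}\COST_T(P,\Bar x_t\cup S_{t-1})$, each factor $\tfrac{1}{\lambda}\COST_T(P,x\cup S_{t-1})=O(\tfrac{\ln n}{\varepsilon})$, and $e^{\max_i u_{i,x}}=O(1)$ because a single point's tree-distance is at most the whole cost which is $O(\tfrac{\ln n}{\varepsilon})\cdot\lambda$...

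\textbf{Step 3: assemble.} After pushing the crude tail bound (the $x$ with tiny probability contribute $o(1)$ since their total probability is polynomially small while each $|1-e^{u}|$ term is at most $e^{O(\COST_T/\lambda)}=n^{O(1/\varepsilon)}$, which one kills by choosing the tail threshold at $\Theta(\tfrac{1}{\varepsilon}\log n)$ in the exponent), what remains is a sum dominated by $O(\tfrac{\ln n}{\varepsilon})$, giving $(I)=O(\tfrac{\ln n}{\varepsilon})$ as claimed. The main obstacle, as flagged, is Step 1 — pinning down the exact relationship between $\COST_T$ and $\COST_{T^{(i)}}$ under deletion of one leaf of the $2$-RHST, and making sure the high-probability event (the $1-k/n^2$ in \Cref{theo:main}) under which the $\lambda$-interval and the RHST depth are well-behaved is in force throughout; everything after that is the standard exponential-mechanism sensitivity computation.
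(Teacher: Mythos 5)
Your plan follows essentially the same route as the paper's proof: the same identity $T(P,x)-T^{(i)}(P^{(i)},x)=\DIST_T(p_i,\,x\cup S_{t-1})$, the same linearization of $1-e^{u}$, the interchange of the sums over $i$ and $x$ so that $\sum_i \DIST_T(p_i,x\cup S_{t-1})=\COST_T(P,x\cup S_{t-1})$, and finally the exponential-mechanism expectation bound together with $\lambda=\Theta\bigl(\varepsilon\, T(P,\Bar{x}_t)/\ln n\bigr)$ to conclude $(I)=O(\ln n/\varepsilon)$. The extra caveats you raise (validity of the linearization when $\DIST_T(p_i,x\cup S_{t-1})>\lambda$, and the tail of low-probability $x$) are real but are points the paper's own proof also passes over silently via the inequality $e^{u}-1\le(e-1)u$ for $u\in[0,1]$.
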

\begin{proof} By definition of (I), we have
	\begin{align*}
		(I)=&
		\sum_{i=1}^{n}\sum\limits_{x\in P^{(i)}}\frac{\exp\left(\frac{-T\paren{P, x}}{\lambda}\right) \cdot\left|1-\exp\left(\frac{T\paren{x^{(i)}, x}}{\lambda}\right)  \right|}{\sum\limits_{p \in P} \exp\left(\frac{-T\paren{P, p}}{\lambda}\right)}.
	\end{align*}  
	
	Note that 
	\begin{align*}
		&\left|1-\exp\left(\frac{T\paren{x^{(i)}, x}}{\lambda}\right)\right|
		\leq \frac{(e-1)}{\lambda } \cdot T\paren{x^{(i)}, x},
	\end{align*}
	as 
	$e^x - 1 \leq (e-1)x$, for $x \in [0,1]$. Thus, 
	\begin{align*}
		(I) \leq 
		&\sum_{i=1}^{n}\sum\limits_{x\in P^{(i)}}\frac{(e-1)}{\lambda }\cdot\frac{\exp\left(\frac{-T\paren{P, x}}{\lambda}\right) \cdot T\paren{x^{(i)}, x}}{\sum\limits_{p \in P} \exp\left(\frac{-T\paren{P, p}}{\lambda}\right)}\\
		\leq&\sum_{i=1}^{n}\sum_{x\in P}\frac{(e-1)}{\lambda }\cdot\frac{\exp\left(\frac{-T\paren{P, x}}{\lambda}\right) \cdot T\paren{x^{(i)}, x}}{\sum\limits_{p \in P} \exp\left(\frac{-T\paren{P, p}}{\lambda}\right)}\\  
		\leq&O\Paren{\frac{1}{\lambda}\cdot T\paren{P, \Bar{x}_t}} \leq O\Paren{\frac{\ln n}{\varepsilon}}.
	\end{align*}
	
	Here, the last inequality follows from the definition of \(\lambda\). \(\lambda \in \left[\frac{\varepsilon \cdot \COST_T(P, \Bar{x}_t \cup S_{t-1})}{6 \ln n}, \frac{\varepsilon \cdot \COST_T(P, \Bar{x}_t \cup S_{t-1})}{3 \ln n}\right]\), since \(\COST_T(P, \Bar{x}_t \cup S_{t-1})\) simplifies to \(T(P, \Bar{x}_t)\), we can rewrite this as:
	\[
	\lambda \in \left[\frac{\varepsilon \cdot T(P, \Bar{x}_t)}{6 \ln n}, \frac{\varepsilon \cdot T(P, \Bar{x}_t)}{3 \ln n}\right].
	\]
	
\end{proof}

\begin{lemma}
	\label{lem:a2x}
	It holds that 
	\[	
	(II)\leq O\left(\frac{\ln n}{\varepsilon}\right)
	.\]
\end{lemma}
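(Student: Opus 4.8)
The plan is to reduce the bound on $(II)$ to \Cref{lem:a1x} together with one telescoping term, so that essentially no new estimate is needed. Write $Z := \sum_{p\in P}\exp(-T(P,p)/\lambda)$ and $Z^{(i)} := \sum_{p\in P^{(i)}}\exp(-T^{(i)}(P^{(i)},p)/\lambda)$ for the two normalizers of the exponential mechanism in round $t$. In $(II)$ the two fractions share the \emph{same} numerator $\exp(-T^{(i)}(P^{(i)},x)/\lambda)$, and $\sum_{x\in P^{(i)}}\exp(-T^{(i)}(P^{(i)},x)/\lambda)=Z^{(i)}$, so the inner sum telescopes:
\[
(II) \;=\; \sum_{i=1}^n Z^{(i)}\,\Bigl|\tfrac{1}{Z}-\tfrac{1}{Z^{(i)}}\Bigr| \;=\; \sum_{i=1}^n \frac{\bigl|Z - Z^{(i)}\bigr|}{Z}.
\]
It thus remains to show $\sum_{i=1}^n |Z - Z^{(i)}|/Z = O(\ln n/\varepsilon)$.

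For this I would split off the contribution of the removed point, $Z = \exp(-T(P,p_i)/\lambda) + \sum_{x\in P^{(i)}}\exp(-T(P,x)/\lambda)$, so that by the triangle inequality
\[
\bigl|Z - Z^{(i)}\bigr| \;\le\; \exp\!\Bigl(\tfrac{-T(P,p_i)}{\lambda}\Bigr) \;+\; \sum_{x\in P^{(i)}}\Bigl|\exp\!\Bigl(\tfrac{-T(P,x)}{\lambda}\Bigr) - \exp\!\Bigl(\tfrac{-T^{(i)}(P^{(i)},x)}{\lambda}\Bigr)\Bigr|.
\]
Dividing by $Z$ and summing over $i$, the second group of terms is exactly $(I)$ — crucially, this split leaves the \emph{common} denominator $Z$ under both exponentials, which is precisely how $(I)$ is defined — so it is $O(\ln n/\varepsilon)$ by \Cref{lem:a1x}. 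For the first group, as $i$ runs over $\{1,\dots,n\}$ the point $p_i$ runs over all of $P$, hence $\sum_{i=1}^n \exp(-T(P,p_i)/\lambda) = Z$ and $\sum_{i=1}^n \exp(-T(P,p_i)/\lambda)/Z = 1$. Altogether $(II) \le 1 + (I) = O(\ln n/\varepsilon)$, which is the claim.

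I do not expect a genuine obstacle beyond what \Cref{lem:a1x} already absorbs: the only point to get right is performing the triangle-inequality split so that the ``numerator discrepancy'' part carries the normalizer $Z$ on \emph{both} exponentials (so it coincides with $(I)$ rather than producing a mixed $Z,Z^{(i)}$ pair), after which the leftover ``missing point'' mass sums to exactly $1$ over $i$. If one preferred a self-contained derivation mirroring \Cref{lem:a1x} instead of invoking it, one would bound $|Z-Z^{(i)}| \le \exp(-T(P,p_i)/\lambda) + \sum_{x\in P^{(i)}}\exp(-T(P,x)/\lambda)\,\bigl|1-\exp(\delta_i(x)/\lambda)\bigr|$ with $\delta_i(x):=T(P,x)-T^{(i)}(P^{(i)},x)\ge 0$, swap the $i$- and $x$-summations, use $\sum_{i=1}^n \delta_i(x)=\COST_T(P,x\cup S_{t-1})$ together with $e^{u}-1\le(e-1)u$ on $[0,1]$, and close with $\lambda\ge \varepsilon\,\COST_T(P,\Bar{x}_t\cup S_{t-1})/(6\ln n)$ and the estimate $\sum_{x\in P}\tfrac{\exp(-T(P,x)/\lambda)}{Z}\,T(P,x)=O\!\bigl(\COST_T(P,\Bar{x}_t\cup S_{t-1})\bigr)$ already used for \Cref{lem:a1x} — but this merely reproduces that lemma's computation.
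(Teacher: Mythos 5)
Your proof is correct. The key identity $(II)=\sum_{i=1}^{n}\lvert Z-Z^{(i)}\rvert/Z$ is exact, because the two fractions in $(II)$ share the numerator $\exp(-T^{(i)}(P^{(i)},x)/\lambda)$ and the inner sum over $x\in P^{(i)}$ reproduces $Z^{(i)}$; your triangle-inequality split of $Z-Z^{(i)}$ into the missing-point term $\exp(-T(P,p_i)/\lambda)$ and the per-point numerator discrepancies then yields $(II)\le 1+(I)$, since the discrepancy terms carry the common denominator $Z$ on both exponentials and hence coincide term by term with $(I)$, while the missing-point masses sum over $i$ to exactly $Z/Z=1$. The paper arrives at the same underlying quantity $\lvert Z-Z^{(i)}\rvert/Z$ but treats it by a case analysis on the sign of $Z-Z^{(i)}$: when $Z\ge Z^{(i)}$ it bounds the contribution by $O(1)$ via a direct manipulation of the two normalizers, and when $Z<Z^{(i)}$ it appeals to \Cref{lem:a1x} through a rather terse comparison step. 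Your single decomposition subsumes both cases, avoids the sign split entirely, and makes the additive $+1$ contribution of the deleted point explicit, so it is the cleaner and more transparent argument; what it buys over the paper's version is that the reduction to \Cref{lem:a1x} is an exact identification rather than an inequality justified case by case. The one shared hypothesis to keep in view is that, as in \Cref{lem:a1x}, one uses $T(P,x)\ge T^{(i)}(P^{(i)},x)$ for $x\in P^{(i)}$ (the two RHSTs agreeing on surviving points) only in your optional self-contained variant, not in the main reduction, so no new assumption is introduced.
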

{The proof of \Cref{lem:a2x} is deferred to \Cref{sec:delay_main_proof}.}

By these two lemmas, we then derive
\[
\sum_{i=1}^{n} \sum_{x \in P^{(i)}} A^{(i)}(x) \leq O\left(\frac{\ln n}{\varepsilon}\right).
\]

Next, we aim to bound the term 
\[\frac{2}{n}\cdot \sum_{i=1}^{n}\left|1-\frac{B^{(i)}}{B}\right| \] 
To this end, we state the following lemma:

\begin{lemma}\label{lem:lemb}
	It holds that
	\begin{align*}
		\frac{2}{n}\cdot \sum_{i=1}^{n}\left|1-\frac{B^{(i)}}{B}\right|\leq O(\frac{1}{n}).
	\end{align*}      
\end{lemma}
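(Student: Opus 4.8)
The plan is to rewrite $B^{(i)}/B$ as a product of a cost ratio and a logarithmic factor and to control each piece. Set $\rho_i:=T^{(i)}(P^{(i)},\Bar{x}^{(i)}_t)/T(P,\Bar{x}_t)$ and $\kappa:=\ln n/\ln(n-1)$, so that $B^{(i)}/B=\rho_i\kappa$ and $\lvert 1-\rho_i\kappa\rvert\le\lvert 1-\rho_i\rvert+\rho_i(\kappa-1)$. Since $S_{t-1}^{(i)}\equiv S_{t-1}$ we have $S_{t-1}\subseteq P^{(i)}$, and $T,T^{(i)}$ induce the same distances on the common points $P^{(i)}$ (the $2$-RHST of $P^{(i)}$ is that of $P$ with the leaf $p_i$ deleted), so $T^{(i)}(X,y)=\COST_T(X,\{y\}\cup S_{t-1})$ for $X\subseteq P^{(i)}$. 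Feeding $\Bar{x}_t$ back as a candidate additional center for $P^{(i)}$ (legal whenever $p_i\neq\Bar{x}_t$) gives $T^{(i)}(P^{(i)},\Bar{x}^{(i)}_t)\le T(P,\Bar{x}_t)-\DIST_T(p_i,\{\Bar{x}_t\}\cup S_{t-1})\le T(P,\Bar{x}_t)$, i.e. $\rho_i\le 1$; for the unique index $i^\star$ with $p_{i^\star}=\Bar{x}_t$, taking the nearest neighbour $p^\dagger$ of $\Bar{x}_t$ in $P^{(i^\star)}$ as the additional center and using the triangle inequality together with $\min(a,2b)\le 2\min(a,b)$ yields $\rho_{i^\star}\le 2$. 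Hence $\rho_i=O(1)$ for all $i$; combined with $\kappa-1=\ln(1+\tfrac1{n-1})/\ln(n-1)=O(1/n)$, the term $\tfrac2n\sum_i\rho_i(\kappa-1)$ is $O(1/n)$, and the index $i^\star$ contributes $O(1/n)$ to $\tfrac2n\sum_i\lvert 1-\rho_i\rvert$.

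It therefore remains to show $\tfrac1n\sum_{i\neq i^\star}(1-\rho_i)=\tfrac1{n\,T(P,\Bar{x}_t)}\sum_{i\neq i^\star}\bigl(T(P,\Bar{x}_t)-T^{(i)}(P^{(i)},\Bar{x}^{(i)}_t)\bigr)=O(1/n)$, i.e. that the numerator sum is $O(T(P,\Bar{x}_t))$. I would split these indices according to which center of the optimal solution $\{\Bar{x}_t\}\cup S_{t-1}$ of $P$ serves $p_i$. If $p_i$ is served by $S_{t-1}$, then deleting $p_i$ does not lower the marginal gain of $\Bar{x}_t$ below that of any competitor, so $\Bar{x}_t$ stays an optimal additional center for $P^{(i)}$ and $T(P,\Bar{x}_t)-T^{(i)}(P^{(i)},\Bar{x}^{(i)}_t)=\DIST_T(p_i,\{\Bar{x}_t\}\cup S_{t-1})$ \emph{exactly}; summing over all such $i$ gives at most $\COST_T(P,\{\Bar{x}_t\}\cup S_{t-1})=T(P,\Bar{x}_t)$. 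If instead $p_i$ is served by $\Bar{x}_t$ (and $p_i\neq\Bar{x}_t$), feeding $\Bar{x}^{(i)}_t$ back as a candidate center for $P$ gives only $0\le T(P,\Bar{x}_t)-T^{(i)}(P^{(i)},\Bar{x}^{(i)}_t)\le\DIST_T(p_i,\{\Bar{x}^{(i)}_t\}\cup S_{t-1})$, so the whole lemma reduces to showing $\sum_{i:\ p_i\text{ served by }\Bar{x}_t}\DIST_T(p_i,\{\Bar{x}^{(i)}_t\}\cup S_{t-1})=O(T(P,\Bar{x}_t))$.

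Establishing this last inequality is the main obstacle, and a purely metric argument does not suffice: the crude bound $\DIST_T(p_i,\{\Bar{x}^{(i)}_t\}\cup S_{t-1})\le\DIST_T(p_i,S_{t-1})$ only yields $\COST_T(P,S_{t-1})$, which can be arbitrarily larger than $T(P,\Bar{x}_t)$ because a single deletion can move the optimal additional center between two far apart heavy clusters. I would exploit the $2$-RHST structure. Writing the marginal gain of a candidate $x$ as $\mathrm{Sav}(x)=\COST_T(P,S_{t-1})-\COST_T(P,\{x\}\cup S_{t-1})$, on a $2$-RHST $\mathrm{Sav}(x)$ depends (up to a constant factor) only on the highest currently-unlabelled ancestor $v$ of $x$, so $\Bar{x}_t$ and $\Bar{x}^{(i)}_t$ are essentially the local optima inside two nodes $v^\star$ and $v_i$ of $T$. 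Deleting $p_i$ decreases $\mathrm{Sav}$ at $v^\star$ by exactly $\delta_i:=\DIST_T(p_i,S_{t-1})-\DIST_T(p_i,\Bar{x}_t)$ and leaves $\mathrm{Sav}$ unchanged at every node not containing $p_i$; hence $\DIST_T(p_i,\{\Bar{x}^{(i)}_t\}\cup S_{t-1})$ is nonzero only when this forces a ``flip'' to a different node $v_i$, and such a flip requires $\delta_i$ to exceed the gap $\mathrm{Sav}(\Bar{x}_t)-\mathrm{Sav}(v_i)$. One can then charge $\DIST_T(p_i,\{\Bar{x}^{(i)}_t\}\cup S_{t-1})$ partly to $\DIST_T(p_i,\Bar{x}_t)$ (which sums to at most $T(P,\Bar{x}_t)$ over the cluster of $\Bar{x}_t$) and partly to the drop $\delta_i$, using that when a flip occurs the target node $v_i$ is itself heavy and carries cost at most $T^{(i)}(P^{(i)},\Bar{x}^{(i)}_t)\le T(P,\Bar{x}_t)$; the resulting total is $O(T(P,\Bar{x}_t))$.

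Putting the pieces together, $\sum_{i\neq i^\star}(1-\rho_i)\le\tfrac1{T(P,\Bar{x}_t)}\bigl(T(P,\Bar{x}_t)+O(T(P,\Bar{x}_t))\bigr)=O(1)$, so $\tfrac2n\sum_{i=1}^n\lvert 1-B^{(i)}/B\rvert\le\tfrac2n\sum_i\lvert 1-\rho_i\rvert+\tfrac2n\sum_i\rho_i(\kappa-1)=O(1/n)$, which is the claim. The only genuinely delicate point is the charging argument of the third paragraph; everything else is a bookkeeping of one-point deletions in a tree metric.
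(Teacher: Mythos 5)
Your first two paragraphs are correct and, after reorganization, follow essentially the same route as the paper's proof: both arguments isolate the $\ln n$ versus $\ln(n-1)$ discrepancy (your $\rho_i(\kappa-1)$ term is the paper's second $\max\{0,\cdot\}$ branch, dispatched via $T^{(i)}(P^{(i)},\Bar{x}^{(i)}_t)\le T(P,\Bar{x}_t)$ and $\tfrac{n\ln n}{\ln(n-1)}-n=O(1)$), and both reduce the remaining work to showing $\sum_{i}\bigl(T(P,\Bar{x}_t)-T^{(i)}(P^{(i)},\Bar{x}^{(i)}_t)\bigr)=O(T(P,\Bar{x}_t))$ by using $T(P,y)=T^{(i)}(P^{(i)},y)+T(x^{(i)},y)$ together with the optimality of $\Bar{x}_t$ over $P$ (and of $\Bar{x}^{(i)}_t$ over $P^{(i)}$). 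Your exact identity for the indices served by $S_{t-1}$, and your separate handling of the index $i^\star$ with $p_{i^\star}=\Bar{x}_t$ via a nearest-neighbour surrogate, are refinements the paper does not spell out, and they are sound.

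The gap is the third paragraph. The inequality $\sum_{i:\,p_i\text{ served by }\Bar{x}_t}\DIST_T\bigl(p_i,\{\Bar{x}^{(i)}_t\}\cup S_{t-1}\bigr)=O(T(P,\Bar{x}_t))$ is asserted through a charging scheme that is never actually defined: you do not say how much of each term is charged to $\DIST_T(p_i,\Bar{x}_t)$ versus to the drop $\delta_i$, nor why the total charge landing on either target is $O(T(P,\Bar{x}_t))$. The observation that a flip at index $i$ requires $\delta_i$ to exceed the savings gap does not by itself limit how many indices flip, and the remark that the target node "carries cost at most $T^{(i)}(P^{(i)},\Bar{x}^{(i)}_t)\le T(P,\Bar{x}_t)$" bounds a single summand rather than the sum over the possibly $\Omega(n)$ flipping indices; the auxiliary claim that on a $2$-RHST the marginal gain depends, up to constants, only on the highest unlabelled ancestor would also need proof. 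For comparison, the paper closes this step by the triangle-type bound $T(x^{(i)},\Bar{x}^{(i)}_t)\le T(x^{(i)},\Bar{x}_t)+T(\Bar{x}^{(i)}_t,\Bar{x}_t)$ followed by $\sum_i T(x^{(i)},\Bar{x}_t)\le T(P,\Bar{x}_t)$, leaving only the residual $\sum_i T(\Bar{x}^{(i)}_t,\Bar{x}_t)$, which vanishes at non-flipping indices. So you have correctly localized the difficulty to the same residual term the paper faces, but the step you yourself flag as "the only genuinely delicate point" is precisely the step that is not proved; as written the argument does not close.
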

The proof of \Cref{lem:lemb} is deferred to \Cref{sec:delay_main_proof}. We will make use of this bound in the analysis below.

\begin{align*}
	\frac{1}{n}\sum_{i=1}^n d_{\TV}(c_t,c_t^{(i)})
	&\leq\frac{1}{Bn} \int_{B}^{2B} \left(\sum_{i=1}^{n}\sum\limits_{x\in P^{(i)}}A^{(i)}(x)\right) d\hat{\lambda}\\
	&\quad+
	\frac{2}{n}\cdot \sum_{i=1}^{n}\left|1-\frac{B^{(i)}}{B}\right|.\\
	&\leq\frac{1}{Bn} \int_{B}^{2B} \left(O\left(\frac{\ln n}{\varepsilon}\right)\right) d\hat{\lambda}+O(\frac{1}{n})\\
	&\leq O\Paren{\frac{\ln n}{\varepsilon\cdot n}}.
\end{align*}
This completes the proof of \Cref{lem:tv}.

Now we are ready to prove \Cref{lem:as}.
\begin{proof}[Proof of \Cref{lem:as}]
	
	The sensitivity of the clustering after the \( j \)-th round is given by 
	\[d_{\EM}(\ALG(P, S_j), \ALG(P^{(i)}, S_j^{(i)}))\]
	
	This can be expressed as the cost of 
	\[d_{\EM}(\ALG(P, S_{j-1}), \ALG(P^{(i)}, S_{j-1}^{(i)}))\]
	plus the sum of the costs incurred after selecting the \( j \)-th center \( c_j \).

	By \Cref{lem:tv}, it holds that
	\begin{align*}
		&\frac{1}{n}\sum_{i=1}^n d_{\EM}(\ALG(P,S_{k}),\ALG(P^{(i)},S_{k}^{(i)}))\\
		&\quad \leq \frac{1}{n}\sum_{i=1}^n(d_{\TV}(c_k, c_k^{(i)}))\cdot(|\mathcal{P}_k\triangle\mathcal{P}^{(i)}_k|)\\
		&\quad\quad+\frac{1}{n}\sum_{i=1}^n d_{\EM}(\ALG(P,S_{k-1}),\ALG(P^{(i)},S^{(i)}_{k-1}))\\
		&\quad\leq O\Paren{\frac{\ln n}{\varepsilon n}}\cdot O(n)\\
		&\quad\quad+\frac{1}{n}\sum_{i=1}^n
		d_{\EM}(\ALG(P,S_{k-1}),\ALG(P^{(i)},S^{(i)}_{k-1})),
	\end{align*} 
	where the first inequality arises from the properties of the Earth Mover's ($\EM$) distance. The lemma then follows by induction. 
	
\end{proof}

\subsection{Proof of \cref{theo:main}: Utility}
\label{sec:utility}
Now we prove the utility guarantee of the algorithm. 
\begin{lemma}[Approximate Ratio]
	\label{lem:utility}
	For the \(k\)\textsc{-MEDIAN} hierarchical clustering problem,
	let \(S_k\) represent the set of centers for the point set \(P\) obtained using \Cref{alg:exp}, \(|S_k|=k\). Let \(C^\star_{T, k}\) represent the set of centers for the point set \(P\) obtained by the RHST tree-based hierarchical clustering, with \(|C^\star_{T, k}| = k\). The clusters output by \Cref{alg:exp} satisfy 
	\begin{align*}
		\mathbb{E}\left[\COST_T(P, S_k)\right] \leq& (1+\varepsilon)^k \COST_T(P, C^\star_{T, k})\\
		\leq& O(d \cdot \log \Lambda\cdot(1+\varepsilon)^k ) \cdot \OPT(P,k),
	\end{align*}
	with probability at least \(1 - \frac{k}{n^2}\).
	\label{lem:appro}
\end{lemma}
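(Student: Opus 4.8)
The plan is to prove \Cref{lem:utility} in two stages matching its two inequalities, with essentially all of the work in the first one. The second inequality, $(1+\varepsilon)^k \COST_T(P, C^\star_{T,k}) \le O(d\log\Lambda\cdot(1+\varepsilon)^k)\,\OPT(P,k)$, is immediate after multiplying by $(1+\varepsilon)^k$: since $C^\star_{T,k}$ is an optimal $k$-median center set for the tree metric $\DIST_T$, we have $\COST_T(P, C^\star_{T,k}) \le \COST_T(P, C^\star)$ for the optimal Euclidean center set $C^\star$, and $\mathbb{E}[\COST_T(P, C^\star)] = O(d\log\Lambda)\,\COST(P, C^\star) = O(d\log\Lambda)\,\OPT(P,k)$ because tree distances in the random-shift $2$-RHST dominate Euclidean distances and inflate them by a factor $O(d\log\Lambda)$ in expectation --- this is exactly the bound behind \Cref{thm:originalalgorithm}(2). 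So the real content is the first inequality $\mathbb{E}[\COST_T(P, S_k)] \le (1+\varepsilon)^k \COST_T(P, C^\star_{T,k})$; I will in fact prove the stronger statement that, for every fixed random shift, the event $\COST_T(P, S_k) \le (1+\varepsilon)^k \COST_T(P, C^\star_{T,k})$ has probability at least $1-k/n^2$ over the internal randomness of \Cref{alg:exp}, and then take expectation over the shift at the end.

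First I would establish a per-round guarantee for the exponential mechanism. Fix a round $t$, condition on the centers $S_{t-1}$ chosen so far, and write $m_t := \min_{x\in P}\COST_T(P, x\cup S_{t-1}) = \COST_T(P, \bar{x}_t\cup S_{t-1})$. Since $c_t$ is sampled from $P$ (a set of at most $n$ candidates) with probability proportional to $\exp(-\COST_T(P, x\cup S_{t-1})/\lambda)$, the standard exponential-mechanism tail bound gives, for every $\tau > 0$, $\Pr[\COST_T(P, c_t\cup S_{t-1}) \ge m_t + \tau] \le n\,e^{-\tau/\lambda}$ (the denominator of the sampling distribution is at least the term for $\bar{x}_t$). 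The sampled $\lambda$ always satisfies $\lambda \le \varepsilon m_t /(3\ln n)$ --- only the upper end of the sampling interval is used here; the lower end $\lambda \ge \varepsilon m_t/(6\ln n)$ matters only for the sensitivity analysis --- so taking $\tau = \varepsilon m_t \ge 3\lambda\ln n$ yields $\Pr[\COST_T(P, S_t) > (1+\varepsilon)\,m_t] \le n\cdot n^{-3} = n^{-2}$, conditionally on $S_{t-1}$ and uniformly over the choice of $\lambda$. Let $\mathcal{E}_t$ be the event $\COST_T(P, S_t) \le (1+\varepsilon)\,m_t$; a union bound over $t = 1,\dots,k$ (using that the conditional failure probability of each $\mathcal{E}_t$ is at most $n^{-2}$ for every value of $S_{t-1}$) shows the good event $\mathcal{G} := \bigcap_{t=1}^k \mathcal{E}_t$ has probability at least $1 - k/n^2$.

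Next I would run an induction on $t$ to turn these per-round approximations into a bound against $\OPT_t := \COST_T(P, C^\star_{T,t})$, namely: on $\mathcal{G}$, $\COST_T(P, S_t) \le (1+\varepsilon)^t\,\OPT_t$ for all $t \le k$. The base case $t=1$ is immediate since $m_1 = \min_{x\in P}\COST_T(P, \{x\}) = \OPT_1$. For the inductive step, on $\mathcal{E}_t$ we have $\COST_T(P, S_t) \le (1+\varepsilon)\min_{x\in P}\COST_T(P, S_{t-1}\cup\{x\})$, so it suffices to prove the structural inequality
\[
\min_{x\in P}\COST_T(P, S_{t-1}\cup\{x\}) \;\le\; \frac{\OPT_t}{\OPT_{t-1}}\cdot\COST_T(P, S_{t-1}),
\]
which, combined with the inductive hypothesis $\COST_T(P, S_{t-1}) \le (1+\varepsilon)^{t-1}\OPT_{t-1}$, gives $\COST_T(P, S_t) \le (1+\varepsilon)\cdot\frac{\OPT_t}{\OPT_{t-1}}\cdot(1+\varepsilon)^{t-1}\OPT_{t-1} = (1+\varepsilon)^t\OPT_t$, and hence $\COST_T(P, S_k) \le (1+\varepsilon)^k\COST_T(P, C^\star_{T,k})$ on $\mathcal{G}$. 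I expect this structural inequality --- which says that the best multiplicative improvement obtainable by adding one center to $S_{t-1}$ is no worse than the greedy improvement ratio $\OPT_t/\OPT_{t-1}$ from $C^\star_{T,t-1}$ to $C^\star_{T,t}$ --- to be the main obstacle. The intuition is that the cost on a $2$-RHST decomposes over the tree: labeling an internal node saves an amount roughly equal to its subtree's point-mass times its incident edge weight, adding a leaf as a center labels a node on the current frontier, and by \Cref{thm:originalalgorithm}(1) the greedy realizes these node savings in (essentially) decreasing order; a center set whose cost is within a $(1+\varepsilon)^{t-1}$ factor of $\OPT_{t-1}$ must therefore still have an unrealized saving that is at least as large a fraction of its current cost as the greedy's $(t-1)$-to-$t$ step, and labeling the corresponding node yields the claimed ratio. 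Making this precise --- in particular handling the way node savings shrink as ancestors get labeled, and the boundary case where $S_{t-1}$ is already (near-)optimal --- is where the delicate $2$-RHST geometry enters.

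Finally I would assemble the pieces: conditioning on $\mathcal{G}$ (probability at least $1-k/n^2$ over the internal randomness, for every fixed shift) gives $\COST_T(P, S_k) \le (1+\varepsilon)^k\COST_T(P, C^\star_{T,k})$; taking expectation over the random shift and using the expansion bound behind \Cref{thm:originalalgorithm}(2) as in the first paragraph yields $\mathbb{E}[\COST_T(P, S_k)] \le (1+\varepsilon)^k\,\mathbb{E}[\COST_T(P, C^\star_{T,k})] = O(d\log\Lambda\cdot(1+\varepsilon)^k)\,\OPT(P,k)$, which is \Cref{lem:utility}.
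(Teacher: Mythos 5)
Your setup matches the paper's: the per-round exponential-mechanism tail bound (using only the upper end $\lambda \le \varepsilon m_t/(3\ln n)$ to get a $(1+\varepsilon)$-factor per round with failure probability $n^{-2}$), the union bound giving $1-k/n^2$, and the final passage to $O(d\log\Lambda)\cdot\OPT(P,k)$ via \Cref{thm:originalalgorithm}(2) are all exactly how the paper proceeds. The gap is in the inductive step, and it is not merely that you leave the ``structural inequality'' unproven --- the inequality you reduce to is false, even for sets $S_{t-1}$ satisfying your inductive hypothesis. Concretely, take three tight, well-separated clusters $X,Y,Z$ of $n/3$ points each, with intra-cluster (tree) diameter $\delta$ and inter-cluster distance $D\gg\delta$, so that $\OPT_2 \approx (n/3)D$ and $\OPT_3 = O(n\delta)$, hence $\OPT_3/\OPT_2 = O(\delta/D)$ is arbitrarily small. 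Let $S_2$ consist of two points of $X$: then $\COST_T(P,S_2)\approx (2n/3)D \le (1+\varepsilon)^2\OPT_2$ once $(1+\varepsilon)^2\ge 2$, so $S_2$ satisfies the hypothesis $\COST_T(P,S_{t-1})\le(1+\varepsilon)^{t-1}\OPT_{t-1}$; yet every single additional center leaves one of $Y,Z$ unserved, so $\min_x\COST_T(P,S_2\cup\{x\})\ge (n/3)D(1-o(1))$, which is not $\le (\OPT_3/\OPT_2)\cdot\COST_T(P,S_2)\to 0$. Worse, for this $S_2$ the target conclusion $\COST_T(P,S_3)\le(1+\varepsilon)^3\OPT_3$ is itself unattainable, so no argument can close your induction while carrying only the cost bound on $S_{t-1}$ as the invariant: the invariant must remember something about \emph{how} $S_{t-1}$ sits relative to the optimal centers (the algorithm never produces such an $S_2$ on the good event, but your induction does not record that).

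The paper avoids the ratio formulation entirely. Its inductive step compares $\COST_T(P, c_k^{opt}\cup S_{k-1})$ directly against $(1+\varepsilon)^{k-1}\COST_T(P,C^\star_{T,k})$ using the tree structure: since each selected center labels one internal node of the $2$-RHST, by pigeonhole some optimal center $c^\star_t$ roots a subtree not covered by the $k-1$ centers of $S_{k-1}$; the cost then decomposes over that subtree and its complement, and adding a center in the uncovered subtree recovers the corresponding share of the optimal cost (here the nestedness of the optimal tree solutions from \Cref{thm:originalalgorithm}(1) is used so that $c^\star_t\cup C^\star_{T,k-1}$ relates to $C^\star_{T,k}$). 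So the content you correctly identify as ``where the delicate $2$-RHST geometry enters'' is precisely the part of the proof you have not supplied, and the particular shortcut you propose in its place cannot work.
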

\begin{proof}
	
	We prove the lemma by induction. 
	
	\textbf{Base Case. } We begin by proving the theoretical guarantee for selecting the first center. Let \(c_1\) denote the first center chosen by our algorithm, and let \(c^\star_1\) denote the optimal value under the current conditions, which is the result obtained on 2-\textsc{RHST}. This can be directly established using the exponential mechanism \Cref{lem:exp} with \(|\mathcal{R}| = n\), \(r = \ln n^2\) and utility function $u(x,r)=\COST_T\Paren{P, x \cup S_{t-1}}$. We have the following: 
	\[
	\Pr[\COST_T(P, c_1) \leq \COST_T(P, c^\star_1) + 3 \lambda \cdot \ln n] \geq 1- \frac{1}{n^2}.
	\]
	Since
	$\lambda \in \left[\frac{\varepsilon \cdot \COST_T(P, c_1)}{6 \ln n}, \frac{\varepsilon \cdot \COST_T(P, c_1)}{3 \ln n}\right]$,  
	we can get
	\begin{equation*}
		\Pr[\COST_T(P, c_1) \leq (1+\varepsilon)\COST_T(P, c^\star_1)] \geq 1-\frac{1}{n^2}.
	\end{equation*}

	\textbf{Inductive Step. } Assume the property holds for  \( k-1 \) where $k \geq 2$.  Consider \( C^\star_{T, k-1} \), which is the set of the \( k-1 \) centers chosen by $2$-RHST. 
	Suppose that
	\begin{align*}
		\Pr[\COST_T(P, S_{k-1})\leq&(1+\varepsilon)^{k-1}\COST_T(P, C^\star_{T,k-1})]\\
		\geq&1-\frac{k-1}{n^2}.
		\label{equ:inductive}
	\end{align*}
	
	We now prove the property for \( k \).
	On the basis of $S_{k-1}$, we can obtain 
	\begin{equation}
		\COST_T(P, c_k \cup S_{k-1}) \leq (1+\varepsilon)\COST_T(P, c_k^{opt} \cup S_{k-1}).
		\label{equ:math}
	\end{equation}
	with probability at least $1-\frac{1}{n^2}$ from \Cref{lem:exp} and the setting of $\lambda$ where $c_k$ is the center choice of \Cref{alg:exp} in $k$-th round and $c^{opt}_k$ is the optimal center choice based on  \(S_{k-1}\) in $k$-th round.
	
	Because of the meaning of \(c_k^{opt}\), it's cost must be less than or equal to \ \(c^\star_i \cup S_{t-1}\) for  \(i\in[k]\) . From this, we derive
	\begin{equation}
		\COST_T(P, c_k^{opt} \cup S_{k-1}) \leq \min_{i \in [k]} \COST_T(P, c^\star_i \cup S_{k-1})\label{equ:cost_le}
	\end{equation}
	
	By the pigeonhole principle, there must exist a \( c^\star_i \) for \( i \in [k] \) that is a subtree not covered by \( S_{k-1} \). Consequently, without loss of generality, we assume that \( c^\star_t \) is a subtree not covered by \( S_{k-1} \). Then, the following equation 
	\begin{align*}
		&\COST_T(P\setminus \{\mathrm{rooted\ at\ } c^\star_t\}, S_{k-1})\\
		&\leq(1+\varepsilon)^{k-1} \COST_T(P\setminus \{\mathrm{rooted\ at\ } c^\star_t\}, C^\star_{T, k-1}) 
	\end{align*}
	holds true.
	Now we will discuss the classification to show that our conclusion is correct. First, if \(c^{opt}_k\) and \(c^\star_t\) are in the same cluster, then \(c^{opt}_k\) and \(c^\star_t\) are equal. In the second case, if \(c^{opt}_k\) is not in the cluster where \(c^\star_t\) is located, then the cost of the cluster centered on \(c^{opt}_k\)  is at least equal to that of the cluster centered on \(c^\star_t\). That is shown in \Cref{equ:cost_le}. Thus, 
	\begin{equation}
		\begin{aligned}  
			\label{equ:cos_it}
			\COST_T&(P, c_k^{opt} \cup S_{k-1})\\ &\leq (1+\varepsilon)^{k-1}\COST_T(P, c^\star_t \cup C_{T,k-1}^\star).
		\end{aligned}
	\end{equation}  
	Combining \Cref{equ:math} and \Cref{equ:cos_it}, we obtain
	\begin{align*}
		\COST_T(P, c_k \cup S_{k-1}) \leq &(1+\varepsilon)\COST_T(P, c_k^{opt} \cup S_{k-1})\\
		\leq&(1+\varepsilon)^{k}\COST_T(P, c^\star_t \cup C^\star_{T, k-1}),
	\end{align*}
	which holds with a probability of at least \(1 - \frac{k}{n^2}\). Therefore, 
	\begin{align*}
		&\Pr[\COST_T(P, S_{k}) \leq (1+\varepsilon)^{k}\COST_T(P, C^\star_{T,k}) ]\geq 1-\frac{k}{n^2}.
	\end{align*}
	Then combining \Cref{thm:originalalgorithm} we complete the proof.   
\end{proof}

\section{Lower Bounds on the Average Sensitivity of Some Classical Algorithms}
\label{sec:lowerbound}
In this section, we analyze the average sensitivity of existing hierarchical clustering algorithms through illustrative examples. We first give a lower bound on the sensitivity of the single linkage (see \Cref{sec:agglo}) in the worst case. 

\begin{lemma}[Average sensitivity of Single Linkage]
	\label{lem:singlelinkage}
	The average sensitivity of Single Linkage is at least $\Omega(n)$. 
\end{lemma}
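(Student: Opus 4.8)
The plan is to exhibit one explicit one-dimensional point set $P$ on which single linkage, run with $k=2$, already has average sensitivity $\Omega(n)$; since the largest possible value of $|\mathcal{P}_2\triangle\mathcal{P}_2^{(i)}|$ is $O(n)$, this is the strongest instability one could hope to show. The structural fact I would lean on is that for collinear points the single-linkage clusters are always contiguous intervals (an easy induction: the two clusters realizing the minimum inter-cluster distance must be adjacent, and their union is again an interval), so merges happen in increasing order of the gaps between consecutive sorted points, and the $2$-clustering is exactly ``cut $P$, in sorted order, at the single largest gap''. Thus the $2$-clustering is brittle: if deleting one point changes which consecutive gap is largest, every point lying between the old and new cut locations switches clusters.

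Concretely, I would place $p_1$ at the origin and choose $p_2,\dots,p_n$ so that $\DIST(p_1,p_2)=:g_1$ is the largest gap, while every gap between consecutive points inside $\{p_2,\dots,p_n\}$ has length strictly in the window $(g_1/2,\,g_1)$ — for definiteness take $g_1=2$ and spread the $n-2$ interior gaps distinctly inside $(1,1.5)$, so that all $n-1$ gaps are pairwise distinct and no tie-breaking is ever needed, either in the merge process or in the choice of the largest gap. Then $\mathcal{P}_2=\{\{p_1\},\{p_2,\dots,p_n\}\}$. For any interior index $j$ with $3\le j\le n-1$, deleting $p_j$ merges the two gaps flanking it into one gap of length in $(2,3)$, which exceeds $g_1$ and is the unique largest gap of $P^{(j)}$ (no other gap changes), so $\mathcal{P}_2^{(j)}=\{\{p_1,\dots,p_{j-1}\},\{p_{j+1},\dots,p_n\}\}$. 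With only two clusters there are only two bijections to compare, and a direct count gives $|\mathcal{P}_2\triangle\mathcal{P}_2^{(j)}|=\min(2j-3,\,2n-2j+3)$; for every $j\in[n/3,2n/3]$ this is $\Omega(n)$, and there are $\Omega(n)$ such $j$, so $\sum_{i=1}^n|\mathcal{P}_2\triangle\mathcal{P}_2^{(i)}|=\Omega(n^2)$ and dividing by $n$ gives $\beta=\Omega(n)$.

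The step I expect to be most delicate is getting the window condition $g_1/2<g<g_1$ on the interior gaps exactly right: the upper bound $g<g_1$ is what pins the original cut at the edge $p_1p_2$, whereas the lower bound $2g>g_1$ is what makes a merged interior gap overtake $g_1$ after a deletion — both must hold simultaneously, and it is worth verifying explicitly that after deleting $p_j$ the merged gap is strictly the unique maximum (it exceeds $g_1$ and every surviving interior gap, and all surviving gaps remain distinct). Everything else is routine: the ``intervals / increasing-order-of-gaps'' characterization of single linkage on a line, and the two-bijection symmetric-difference computation.
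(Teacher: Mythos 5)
Your construction is correct and is essentially the same as the paper's: both place $n$ collinear points with one dominant gap and all other gaps in the window $(g_1/2, g_1)$, so that deleting an interior point creates a merged gap exceeding the old maximum and relocates the $2$-clustering cut by $\Omega(n)$ positions (the paper puts the dominant gap at the right end rather than the left, a mirror image of your instance). Your write-up is in fact slightly more careful than the paper's on the two points it glosses over — the explicit window condition guaranteeing the merged gap is the unique new maximum, and the minimization over both bijections in the symmetric-difference count.
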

\begin{proof}
	
	For simplicity, assume the number of clusters $k$ is $2$.
	Let $X = \{x_1, \cdots, x_n\}$ be $n$ points on a line in $\mathbb{R}$, nearly equidistant as shown in \Cref{fig:nPoints}. Specifically, suppose $d_1, d_2, \dots, d_{n-1}$ are the distances between $x_1$ and $x_2$, $x_2$ and $x_3$, $\cdots$, $x_{n-1}$ and $x_n$ respectively, then it follows that
	\[
	d_2 = 1 + \frac{d_1}{n},  d_3 = 1 + \frac{2 \cdot d_1}{n}, \dots, d_n = 1 + \frac{(n-1) \cdot d_1}{n}.
	\]
	
	\begin{figure}[ht]
		\centering
		\includegraphics[width=\linewidth]{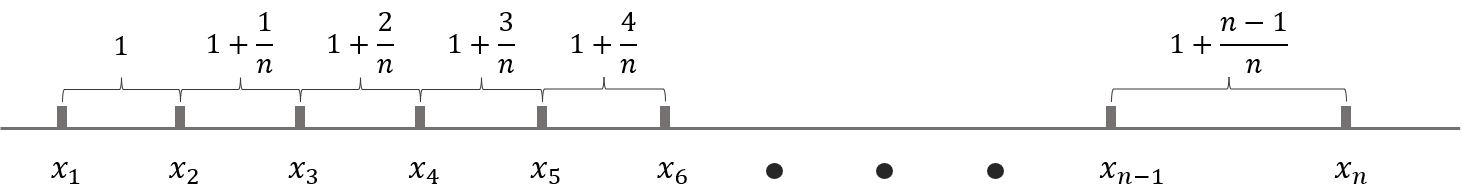}
		\caption{$n$ nearly equidistant points on a line.}
		
		\label{fig:nPoints}
	\end{figure}
	
	For $X$, single linkage starts by merging the points with smaller indices first. After deleting $x_i$, single linkage will merge the points to the left and right of $x_i$ separately, and finally merge them. Thus, the symmetric difference of the clustering results of $X$ and $X-\{x_i\}$ by the algorithm is
	\begin{equation*}
		\begin{aligned}
			\left\{\{x_1, \dots, x_{n-1}\}, \{x_n\}\right\} \triangle \left\{\{x_1, \dots, x_{i-1}\}, \{x_{i+1}, \dots, x_n\}\right\}
		\end{aligned}
	\end{equation*}
	Note that the size of the above set is $n-i$. 
	Thus, the average sensitivity of single linkage in $X$ is 
	$
	\frac{\sum_{i=1}^{n}\Paren{n-i}}{n} = \Omega(n).
	$
\end{proof}

While the above results demonstrate that single linkage can have high average sensitivity on certain hard instances, we show in \Cref{exp:single} that for datasets with strong cluster structure, its sensitivity can be significantly lower.

We also provide the lower bound for the deterministic version of the CLNSS algorithm, whose proof is deferred to \cref{sec:proofsensCLNSS}.
\begin{lemma}[Average sensitivity of \Cref{alg:greedy}]
	The average sensitivity of the deterministic version of CLNSS Algorithm (\Cref{alg:greedy}) is at least $\Omega(n)$.
	\label{lem:CLNSS}
\end{lemma}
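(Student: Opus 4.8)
The plan is to construct an explicit hard instance on which \Cref{alg:greedy} (the deterministic CLNSS algorithm), run on the $2$-RHST, produces a clustering that changes by $\Omega(n)$ points in symmetric difference for a constant fraction of the single-point deletions. As in the proof of \Cref{lem:singlelinkage}, it suffices to work with $k=2$: we only need the first two centers $c_1,c_2$ and the induced $2$-partition $\mathcal{P}_2 = \{P \setminus P_2, P_2\}$ to be brittle. The key observation is that \Cref{alg:greedy}'s behavior is governed entirely by the tree distances $\DIST_T$ on the $2$-RHST: the second center $c_2$ is the point minimizing $\COST_T(P, \{c_1\}\cup\{x\})$, and the resulting split assigns to the cluster of $c_2$ exactly the leaves whose closest labeled ancestor is the one labeled by $c_2$. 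So I want a point configuration whose quadtree is ``balanced on a knife's edge,'' meaning that deleting almost any single point tips which of two large subtrees gets split off first.

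Concretely, I would place the points so that, after the random shift, the $2$-RHST $T$ has a root with two children subtrees $T_L$ and $T_R$ of roughly equal size, each being essentially a path-like (caterpillar) structure of points nearly equidistant in tree metric — mirroring the line configuration of \Cref{fig:nPoints} but realized inside the RHST so that the greedy tie-breaking picks the split point near one end. The first center $c_1$ lands (say) in $T_L$; the greedy rule then labels the root of $T_R$ with $c_2$, so $P_2$ is all of $T_R$, giving $\mathcal{P}_2 = \{T_L, T_R\}$. Now, when we delete a point $x_i$ from $T_R$ that is the current ``boundary'' leaf in the caterpillar ordering, the argmin in line~5 shifts to a different leaf, the highest-unlabeled-ancestor rule labels a different internal node, and the split now separates off only a $\Theta(|i|)$-sized sub-caterpillar of $T_R$ rather than all of it — exactly as the left/right split phenomenon in \Cref{lem:singlelinkage}. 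Summing $|\mathcal{P}_2 \triangle \mathcal{P}_2^{(i)}|$ over these deletions gives $\sum_i \Theta(n-i) = \Omega(n^2)$, hence average sensitivity $\Omega(n)$.

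The steps, in order: (i) define the point set $X$ on a line (or low-dimensional grid) with the near-equidistant gap pattern engineered so that, for a suitable choice of the side length $\Lambda$ of the shift cube, \emph{every} shift in $[0,\Lambda]^d$ (or at least a constant-probability set of shifts) yields a $2$-RHST of the desired balanced caterpillar shape; (ii) for such a shift, trace \Cref{alg:greedy} on $T$ to identify $c_1, c_2$ and show $\mathcal{P}_2 = \{T_L, T_R\}$ with $|T_R| = \Theta(n)$; (iii) for each $i$ in the relevant index range, trace \Cref{alg:greedy} on $T^{(i)}$ — noting that deleting one leaf changes $T$ only locally (collapsing or reweighting the path above $x_i$) — and show the new split detaches a sub-caterpillar of size $\Theta(n-i)$, so $|\mathcal{P}_2 \triangle \mathcal{P}_2^{(i)}| = \Theta(n-i)$; (iv) average over $i$ and over the shift to conclude $\beta \ge \Omega(n)$, using that the bad event has constant probability over the shift.

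The main obstacle is step~(i)/(iii): controlling the quadtree under the random shift. Unlike single linkage, CLNSS first applies a uniform shift and builds the RHST from dyadic cells, so I must ensure the hard combinatorial structure survives the shift with constant probability, and that deleting a single point perturbs the tree only in the controlled, local way I need (it must not, for instance, merge or split higher-level cells). I expect to handle this by choosing the inter-point gaps to be much smaller than the smallest relevant dyadic scale and clustering the points into two tight groups separated by a gap comparable to $\Lambda$, so that the top-level split of the quadtree is shift-independent while the fine caterpillar structure within each group is what the greedy algorithm's tie-breaking resolves; the deletion then only affects the lowest levels. Verifying this robustly, rather than the downstream counting, is where the real work lies.
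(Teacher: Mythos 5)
There is a genuine gap, and it is structural rather than a matter of missing detail. For $k=2$, \Cref{alg:greedy} labels the root with $c_1$ and then labels the \emph{highest unlabelled ancestor} of $c_2$, which is necessarily the root's child subtree containing $c_2$; hence the $2$-partition output is always $\{Q, P\setminus Q\}$ where $Q$ is an entire root-child cell of the quadtree. In your one-dimensional caterpillar the root has only two nonempty children $T_L,T_R$, so the $2$-partition is $\{T_L,T_R\}$ \emph{regardless} of which leaf wins the argmin in line~5, and deleting an interior point does not change the dyadic split at the top level. Consequently the cascade you are importing from \Cref{lem:singlelinkage} --- ``deleting the boundary leaf $x_i$ detaches a sub-caterpillar of size $\Theta(n-i)$'' --- cannot occur: the symmetric difference would be $O(1)$ for every deletion, and your sum $\sum_i\Theta(n-i)=\Omega(n^2)$ never materializes. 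The deeper reason the single-linkage mechanism does not transfer is that single linkage is driven by a single pairwise minimum (so one deletion can reorder all subsequent merges), whereas the greedy score $\COST_T(P,\{c_1,x\})$ is an aggregate over all $n$ points, perturbed by only $O(\mathrm{diam})$ per deletion; you would need $\Theta(n)$ candidates in a precisely graded near-tie, which is incompatible with the exponentially weighted levels of any RHST realizing a caterpillar.

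The paper's proof avoids all of this by working in two dimensions, where the root has four children (sixteen grandchildren), and by setting up a \emph{single} knife-edge: the best candidate center lies in one grandchild cell (subtree $5$) containing $\Omega(n)$ points, with runners-up in two other quadrants (subtrees $11$ and $15$) whose costs are within $\Lambda/2$ of the minimum. Deleting \emph{any} of the $\Omega(n)$ points of subtree $5$ flips this tie, so $c_1,c_2$ jump to different quadrants and the $2$-partition changes by $\Omega(n)$; averaging gives $\frac{1}{n}\cdot\Omega(n)\cdot\Omega(n)=\Omega(n)$. So the counting you need is ``many deletions, each flipping the same tie by $\Omega(n)$,'' not ``one deletion per graded tie.'' Finally, your main stated obstacle --- controlling the quadtree under the random shift --- is moot here: the lemma concerns the \emph{deterministic} version (\Cref{alg:greedy} on a fixed $2$-RHST), and the paper simply fixes a tree with the desired structure.
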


\section{Experiments}\label{sec:experiments}
We experimentally evaluate the average sensitivity and clustering performance of our proposed algorithm, and compare it against CLNSS algorithm and four well-known linkage algorithm, i.e., single linkage, complete linkage, average linkage, and Ward's method. 

All algorithms were implemented in Python 3.10, and the experiments were conducted on a high-performance computing system featuring a 128-core Intel(R) Xeon(R) Platinum 8358 CPU and 504GB of RAM.

\textbf{Implementation Changes} Exactly computing the average sensitivity as defined in  \Cref{equ:aver_sen} is impractical for the experiments, due to high computational cost. Inspired by \cite{varma2021graph}, 
We exploit the following quantity for the experiments: 
$\mathbb{E}_{e \sim E} \left[ \mathbb{E}_{\pi} \left[ \left|\mathcal{A}_{\pi}(P) \triangle \mathcal{A}_{\pi}(P^{(i)})\right| \right] \right].$
Using this quantity, we measure the expected symmetric difference in the outputs of \(\mathcal{A}\) on \(P\) and \(P^{(i)}\) under \emph{the same random seed \(\pi\)}.  It provides an upper bound the true average sensitivity given in \Cref{equ:aver_sen}, as shown in \cite{varma2021graph}. Additional experimental setups and results are provided in \Cref{sec:appendix_experiments}.

\textbf{Experiments on synthetic datasets}
We first evaluate the sensitivity of our algorithm on synthetic datasets. Using the data from \Cref{lem:singlelinkage,lem:CLNSS}, we observe that our algorithm (with $\varepsilon = 1$, $k = 2$) behaves consistently with our theoretical predictions. Specifically, Single Linkage shows the highest sensitivity on one dataset (\Cref{fig:lowerbound_single}), while CLNSS exhibits high average sensitivity on the other (\Cref{fig:lowerbound_CLNSS}). In contrast, our algorithm and some other baseline methods demonstrate low average sensitivity across these datasets. The primary goal of this experiment is to highlight the sensitivity issues in Single Linkage and CLNSS, rather than to emphasize the superiority of our own method.

Next, we test our algorithm on a random regression dataset with $500$ points generated with scikit-learn \cite{pedregosa2011scikit}, varying $k$ and  $\varepsilon$. \Cref{fig:k_eps_sen} shows that our algorithm maintains low sensitivity (at most $50$). Moreover, larger  $\varepsilon$ values lead to smoother sensitivity curves, while smaller  $\varepsilon$ values result in greater fluctuations. One reason is that when $\varepsilon$ is close to $0$, the algorithm tends to select centers greedily (as in \cref{alg:greedy}); when $\varepsilon$ increases (approaching $\infty$), center selection becomes increasingly uniform at random.

\begin{figure*}[!htb]
	\centering
	\begin{minipage}[t]{0.62\linewidth} 
		\centering
		\begin{subfigure}[t]{0.48\linewidth} 
			\centering\includegraphics[height=3cm]{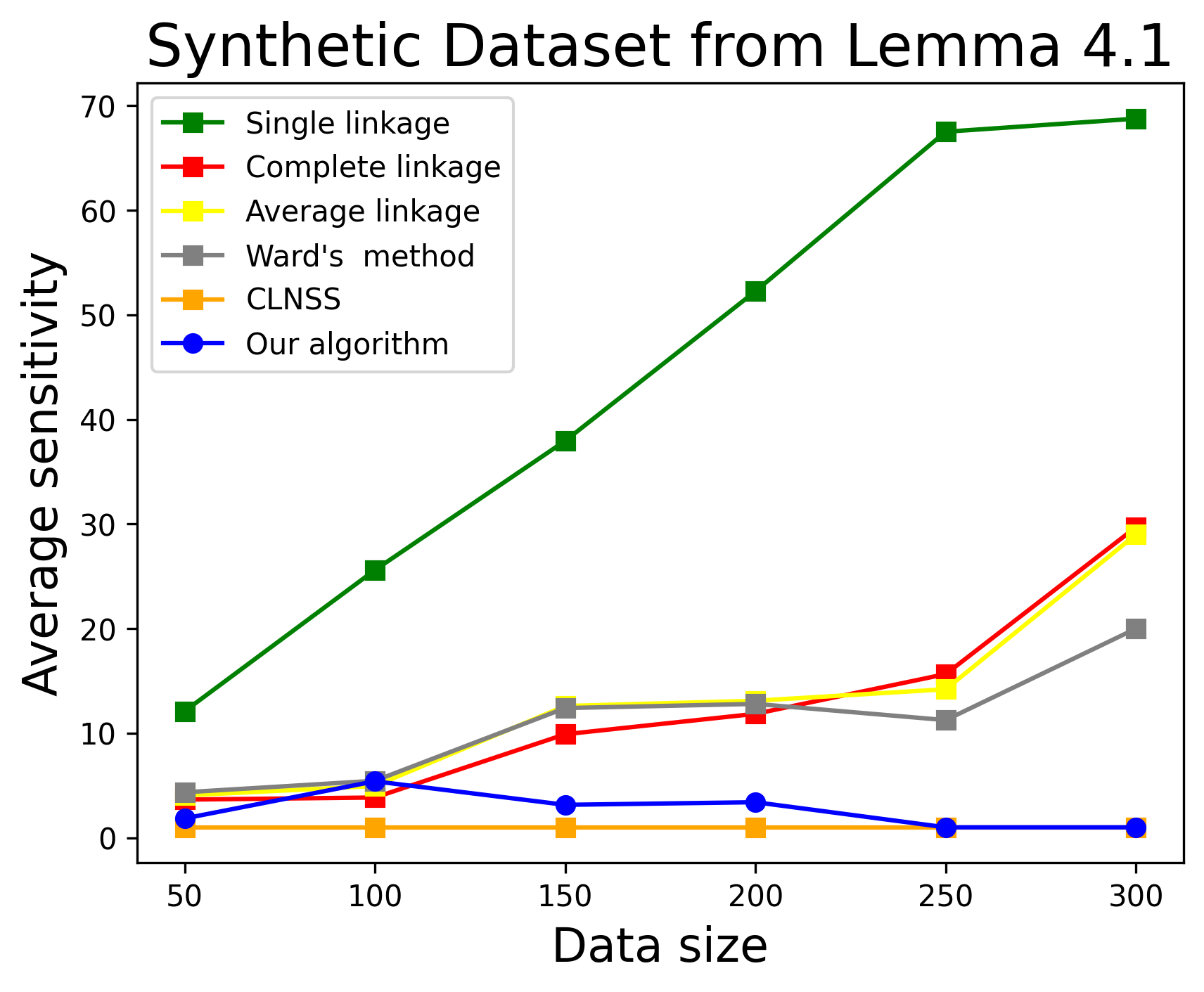} 
			\caption{}
			\label{fig:lowerbound_single}
		\end{subfigure}
		\hspace{-0.5em}
		\begin{subfigure}[t]{0.48\linewidth} 
			\centering\includegraphics[height=3cm]{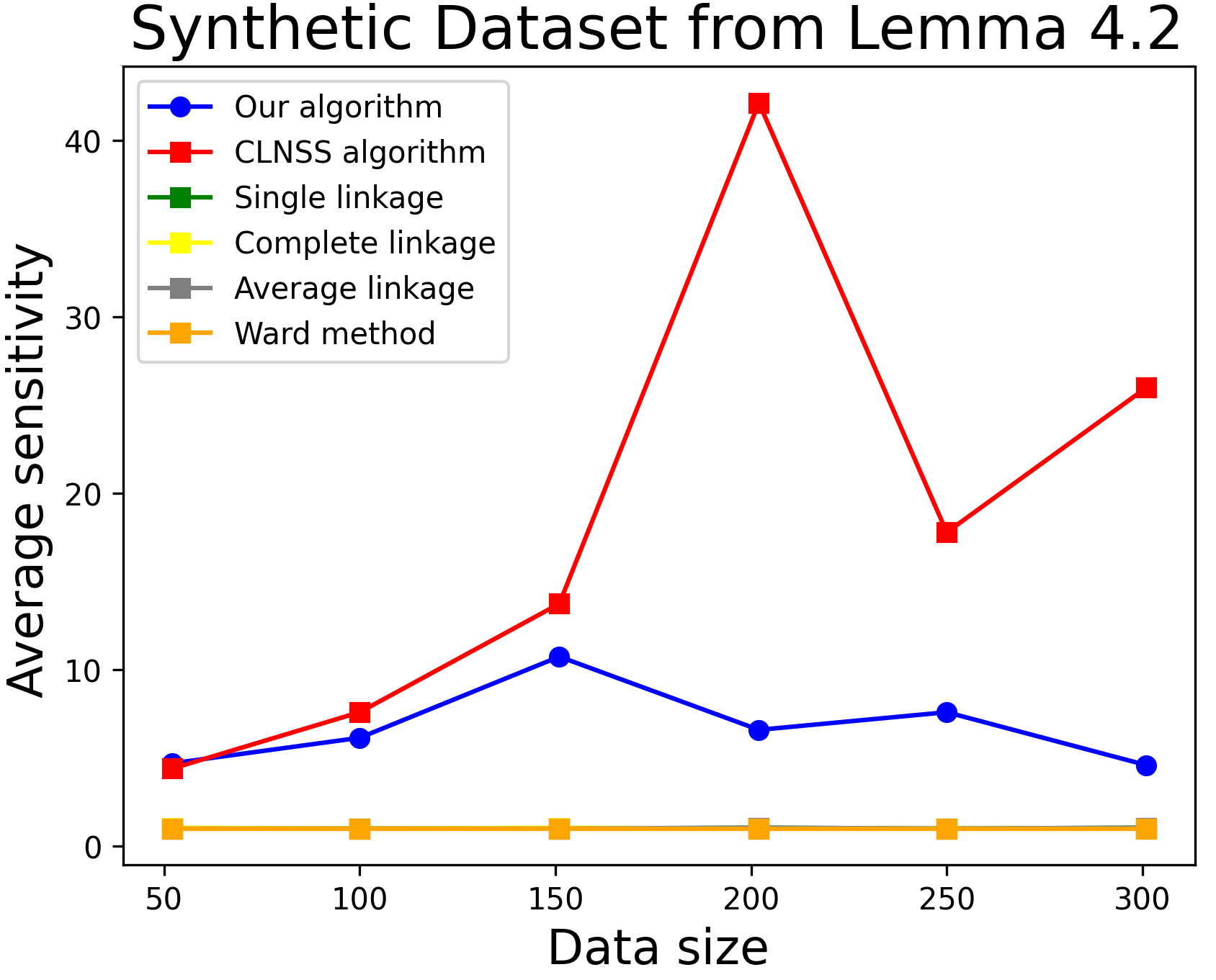}
			\caption{}
			\label{fig:lowerbound_CLNSS}
		\end{subfigure}\hspace{-1.5em} 
		
		\caption{Results for generated datasets described in \Cref{lem:singlelinkage,lem:CLNSS}. The $x$-axis represents the data size, while the $y$-axis represents the average sensitivity. }
		\label{fig:extremeSynData}
	\end{minipage}
	\hfill
	\begin{minipage}[t]{0.36\linewidth} 
		\centering
		\begin{subfigure}[t]{0.95\linewidth} 
			\includegraphics[height=3cm]{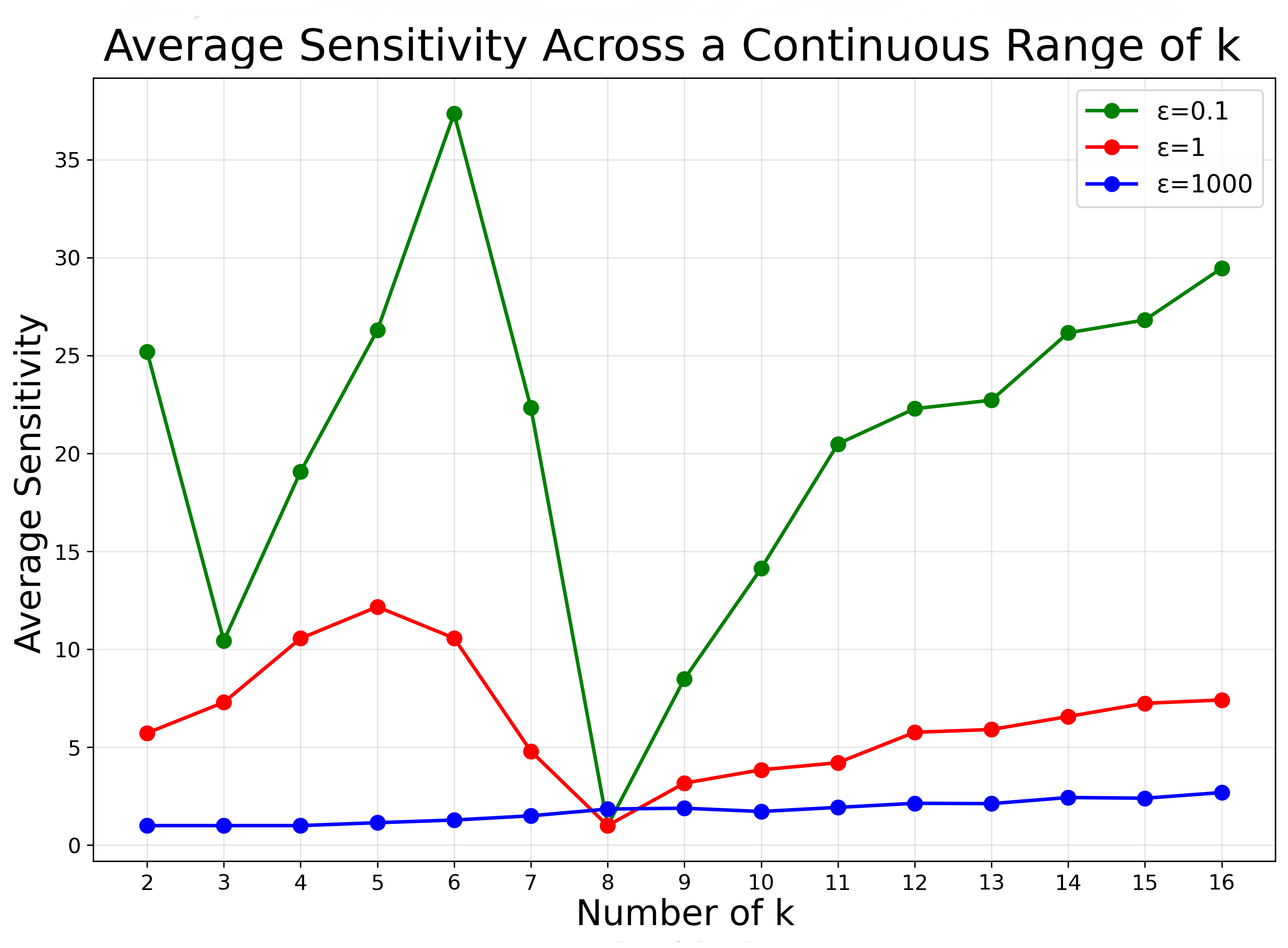}
			\caption*{}    
		\end{subfigure}    
		\caption{Results for a generated regression dataset with $500$ points.}\label{fig:k_eps_sen}
	\end{minipage}
\end{figure*}

\begin{figure*}[!htb] 
	\centering
	\begin{subfigure}{0.22\linewidth}
		\centering
		\includegraphics[width=\linewidth]{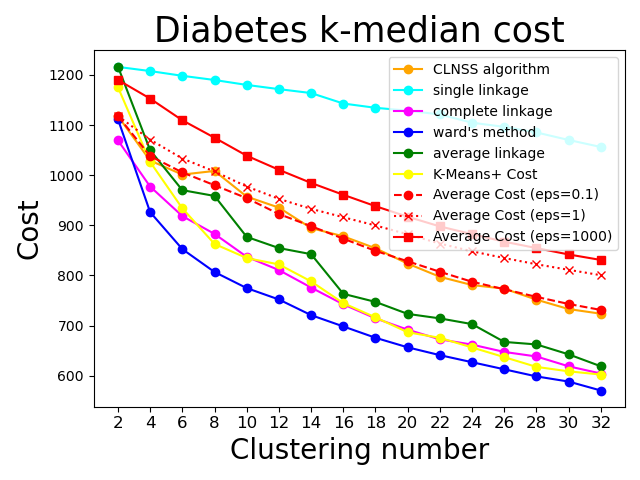}
		\caption{Diabetes}
	\end{subfigure}
	\begin{subfigure}{0.22\linewidth}
		\centering
		\includegraphics[width=\linewidth]{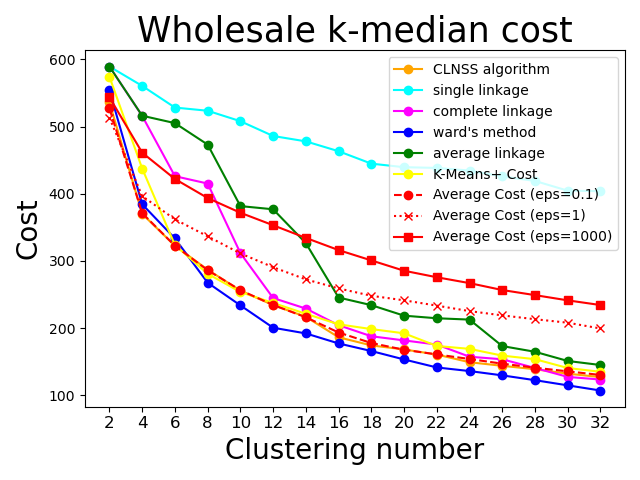}
		\caption{Wholesale}
	\end{subfigure}
	\begin{subfigure}{0.22\linewidth}
		\centering
		\includegraphics[width=\linewidth]{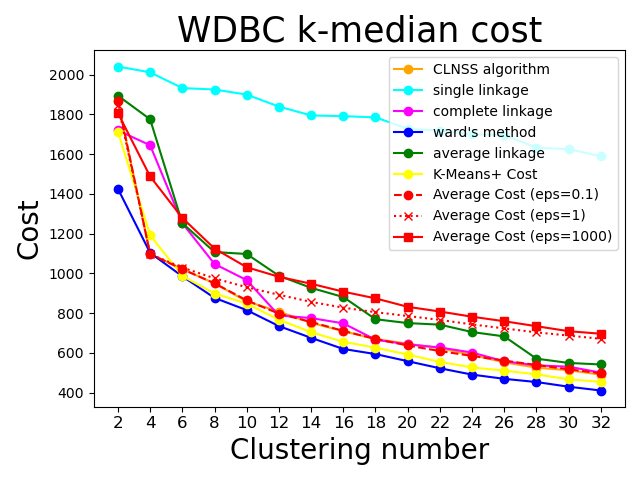}
		\caption{WDBC}
	\end{subfigure} 
	\begin{subfigure}{0.22\linewidth}
		\centering
		\includegraphics[width=\linewidth]{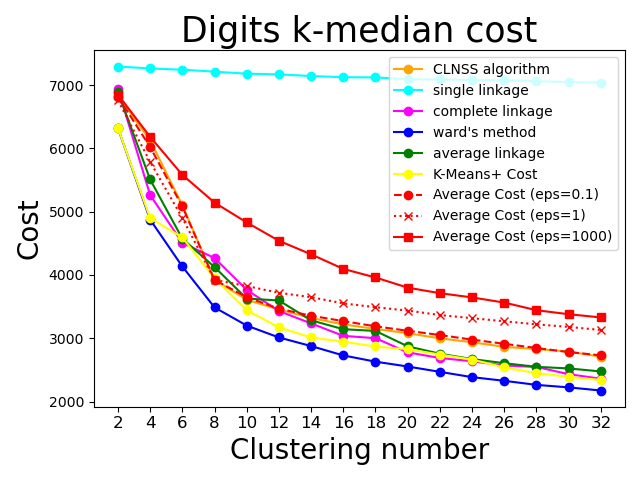}
		\caption{Digits}
	\end{subfigure}\caption{Results on real-world datasets comparing the  $k$-median cost of different algorithms for varying values of  $k$}
	\label{fig:rel_data_cost}
\end{figure*}

\begin{figure*}[!htb] 
	\centering
	\hspace{0.15cm}
	\begin{subfigure}{0.22\linewidth}
		\centering
		\includegraphics[width=\linewidth]{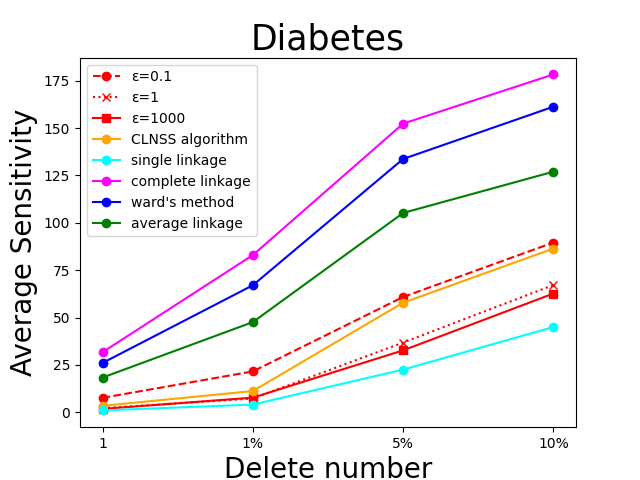}
		\caption{Diabetes}
	\end{subfigure}
	\begin{subfigure}{0.22\linewidth}
		\centering
		\includegraphics[width=\linewidth]{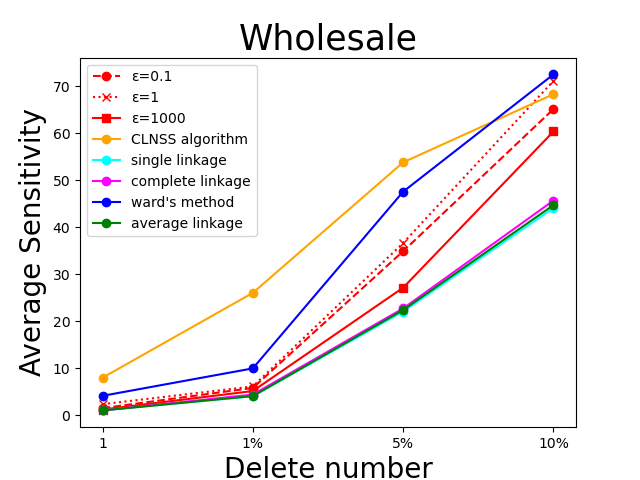}
		\caption{Wholesale}
	\end{subfigure}   
	\begin{subfigure}{0.22\linewidth}
		\centering
		\includegraphics[width=\linewidth]{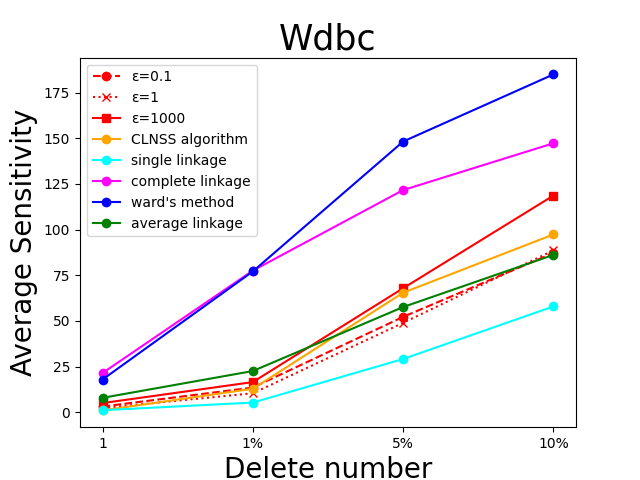}
		\caption{WDBC}
	\end{subfigure}
	\begin{subfigure}{0.22\linewidth}
		\centering
		\includegraphics[width=\linewidth]{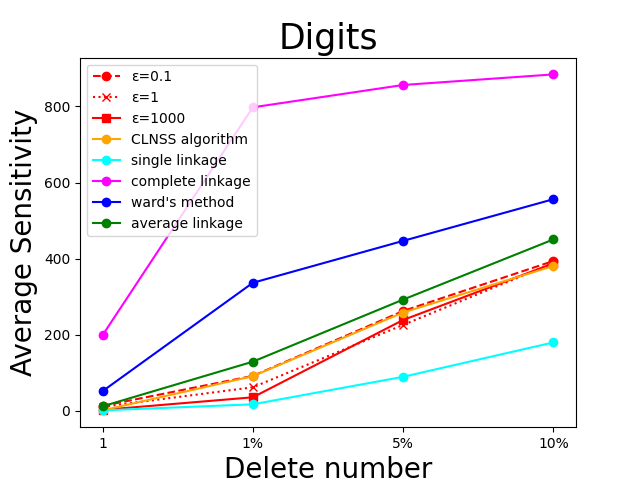}
		\caption{Digits}
	\end{subfigure} 
	\caption{Results on real-world datasets with  $k = 4$. The $x$-axis shows the number of deleted points, and the $y$-axis represents average sensitivity. Our algorithm performs slightly worse than single linkage but surpasses the other algorithms.}
	\label{fig:rel_data}
\end{figure*}

\textbf{Experiments on real-world datasets}
We use datasets from the Scikit-learn library repository \cite{pedregosa2011scikit} and the UCI Machine Learning Repository \cite{asuncion2007uci} to evaluate our algorithm. See  \Cref{tab:real_data_set} for more detailed description.

\textbf{(I) $k$-median cost} We evaluate the $k$-median cost, reflecting the clustering quality, using the criteria defined in \Cref{sec:cost_def}. Here, the cost refers to the $k$-median cost of the $k$-clustering output. The comparison includes our algorithm (\(\varepsilon=1\))\footnote{{The choice of the  $\varepsilon$  parameter largely depends on the specific requirements of the problem. It should be set to balance the trade-off between the approximation ratio and average sensitivity, depending on the desired outcome. For instance, in practice, one could use a geometric search to find an appropriate $\varepsilon$  that satisfies the target accuracy or achieves the desired sensitivity.
}} and other methods across varying values of $k$, as shown in \Cref{fig:rel_data_cost,fig:rel_data_cost_app}. The results indicate that while our algorithm slightly underperforms Ward's method and complete linkage, it outperforms other approaches. Note that single linkage produces significantly higher costs, demonstrating poor performance. Overall, our algorithm achieves clustering costs comparable to other algorithms.

\textbf{(II) Average sensitivity} 
To evaluate an algorithm’s average sensitivity, we run it on both the original and perturbed datasets (see below) and measure the symmetric difference between the resulting clusterings. This process is repeated 100 times, and the average difference is computed.

Given a dataset, we generate perturbed datasets by removing certain points as follows: (1) For each deletion setting ($1$ point, $1\%$, $5\%$, and $10\%$ deletions), conduct $100$ independent trials. (2)	In each trial, randomly remove the specified number or percentage of points from the original dataset to create a perturbed version. For a fixed $k$, we evaluate our algorithm with three values of\footnote{We also tested  $\varepsilon = 10$ and  $100$, but since center selection probability depends exponentially on  $\frac{-1}{\varepsilon}$, the algorithm’s behavior remains similar for  $\varepsilon = 10, 100, 1000$, with nearly overlapping curves. For clarity, we show only the largest  $\varepsilon $.} \(\varepsilon\): $1$, $10$, and $1000$. 

In terms of average sensitivity, \cref{fig:rel_data} shows that 
our algorithm ($k = 4$) outperforms traditional linkage-based algorithms and the CLNSS algorithm.
(Recall that it achieves lower clustering costs than average linkage, single linkage, and the CLNSS algorithm, as seen in \Cref{fig:rel_data_cost,fig:rel_data_cost_app}.) Furthermore, as $\varepsilon$ increases, the average sensitivity of our algorithm decreases, confirming our theoretical result (\Cref{theo:main}). We also report that (in \Cref{fig:any_k}) our algorithm consistently maintains lower average sensitivity across different values of $k$, whereas other algorithms (except single linkage) perform well only for certain $k$ values. 

While single linkage appears to have lower sensitivity, this seemingly contradicts our lower bound in \Cref{lem:singlelinkage}. We believe this is because real-world datasets often have strong clustering structures, unlike worst-case instances. In \Cref{exp:single}, we prove that single linkage has low average sensitivity for well-clusterable datasets and show experimentally (\Cref{fig:single_linkage_example}) that many real datasets exhibit similar structures.

	\section*{Acknowledgments}
	This work is supported in part by NSFC Grant 62272431 and Innovation Program for Quantum Science and Technology (Grant No. 2021ZD0302901).
	
	\section*{Impact Statement}
	This paper presents work whose goal is to advance the field of Machine Learning. There are many potential societal consequences of our work, none which we feel must be specifically highlighted here.

	\bibliographystyle{icml2025}
	\bibliography{LHBP25}

\begin{thebibliography}{34}
\providecommand{\natexlab}[1]{#1}
\providecommand{\url}[1]{\texttt{#1}}
\expandafter\ifx\csname urlstyle\endcsname\relax
  \providecommand{\doi}[1]{doi: #1}\else
  \providecommand{\doi}{doi: \begingroup \urlstyle{rm}\Url}\fi

\bibitem[Ackermann et~al.(2014)Ackermann, Bl{\"o}mer, Kuntze, and
  Sohler]{ackermann2014analysis}
Ackermann, M.~R., Bl{\"o}mer, J., Kuntze, D., and Sohler, C.
\newblock Analysis of agglomerative clustering.
\newblock \emph{Algorithmica}, 69:\penalty0 184--215, 2014.

\bibitem[Asuncion et~al.(2007)Asuncion, Newman, et~al.]{asuncion2007uci}
Asuncion, A., Newman, D., et~al.
\newblock Uci machine learning repository, 2007.

\bibitem[Balcan et~al.(2014)Balcan, Liang, and Gupta]{balcan2014robust}
Balcan, M.-F., Liang, Y., and Gupta, P.
\newblock Robust hierarchical clustering.
\newblock \emph{The Journal of Machine Learning Research}, 15\penalty0
  (1):\penalty0 3831--3871, 2014.

\bibitem[Charikar \& Chatziafratis(2017)Charikar and
  Chatziafratis]{charikar2017approximate}
Charikar, M. and Chatziafratis, V.
\newblock Approximate hierarchical clustering via sparsest cut and spreading
  metrics.
\newblock In \emph{Proceedings of the Twenty-Eighth Annual ACM-SIAM Symposium
  on Discrete Algorithms}, pp.\  841--854. SIAM, 2017.

\bibitem[Cheng et~al.(2019)Cheng, Zhu, Huang, Wu, and
  Yang]{cheng2019hierarchical}
Cheng, D., Zhu, Q., Huang, J., Wu, Q., and Yang, L.
\newblock A hierarchical clustering algorithm based on noise removal.
\newblock \emph{International Journal of Machine Learning and Cybernetics},
  10:\penalty0 1591--1602, 2019.

\bibitem[Cohen-Addad et~al.(2021)Cohen-Addad, Lattanzi, Norouzi-Fard, Sohler,
  and Svensson]{cohen2021parallel}
Cohen-Addad, V., Lattanzi, S., Norouzi-Fard, A., Sohler, C., and Svensson, O.
\newblock Parallel and efficient hierarchical k-median clustering.
\newblock \emph{Advances in Neural Information Processing Systems},
  34:\penalty0 20333--20345, 2021.

\bibitem[Cohen-Addad et~al.(2022)Cohen-Addad, Fan, Lattanzi, Mitrovic,
  Norouzi-Fard, Parotsidis, and Tarnawski]{cohen2022near}
Cohen-Addad, V., Fan, C., Lattanzi, S., Mitrovic, S., Norouzi-Fard, A.,
  Parotsidis, N., and Tarnawski, J.~M.
\newblock Near-optimal correlation clustering with privacy.
\newblock \emph{Advances in Neural Information Processing Systems},
  35:\penalty0 33702--33715, 2022.

\bibitem[Dasgupta(2016)]{dasgupta2016cost}
Dasgupta, S.
\newblock A cost function for similarity-based hierarchical clustering.
\newblock In \emph{Proceedings of the forty-eighth annual ACM symposium on
  Theory of Computing}, pp.\  118--127, 2016.

\bibitem[Dwork et~al.(2006)Dwork, Mcsherry, Nissim, and Smith]{Dwork2006DP}
Dwork, C., Mcsherry, F., Nissim, K., and Smith, A.
\newblock \emph{Calibrating Noise to Sensitivity in Private Data Analysis},
  pp.\  265--284.
\newblock Springer Berlin Heidelberg, 2006.
\newblock \doi{10.1007/11681878_14}.

\bibitem[Dwork et~al.(2014)Dwork, Roth, et~al.]{dwork2014algorithmic}
Dwork, C., Roth, A., et~al.
\newblock The algorithmic foundations of differential privacy.
\newblock \emph{Foundations and Trends{\textregistered} in Theoretical Computer
  Science}, 9\penalty0 (3--4):\penalty0 211--407, 2014.

\bibitem[Eriksson et~al.(2011)Eriksson, Dasarathy, Singh, and
  Nowak]{eriksson2011active}
Eriksson, B., Dasarathy, G., Singh, A., and Nowak, R.
\newblock Active clustering: Robust and efficient hierarchical clustering using
  adaptively selected similarities.
\newblock In \emph{Proceedings of the fourteenth international conference on
  artificial intelligence and statistics}, pp.\  260--268. JMLR Workshop and
  Conference Proceedings, 2011.

\bibitem[Ghazi et~al.(2020)Ghazi, Manurangsi, Pagh, and
  Velingker]{ghazi2020private}
Ghazi, B., Manurangsi, P., Pagh, R., and Velingker, A.
\newblock Private aggregation from fewer anonymous messages.
\newblock In \emph{Advances in Cryptology--EUROCRYPT 2020: 39th Annual
  International Conference on the Theory and Applications of Cryptographic
  Techniques, Zagreb, Croatia, May 10--14, 2020, Proceedings, Part II 30}, pp.\
   798--827. Springer, 2020.

\bibitem[Gro{\ss}wendt et~al.(2019)Gro{\ss}wendt, R{\"o}glin, and
  Schmidt]{grosswendt2019analysis}
Gro{\ss}wendt, A., R{\"o}glin, H., and Schmidt, M.
\newblock Analysis of ward's method.
\newblock In \emph{Proceedings of the Thirtieth Annual ACM-SIAM Symposium on
  Discrete Algorithms}, pp.\  2939--2957. SIAM, 2019.

\bibitem[Hara \& Yoshida(2022)Hara and Yoshida]{hara2022average}
Hara, S. and Yoshida, Y.
\newblock Average sensitivity of decision tree learning.
\newblock In \emph{The Eleventh International Conference on Learning
  Representations}, 2022.

\bibitem[Hara et~al.(2024)Hara, Takeuchi, and Yoshida]{hara2024average}
Hara, S., Takeuchi, K., and Yoshida, Y.
\newblock Average sensitivity of hierarchical clustering, 2024.
\newblock URL \url{https://openreview.net/forum?id=lCLdLlXAvt}.

\bibitem[Hastie et~al.(2009)Hastie, Tibshirani, Friedman, and
  Friedman]{hastie2009elements}
Hastie, T., Tibshirani, R., Friedman, J.~H., and Friedman, J.~H.
\newblock \emph{The elements of statistical learning: data mining, inference,
  and prediction}, volume~2.
\newblock Springer, 2009.

\bibitem[Henelius et~al.(2016)Henelius, Puolam{\"a}ki, Bostr{\"o}m, and
  Papapetrou]{henelius2016clustering}
Henelius, A., Puolam{\"a}ki, K., Bostr{\"o}m, H., and Papapetrou, P.
\newblock Clustering with confidence: Finding clusters with statistical
  guarantees.
\newblock \emph{arXiv preprint arXiv:1612.08714}, 2016.

\bibitem[Huang \& Liu(2018)Huang and Liu]{huang2018optimal}
Huang, Z. and Liu, J.
\newblock Optimal differentially private algorithms for k-means clustering.
\newblock In \emph{Proceedings of the 37th ACM SIGMOD-SIGACT-SIGAI Symposium on
  Principles of Database Systems}, pp.\  395--408, 2018.

\bibitem[Imola et~al.(2023)Imola, Epasto, Mahdian, Cohen-Addad, and
  Mirrokni]{imola2023differentially}
Imola, J., Epasto, A., Mahdian, M., Cohen-Addad, V., and Mirrokni, V.
\newblock Differentially private hierarchical clustering with provable
  approximation guarantees.
\newblock In \emph{International Conference on Machine Learning}, pp.\
  14353--14375. PMLR, 2023.

\bibitem[Jin et~al.(2015)Jin, Liu, Chen, Hendrix, Agrawal, and
  Choudhary]{jin2015scalable}
Jin, C., Liu, R., Chen, Z., Hendrix, W., Agrawal, A., and Choudhary, A.
\newblock A scalable hierarchical clustering algorithm using spark.
\newblock In \emph{2015 IEEE First International Conference on Big Data
  Computing Service and Applications}, pp.\  418--426. IEEE, 2015.

\bibitem[Kumabe \& Yoshida(2022)Kumabe and Yoshida]{kumabe2022average}
Kumabe, S. and Yoshida, Y.
\newblock Average sensitivity of dynamic programming.
\newblock In \emph{Proceedings of the 2022 Annual ACM-SIAM Symposium on
  Discrete Algorithms (SODA)}, pp.\  1925--1961. SIAM, 2022.

\bibitem[Lattanzi et~al.(2020)Lattanzi, Lavastida, Lu, and
  Moseley]{lattanzi2020framework}
Lattanzi, S., Lavastida, T., Lu, K., and Moseley, B.
\newblock A framework for parallelizing hierarchical clustering methods.
\newblock In \emph{Machine Learning and Knowledge Discovery in Databases:
  European Conference, ECML PKDD 2019, W{\"u}rzburg, Germany, September 16--20,
  2019, Proceedings, Part I}, pp.\  73--89. Springer, 2020.

\bibitem[Lin et~al.(2010)Lin, Nagarajan, Rajaraman, and
  Williamson]{lin2010general}
Lin, G., Nagarajan, C., Rajaraman, R., and Williamson, D.~P.
\newblock A general approach for incremental approximation and hierarchical
  clustering.
\newblock \emph{SIAM Journal on Computing}, 39\penalty0 (8):\penalty0
  3633--3669, 2010.

\bibitem[McQuitty(1957)]{mcquitty1957elementary}
McQuitty, L.~L.
\newblock Elementary linkage analysis for isolating orthogonal and oblique
  types and typal relevancies.
\newblock \emph{Educational and psychological measurement}, 17\penalty0
  (2):\penalty0 207--229, 1957.

\bibitem[McSherry \& Talwar(2007)McSherry and Talwar]{mcsherry2007mechanism}
McSherry, F. and Talwar, K.
\newblock Mechanism design via differential privacy.
\newblock In \emph{48th Annual IEEE Symposium on Foundations of Computer
  Science (FOCS'07)}, pp.\  94--103. IEEE, 2007.

\bibitem[Monti et~al.(2003)Monti, Tamayo, Mesirov, and
  Golub]{monti2003consensus}
Monti, S., Tamayo, P., Mesirov, J., and Golub, T.
\newblock Consensus clustering: a resampling-based method for class discovery
  and visualization of gene expression microarray data.
\newblock \emph{Machine learning}, 52:\penalty0 91--118, 2003.

\bibitem[Moseley \& Wang(2023)Moseley and Wang]{moseley2023approximation}
Moseley, B. and Wang, J.~R.
\newblock Approximation bounds for hierarchical clustering: Average linkage,
  bisecting k-means, and local search.
\newblock \emph{Journal of Machine Learning Research}, 24\penalty0
  (1):\penalty0 1--36, 2023.

\bibitem[Murai \& Yoshida(2019)Murai and Yoshida]{murai2019sensitivity}
Murai, S. and Yoshida, Y.
\newblock Sensitivity analysis of centralities on unweighted networks.
\newblock In \emph{The world wide web conference}, pp.\  1332--1342, 2019.

\bibitem[Pedregosa et~al.(2011)Pedregosa, Varoquaux, Gramfort, Michel, Thirion,
  Grisel, Blondel, Prettenhofer, Weiss, Dubourg, et~al.]{pedregosa2011scikit}
Pedregosa, F., Varoquaux, G., Gramfort, A., Michel, V., Thirion, B., Grisel,
  O., Blondel, M., Prettenhofer, P., Weiss, R., Dubourg, V., et~al.
\newblock Scikit-learn: Machine learning in python.
\newblock \emph{the Journal of machine Learning research}, 12:\penalty0
  2825--2830, 2011.

\bibitem[Peng \& Yoshida(2020)Peng and Yoshida]{peng2020average}
Peng, P. and Yoshida, Y.
\newblock Average sensitivity of spectral clustering.
\newblock In \emph{Proceedings of the 26th ACM SIGKDD International Conference
  on Knowledge Discovery \& Data Mining}, pp.\  1132--1140, 2020.

\bibitem[Su et~al.(2016)Su, Cao, Li, Bertino, and Jin]{su2016differentially}
Su, D., Cao, J., Li, N., Bertino, E., and Jin, H.
\newblock Differentially private k-means clustering.
\newblock In \emph{Proceedings of the sixth ACM conference on data and
  application security and privacy}, pp.\  26--37, 2016.

\bibitem[Varma \& Yoshida(2021)Varma and Yoshida]{varma2021graph}
Varma, N. and Yoshida, Y.
\newblock Average sensitivity of graph algorithms.
\newblock In \emph{Proceedings of the 2021 ACM-SIAM Symposium on Discrete
  Algorithms (SODA)}, pp.\  684--703. Society for Industrial and Applied
  Mathematics, 1 2021.
\newblock \doi{10.1137/1.9781611976465.43}.

\bibitem[Xie \& Shekhar(2019)Xie and Shekhar]{xie2019significant}
Xie, Y. and Shekhar, S.
\newblock Significant dbscan towards statistically robust clustering.
\newblock In \emph{Proceedings of the 16th International Symposium on Spatial
  and Temporal Databases}, pp.\  31--40, 2019.

\bibitem[Yoshida \& Ito(2022)Yoshida and Ito]{yoshida2022averageNIPS}
Yoshida, Y. and Ito, S.
\newblock Average sensitivity of euclidean k-clustering.
\newblock \emph{Advances in Neural Information Processing Systems},
  35:\penalty0 32487--32498, 2022.

\end{thebibliography}

	\newpage
	\appendix
	\onecolumn
	
	\section{Further Related Work}\label{sec:appendixrelatedwork}
	\textbf{Statistically Robust Clustering.}
	Several studies evaluate algorithm robustness from a statistical perspective. For instance, The works \cite{xie2019significant, henelius2016clustering} assess robustness using statistical measures. \cite{xie2019significant} employs Monte Carlo methods to eliminate randomness, reducing ``chance patterns'' or false positives generated by stochastic processes, thereby enhancing the reliability and robustness of clustering results. Similarly, \cite{henelius2016clustering} identifies core clusters by finding the largest set of data points with a co-occurrence probability of at least \(1 - \alpha\) across multiple iterations of the same clustering experiment. Additionally, \cite{monti2003consensus} introduces consensus clustering, which uses resampling techniques to evaluate cluster stability across repeated iterations. It is important to note that these algorithms evaluate robustness from a statistical perspective, ensuring that their output retains most of the dataset, which differs from our study.

	\section{Agglomerative Hierarchical Clustering}
	\label{sec:agglo}
	A common class of approaches for computing hierarchical clusterings are agglomerative linkage algorithms \cite{ackermann2014analysis, grosswendt2019analysis}. A hierarchical clustering can be computed in a bottom-up fashion, where pairs of clusters are merged successively. Agglomerative linkage procedures do exactly that, with the choice of clusters to be merged at every step determined by a linkage function. This linkage function maps all possible pairs of disjoint clusters onto \(\mathbb{R}^+\), and the algorithm chooses the pair that minimizes this value.
	
	Before introducing the linkage functions, let's define the distance between points in a cluster. 
	We denote the cost of merging two clusters \(A\) and \(B\) as \(\COST(A,B)\). The linkage functions we are interested in are:
	
	\begin{itemize}
		\item \textbf{Single Linkage. } This linkage function measures the minimum distance between pairs of points from two clusters. The single linkage function is defined as:
		\[
		\COST(A,B) = \min_{(a,b) \in A \times B}\DIST(a,b).
		\]
		This means that the cost of merging two clusters \(A\) and \(B\) is the minimum distance between any pair of points \((a, b)\), where \(a \in A\) and \(b \in B\).
		
		\item \textbf{Complete Linkage. } This linkage function considers the maximum distance between pairs of points from two clusters.  Formally, it is defined as:
		\[
		\COST(A,B) = \max_{(a,b) \in A \times B} \DIST(a,b).
		\]
		In this case, the cost of merging two clusters \(A\) and \(B\) is the maximum distance between any pair of points \((a, b)\), where \(a \in A\) and \(b \in B\).
		\item \textbf{Ward's Method. } This linkage function minimizes the total within-cluster variance. The cost of merging two clusters \(A\) and \(B\) is defined as the increase in the total within-cluster variance after merging. Formally, it is defined as
		
		\begin{align*}
			\COST(A,B) = &\sum_{a \in A \cup B} \|a -\mu_{A \cup B}\|^2
			- \left(\sum_{a \in A} \|a - \mu_A\|^2 + \sum_{b \in B} \|b - \mu_B\|^2\right),
		\end{align*}
		
		where \(\mu_A\), \(\mu_B\), and \(\mu_{A \cup B}\) are the centroids of clusters \(A\), \(B\), and the merged cluster \(A \cup B\), respectively. This means that the cost of merging two clusters is the increase in the sum of squared deviations from the cluster centroids.
		
		\item \textbf{Average Linkage. } This linkage function measures the average distance between pairs of points from two clusters. Formally, it is defined as:
		\[
		\COST(A,B) = \frac{1}{|A||B|} \sum_{a \in A} \sum_{b \in B} \DIST(a,b).
		\]
		
		In this case, the cost of merging two clusters \(A\) and \(B\) is the average distance between all pairs of points \((a, b)\), where \(a \in A\) and \(b \in B\).
	\end{itemize}

	\begin{algorithm}[ht]
		\setcounter{AlgoLine}{0}
		\caption{\textsc{Agglomerative algorithm}}
		\label{Alg:agglo}
		\SetAlgoLined
		\KwIn{\(X\) \textbf{finite set of input points from} \(\mathbb{R}^d\)} 
		
		Set \(C_{|X|} = \{ \{x\} \mid x \in X \}\)
		
		\For{$i = |X| - 1, \ldots, 1$}{
			find distinct clusters $A, B \in C_{i+1}$ minimizing $\COST(A \cup B)$
			
			$C_i = (C_{i+1} \setminus \{A, B\}) \cup \{A \cup B\}$
			
		}
		\KwOut{Return clustering \(C_1, \ldots, C_{|X|}\) }
	\end{algorithm}

	The general algorithm flow of the Agglomerative Clustering Algorithm is shown in \Cref{Alg:agglo}. The algorithm flow is the same for these classic linkage hierarchical clustering methods, with the only difference being the calculation of cost during the cluster merging process; the other steps remain identical.
	
	The general algorithm flow of the Agglomerative Clustering Algorithm is illustrated in \Cref{Alg:agglo}. The steps are the same except for the calculation of \(\COST\) during the cluster merging process.

	\section{Supplementary Preliminaries}
	\label{sec:supplementary_pre}

	\subsection{Exponential Mechanism}\label{sec:exponentialmech}
	\textit{The exponential mechanism} \cite{mcsherry2007mechanism,dwork2014algorithmic} is a commonly used algorithm in the design of differential privacy, and it is also very useful for designing algorithms with low average sensitivity. Its main idea is to introduce special randomness when selecting candidates, making the selection process more robust. 
	
	In detail, the exponential mechanism aims to select a high-scoring candidate. Given a dataset $x$, a real number $\lambda>0$, a candidate $r \in \mathcal{R}$ is selected with a probability proportional to $e^{- u(x, r)/\lambda}$, where $u(\cdot,\cdot)$ is the utility score associated with 
	dataset and candidate. We formalize this statement using the following lemma.
	\begin{lemma}[Utility of the Exponential Mechanism \cite{mcsherry2007mechanism,dwork2014algorithmic}]
		Let $\lambda > 0$, we use the exponential mechanism $M(\cdot)$ to select candidate $r^* \in \mathcal{R}$ for a given dataset $x$. Then for any $t > 0$ we have
		\[
		\Pr[u(x,r^*)\geq \OPT+\lambda \cdot (\ln|\mathcal{R}|+t)]\leq e^{-t},
		\]
		where $\OPT = \min_{r\in \mathcal{R}} u(x,r)$.
		\label{lem:exp}
	\end{lemma}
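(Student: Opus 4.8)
\textbf{Proof proposal for \Cref{lem:exp} (Utility of the Exponential Mechanism).}

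The plan is to follow the standard two-sided argument that compares the mass the exponential mechanism places on ``bad'' candidates against the mass it places on a near-optimal one. Write $p(r) \propto \exp(-u(x,r)/\lambda)$ for the output distribution, so that $p(r) = \exp(-u(x,r)/\lambda)/Z$ with $Z = \sum_{r' \in \mathcal{R}} \exp(-u(x,r')/\lambda)$. Let $\OPT = \min_{r \in \mathcal{R}} u(x,r)$ and fix a threshold $\tau = \OPT + \lambda(\ln|\mathcal{R}| + t)$. The event of interest is $\{u(x,r^*) \ge \tau\}$, and I would bound its probability by summing $p(r)$ over the set $B = \{r : u(x,r) \ge \tau\}$ of bad candidates.

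First I would lower-bound the normalizer $Z$: since some $r_{\min}$ attains $u(x,r_{\min}) = \OPT$, we have $Z \ge \exp(-\OPT/\lambda)$. Next I would upper-bound the numerator mass on $B$: for every $r \in B$, $\exp(-u(x,r)/\lambda) \le \exp(-\tau/\lambda)$, and $|B| \le |\mathcal{R}|$, so $\sum_{r \in B} \exp(-u(x,r)/\lambda) \le |\mathcal{R}| \exp(-\tau/\lambda)$. Dividing,
\[
\Pr[u(x,r^*) \ge \tau] = \sum_{r \in B} p(r) \le \frac{|\mathcal{R}| \exp(-\tau/\lambda)}{\exp(-\OPT/\lambda)} = |\mathcal{R}| \exp\!\left(-\frac{\tau - \OPT}{\lambda}\right).
\]
Substituting $\tau - \OPT = \lambda(\ln|\mathcal{R}| + t)$ makes the right-hand side $|\mathcal{R}| \exp(-\ln|\mathcal{R}| - t) = e^{-t}$, which is exactly the claimed bound.

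This argument is entirely elementary; there is no real obstacle, the only thing to be careful about is keeping the direction of every inequality straight (the utility $u$ is a \emph{cost} to be minimized here, so smaller is better and the bad set is the \emph{upper} tail), and ensuring the normalizer bound uses a single near-optimal candidate while the bad-mass bound uses the crude count $|B| \le |\mathcal{R}|$. If one prefers a self-contained statement one can also note the bound is vacuous unless $|\mathcal{R}| e^{-t} \le 1$, i.e. $t \ge \ln|\mathcal{R}|$, but the inequality as stated holds for all $t > 0$ regardless.
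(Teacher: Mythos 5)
Your proof is correct: the paper itself gives no proof of \Cref{lem:exp}, citing it directly from \cite{mcsherry2007mechanism,dwork2014algorithmic}, and your argument is precisely the standard one from those references (lower-bound the normalizer by the single optimal candidate, upper-bound the mass of the bad upper-tail set by $|\mathcal{R}|\exp(-\tau/\lambda)$, and divide), correctly adapted to the minimization convention used here. Nothing is missing.
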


	\subsection{Hierarchically Well Separated Tree and CLNSS Algorithm.}
	\label{sec:construct_rhst}
	We will next explain how to construct a \(2\)-RHST. First, let us examine the properties of a \(2\)-RHST.
	
	\textbf{Construction of \(2\)-RHST.} We now describe the construction of a \(2\)-RHST, which involves embedding the dataset into a restricted hierarchical structure based on a quadtree. Let
	\(D_{\text{max}}\) and \(D_{\text{min}}\) represent the maximum and minimum Euclidean distances, respectively, between any two points in the set \(P\). Let $\Lambda=\frac{D_{\text{max}}}{D_{\text{min}}}$. Starting with a \(d\)-dimensional cube of side length \(\Lambda\), the space is recursively divided into \(2^d\) smaller subcubes. Each subcube, with side length halved at each level, connects to its parent with an edge length proportional to \(\Lambda \sqrt{d} / 2^{i+1}\), where \(i\) denotes the level. This process continues until the side length is reduced to \(1\), at which point each subcube contains exactly one point. 
	The construction of the $2$-RHST tree, denoted \textsc{Construct2RHST}($P$), is presented in \Cref{alg:rhst-construction}.
	
	\begin{algorithm}[h!]
		\caption{\textsc{Construct2RHST}($P$)\,\,\,\,$\triangleright${{Construction of \(2\)-RHST}}}
		\label{alg:rhst-construction}
		\KwIn{Set of points \(P\)}
		
		\textbf{Initialize:} Construct a \(d\)-dimensional cube with side length \(\Lambda\), where \(\Lambda = \frac{D_{\text{max}}}{D_{\text{min}}}\), and the cube contains the entire point set \(P \subseteq  \{0, \ldots, \Lambda\}^d\).
		
		\For{$i = 1, \ldots, \log \Lambda$}{
			Recursively divide each cube along the midpoint of each dimension, generating \(2^d\) subcubes per division.
			
			Update the side length of each subcube to \(\frac{\Lambda}{2^i}\) if the subcube contains points from the point set \(P\); otherwise, delete the subcube. The remaining subcubes that contain points are treated as child nodes of their parent node.
			
			Set the weight between each newly generated node and its parent node to \(\Lambda \cdot \frac{\sqrt{d}}{2^i}\).
		}    
		\KwOut{$2$-RHST tree}
	\end{algorithm}
	
	The CLNSS algorithm starts by applying a random shift to the dataset $P$, where the shift is uniformly sampled from \([0, \Lambda]^d\). Then it invokes the above \textsc{Construct2RHST} on the shifted points to obtain a $2$-RHST $T$ and then invokes \Cref{alg:greedy} on $T$ to find the nested sequence of cluster sets. The \Cref{alg:greedy} uses a greedy method to find centers and the corresponding clustering. The final algorithm is described in \Cref{alg:CLNSS}.
	
	\begin{algorithm}[h!]
		\setcounter{AlgoLine}{0}
		\caption{\textsc{The CLNSS Algorithm}}
		\label{alg:CLNSS}
		\KwIn{Set of points $P$, distance ratio \(\Lambda\)}
		
		Apply a random shift to each point in \(P\), where the shift is uniformly drawn from \([0, \Lambda]^d\). 
		
		Run \Cref{alg:rhst-construction} on the shifted points to build a $2$-RHST tree \(T\). 
		
		Invoke \Cref{alg:greedy} on \(T\) to produce a nested sequence of cluster sets.
		
		\KwOut{A nested sequence of cluster sets}
	\end{algorithm}

	\section{Deferred Lemmas and Proofs from Section 3}
	\label{sec:delay_main_proof}
	\subsection{Analysis of \cref{alg:exp}: Proof of \cref{theo:main}}
	\subsubsection{Proof of \cref{theo:main}: Sensitivity}
	\label{sec:sen}
	
	\begin{lemma}[From \cite{kumabe2022average}]
		\label{lem:lambda_range}
		Let \( D(\cdot, \cdot) \) denote either the earth mover's distance or the total variation distance. Let \(\ALG\) be a randomized algorithm. Suppose there is a parameter \(\lambda\) (resp., \(\lambda_i\)) used in \(\ALG\) or the instance \(P\) (resp. \(P_i\)), sampled from the uniform distribution over \([B, (1 + \alpha)B]\) (resp., \([B_i, (1 + \alpha)B_i]\)). Let \(M\) be an upper bound of \(D(\ALG(P), \ALG(P_i))\). Then for any \(j > 0\), we have
		\begin{align*}
			& \frac{1}{n} \sum_{i=1}^{n} D(\ALG(P), \ALG(P^{(i)})) \\
			& \leq \frac{1}{\alpha Bn} \int_{B}^{(1+\alpha)B} \left(\sum_{i=1}^{n} D \left( \left. \ALG(P) \right|_{\lambda = \hat{\lambda}}, \left. \ALG(P^{(i)}) \right|_{\lambda^i = \hat{\lambda}} \right) \right) d\hat{\lambda}+  \frac{M}{n} \cdot \frac{1 + \alpha}{\alpha} \cdot \sum_{i=1}^{n} \left| 1 - \frac{B^{(i)}}{B} \right|.
		\end{align*}    
	\end{lemma}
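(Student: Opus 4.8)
The plan is to split $D\bigl(\ALG(P),\ALG(P^{(i)})\bigr)$ into two pieces via the triangle inequality for $D$: one in which both runs of $\ALG$ are forced to use a \emph{common} value of the smoothing parameter, and one that only records that $\lambda$ and $\lambda^{(i)}$ are drawn from slightly different, rescaled intervals. Let $\mu$ be the law of $\lambda$ (uniform on $[B,(1+\alpha)B]$) and $\nu^{(i)}$ the law of $\lambda^{(i)}$ (uniform on $[B^{(i)},(1+\alpha)B^{(i)}]$), and abbreviate $F_{\hat\lambda}:=\left.\ALG(P)\right|_{\lambda=\hat\lambda}$ and $G^{(i)}_{\hat\lambda}:=\left.\ALG(P^{(i)})\right|_{\lambda^{(i)}=\hat\lambda}$, so that $\ALG(P)=\int F_{\hat\lambda}\,d\mu$ and $\ALG(P^{(i)})=\int G^{(i)}_{\hat\lambda}\,d\nu^{(i)}$ as mixtures. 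Since both the earth mover's distance (with the symmetric-difference ground metric) and the total variation distance satisfy the triangle inequality,
\begin{align*}
D\bigl(\ALG(P),\ALG(P^{(i)})\bigr)\le D\Bigl({\textstyle\int}F_{\hat\lambda}\,d\mu,\ {\textstyle\int}G^{(i)}_{\hat\lambda}\,d\mu\Bigr)+D\Bigl({\textstyle\int}G^{(i)}_{\hat\lambda}\,d\mu,\ {\textstyle\int}G^{(i)}_{\hat\lambda}\,d\nu^{(i)}\Bigr).
\end{align*}

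For the first term I would invoke joint convexity of $D$ in mixtures over a common mixing measure, valid for both the EMD and TV: optimal couplings witnessing $D(F_{\hat\lambda},G^{(i)}_{\hat\lambda})$ can be averaged over $\hat\lambda\sim\mu$ into one valid coupling of the two mixtures of no larger cost. Thus the first term is at most $\int D(F_{\hat\lambda},G^{(i)}_{\hat\lambda})\,d\mu(\hat\lambda)$, and since $\mu$ has constant density $\tfrac{1}{\alpha B}$ on $[B,(1+\alpha)B]$ this equals $\tfrac{1}{\alpha B}\int_B^{(1+\alpha)B}D(F_{\hat\lambda},G^{(i)}_{\hat\lambda})\,d\hat\lambda$; summing over $i$ and dividing by $n$ yields the integral term of the statement.

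For the second term, both arguments are mixtures of the \emph{same} family $\{G^{(i)}_{\hat\lambda}\}_{\hat\lambda}$, with only the mixing measure changing from $\mu$ to $\nu^{(i)}$. Taking a coupling of $\mu$ and $\nu^{(i)}$ that maximizes the probability the two sampled parameters agree (an optimal TV coupling), the contribution is $0$ whenever they agree (the conditional output laws then coincide) and is at most the a priori bound $M$ on $D$ otherwise, so this term is at most $M\cdot d_{\TV}(\mu,\nu^{(i)})$. It then remains to bound the total variation distance between two uniform laws on $[B,(1+\alpha)B]$ and $[B^{(i)},(1+\alpha)B^{(i)}]$: a short case split on whether $B^{(i)}\le B$ or $B^{(i)}>B$, computing the interval overlap and integrating the smaller of the densities $\tfrac{1}{\alpha B}$ and $\tfrac{1}{\alpha B^{(i)}}$ over it, gives $d_{\TV}(\mu,\nu^{(i)})\le\tfrac{1+\alpha}{\alpha}\bigl|1-\tfrac{B^{(i)}}{B}\bigr|$. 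Plugging in, summing over $i$, and dividing by $n$ produces the second summand $\tfrac{M}{n}\cdot\tfrac{1+\alpha}{\alpha}\sum_{i=1}^n\bigl|1-\tfrac{B^{(i)}}{B}\bigr|$.

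The only mildly delicate points are the two just used: verifying joint convexity for the particular EMD here (the averaging-of-couplings argument above), and doing the interval-overlap computation carefully, noting that the two intervals have \emph{different} lengths $\alpha B$ and $\alpha B^{(i)}$ — this asymmetry is exactly what forces the factor $\tfrac{1+\alpha}{\alpha}$ in the bound rather than something sharper. Everything else is bookkeeping, so I do not anticipate a genuine obstacle.
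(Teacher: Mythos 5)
The paper does not prove this lemma itself --- it is imported verbatim from \cite{kumabe2022average} and stated without proof --- so there is no in-paper argument to compare against. Your proposal is a correct, self-contained derivation and follows what is essentially the standard route: a triangle inequality through the hybrid distribution $\int G^{(i)}_{\hat\lambda}\,d\mu$, joint convexity of $D$ over a common mixing measure to extract the integral term, and an optimal (maximal-agreement) coupling of the two uniform laws to reduce the second term to $M\cdot d_{\TV}(\mu,\nu^{(i)})$. Your interval-overlap computation is right in both cases: for $B^{(i)}\le B$ it gives exactly $\frac{1+\alpha}{\alpha}\bigl|1-\frac{B^{(i)}}{B}\bigr|$, for $B^{(i)}>B$ it gives $\frac{1+\alpha}{\alpha}\bigl|1-\frac{B}{B^{(i)}}\bigr|\le\frac{1+\alpha}{\alpha}\bigl|1-\frac{B^{(i)}}{B}\bigr|$, and the disjoint-interval cases are absorbed because the claimed bound then exceeds $1$. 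The only point worth making explicit is your reading of $M$: the lemma literally defines $M$ as a bound on $D(\ALG(P),\ALG(P_i))$, whereas your coupling step needs $M$ to bound $D\bigl(G^{(i)}_{\hat\lambda},G^{(i)}_{\hat\lambda'}\bigr)$ for two different parameter values on the \emph{same} instance $P^{(i)}$; the statement is loosely worded (note also the vestigial ``for any $j>0$''), and $M$ is clearly intended as a diameter bound on the output space under $D$ --- consistent with the paper's application, which takes $M=1$ for total variation --- so this is an issue with the statement rather than with your argument.
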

	
	\Cref{lem:lambda_range} can be used to bound the average sensitivity when the parameter $\varepsilon$ required by the algorithm corresponds to two similar distributions.

	\begin{proof}[Proof of \Cref{lem:a2x}]
		
		\textbf{Case 1.} If 
		\[\sum\limits_{p \in P} \exp\left(\frac{-T\paren{P, p}}{\lambda}\right)\geq\sum_{p \in P^{(i)}} \exp\left(\frac{-T^{(i)}\paren{P^{(i)}, p}}{\lambda}\right),\]   
		then we can get   
		\begin{align*}
			(II) 
			=&  
			\sum_{i=1}^{n}\sum\limits_{x\in P^{(i)}}\left(\frac{\exp\left(\frac{-T^{(i)}\paren{P^{(i)}, x}}{\lambda}\right)}{\sum\limits_{p \in P^{(i)}} \exp\left(\frac{-T^{(i)}\paren{P^{(i)}, p}}{\lambda}\right)}
			-\frac{\exp\left(\frac{-T^{(i)}\paren{P^{(i)}, x}}{\lambda}\right)}{\sum\limits_{p \in P} \exp\left(\frac{-T^{(i)}\paren{P, p}}{\lambda}\right)} \right)\\   
			\leq &\sum_{i=1}^{n}\sum\limits_{x\in P^{(i)}}\left(\frac{\exp\left(\frac{-T^{(i)}\paren{P^{(i)}, x}}{\lambda}\right)}{\sum\limits_{p \in P^{(i)}} \exp\left(\frac{-T^{(i)}\paren{P^{(i)}, p}}{\lambda}\right)}
			-\frac{\exp\left(\frac{-T^{(i)}\paren{P^{(i)}, x}}{\lambda}\right)}{\sum\limits_{p \in P} \exp\left(\frac{-T^{(i)}\paren{P^{(i)}, p}}{\lambda}\right)}\right)\\
			=&\sum_{i=1}^{n}\sum\limits_{x\in P^{(i)}}\frac{\exp\left(\frac{-T^{(i)}\paren{P^{(i)}, x}}{\lambda}\right)\cdot \exp\left(\frac{-T^{(i)}\paren{P^{(i)}, p^{i}}}{\lambda}\right)}{\sum\limits_{p \in P^{(i)}} \exp\left(\frac{-T^{(i)}\paren{P^{(i)}, p}}{\lambda}\right)}\cdot\frac{1}{\sum\limits_{p \in P} \exp\left(\frac{-T^{(i)}\paren{P^{(i)}, p}}{\lambda}\right)}\\
			= &O(1).
		\end{align*}
		
		\textbf{Case 2.} If 
		\[\sum\limits_{p \in P} \exp\left(\frac{-T\paren{P, p}}{\lambda}\right)<\sum_{p \in P^{(i)}} \exp\left(\frac{-T^{(i)}\paren{P^{(i)}, p}}{\lambda}\right),\]
		then we can get   
		\begin{align*}
			(II)=&\sum_{i=1}^{n}\sum\limits_{x\in P^{(i)}}\frac{\exp\left(\frac{-T^{(i)}\paren{P^{(i)}, x}}{\lambda}\right)}{\sum\limits_{p \in P^{(i)}} \exp\left(\frac{-T^{(i)}\paren{P^{(i)}, p}}{\lambda}\right)}
			\cdot\left( \frac{\sum_{p \in P^{(i)}} \exp\left(\frac{-T^{(i)}\paren{P^{(i)}, p}}{\lambda}\right)}{\sum\limits_{p \in P} \exp\left(\frac{-T^{(i)}\paren{P, p}}{\lambda}\right)}-1\right)\\
			\leq& O\Paren{\frac{\ln n}{\varepsilon}}.
		\end{align*}
		The final inequality follows from \Cref{lem:a1x} and the fact that 
		\[
		\sum_{a \in A, b \in B} |a - b| \geq \sum_{a \in A, b \in B} |A - B|.
		\]
		By combining the two cases, we can conclude that
		\[
		(II) \leq O\left(\frac{\ln n}{\varepsilon}\right).
		\]
	\end{proof}

	\begin{proof}[Proof of \Cref{lem:lemb}]
		We first establish an inequality.
		\begin{align*}
			&\frac{2}{n}\cdot \sum_{i=1}^{n}\left|1-\frac{B^{(i)}}{B}\right|\\
			=&\frac{2}{n}\sum_{i=1}^{n}\frac{6\ln n}{\varepsilon \cdot T\Paren{P, \Bar{x}_t}}\cdot\left|\frac{\varepsilon \cdot T\Paren{P, \Bar{x}_t}}{6 \ln n}-
			\frac{\varepsilon \cdot T^{(i)}\Paren{P^{(i)}, \Bar{x}^{(i)}_t}}{6 \ln (n-1)}\right|\\
			\leq& \frac{2}{n}\sum_{i=1}^{n}\frac{\ln n}{\varepsilon \cdot T\Paren{P, \Bar{x}_t}}\max\left\{0,\frac{\varepsilon \cdot T\Paren{P, \Bar{x}_t}}{\ln n}
			-\frac{\varepsilon \cdot T^{(i)}\Paren{P^{(i)}, \Bar{x}^{(i)}_t}}{ \ln (n-1)}\right\}\\
			&+\frac{2}{n}\sum_{i=1}^{n}\frac{\ln n}{\varepsilon \cdot T\Paren{P, \Bar{x}_t}}\cdot \max\left\{0,\frac{\varepsilon \cdot T^{(i)}\Paren{P^{(i)}, \Bar{x}^{(i)}_t}}{ \ln (n-1)}-
			\frac{\varepsilon \cdot T\Paren{P, \Bar{x}_t}}{ \ln n}\right\}   
		\end{align*}
		
		For the first inequality we have
		
		\begin{align*}
			&\frac{2}{n}\sum_{i=1}^{n}\frac{\ln n}{\varepsilon \cdot T\Paren{P, \Bar{x}_t}}\cdot\max\left\{0,\frac{\varepsilon \cdot T\Paren{P, \Bar{x}_t}}{ \ln n}-
			\frac{\varepsilon \cdot T^{(i)}\Paren{P^{(i)}, \Bar{x}^{(i)}_t}}{\ln (n-1)}\right\}\\
			&\leq\frac{2}{n}\sum_{i=1}^{n}\frac{\ln n}{\varepsilon \cdot T\Paren{P, \Bar{x}_t}}\cdot\left(\frac{\varepsilon \cdot T\Paren{P, \Bar{x}_t}}{\ln (n-1)}-
			\frac{\varepsilon \cdot T^{(i)}\Paren{P^{(i)}, \Bar{x}^{(i)}_t}}{ \ln (n-1)}\right)\\
			&\leq \frac{4}{n\cdot T\Paren{P, \Bar{x}_t}}\cdot\left(\sum_{i=1}^{n} T\paren{P, \Bar{x}_t}- T^{(i)}\paren{P^{(i)},\Bar{x}^{(i)}_t}\right)\\
			&\leq \frac{4}{n\cdot T\Paren{P, \Bar{x}_t}}\cdot\sum_{i=1}^{n}\left( T\paren{P, \Bar{x}_t}- T\paren{P, \Bar{x}^{(i)}_t}+T\paren{x^{(i)}, \Bar{x}^{(i)}_t}\right)\\
			&\leq\frac{4}{n\cdot T\Paren{P, \Bar{x}_t}}\cdot\sum_{i=1}^{n} T\paren{x^{(i)}, \Bar{x}_t}+T\paren{\Bar{x}^{(i)}_t, \Bar{x}_t}\\
			&\leq O(\frac{1}{n})
		\end{align*}
		
		For the second inequality we have
		\begin{align*}
			&\frac{2}{n}\sum_{i=1}^{n}\frac{\ln n}{\varepsilon \cdot T\Paren{P, \Bar{x}_t}}\cdot \max\left\{0,\frac{\varepsilon \cdot T^{(i)}\Paren{P^{(i)}, \Bar{x}^{(i)}_t}}{\ln (n-1)}-\frac{\varepsilon \cdot T\Paren{P, \Bar{x}_t}}{\ln n}\right\}\\
			&\leq \frac{2}{n}\sum_{i=1}^{n}\frac{\ln n}{\varepsilon \cdot T\Paren{P, \Bar{x}_t}}\cdot \max\left\{0,\frac{\varepsilon \cdot T^{(i)}\Paren{P^{(i)}, \Bar{x}^{(i)}_t}}{ \ln (n-1)}  - \frac{\varepsilon \cdot T^{(i)}\Paren{P^{(i)}, \Bar{x}^{(i)}_t}}{ \ln n}\right\}\\
			&\leq \frac{2}{n}\sum_{i=1}^{n}\frac{\ln n}{\varepsilon \cdot T\Paren{P, \Bar{x}_t}}\cdot\left(\frac{\varepsilon \cdot T^{(i)}\Paren{P^{(i)}, \Bar{x}^{(i)}_t}}{ \ln (n-1)}-
			\frac{\varepsilon \cdot T^{(i)}\Paren{P^{(i)}, \Bar{x}^{(i)}_t}}{ \ln n}\right)\\
			&\leq\frac{2}{n}\sum_{i=1}^{n}\ln n\cdot \Paren{\frac{1}{\ln(n-1)}-\frac{1}{\ln n}}\\
			&=\frac{2}{n}\cdot \Paren{\frac{n\ln n}{\ln (n-1)}-n}
			= \frac{2}{n}\cdot O(1)
			= O(\frac{1}{n})
		\end{align*}
		Here, $\frac{n\ln n}{\ln (n-1)}-n$ is a decreasing function, and its value is less than 1 when $n>10$.
		Combining the two inequalities above, we obtain 
		\begin{align*}
			\frac{2}{n}\sum_{i=1}^{n} \left| 1 - \frac{B^{(i)}}{B} \right| \leq O(\frac{1}{n}).
		\end{align*}
		\label{app:proof_b}
	\end{proof}

	\subsubsection{Proof of \cref{theo:main}: Time Complexity}
	\label{sec:complexity}
	
	The time complexity of the algorithm can be analyzed in two parts. 
	
	First, we consider the cost of constructing the $2$-RHST. In the $2$-RHST tree partitioning, each point belongs to exactly one node at each layer, and each layer partitions each node precisely once. Consequently, each point is divided at most \( O(d \Lambda) \) times. Given \( n \) points, the time complexity for this partitioning process is \( O(nd \Lambda) \).
	
	Next, we analyze the time complexity of hierarchical clustering. At each layer, selecting the center using the exponential mechanism incurs a time complexity of \( O(n^2) \). Since this process is repeated \( n \) times, the overall clustering complexity is \( O(n^3) \).
	
	Combining these two components, the total time complexity of the algorithm is:
	\[ O(d n \log \Lambda + n^3).\]

	It is important to note that the time and space complexity of \Cref{alg:exp} remain the same as in the original CLNSS version (\Cref{alg:greedy}).
	
	\subsection{Proof of \Cref{coro:main}}
	\label{sec:coro_main}
	Combining \cref{theo:main} and a technique from \cite{cohen2021parallel}, we are ready to give the proof of \Cref{coro:main}. 
	
	\begin{proof}[Proof of \Cref{coro:main}]
		
		First, note that the cost of any valid solution is bounded below by $1$ and above by \( d \Lambda n \). Let \( k_1, k_2, \dots, k_m \) be the sequence such that \( k_i \) is the largest number of centers such that the optimal \( k_i \)-median cost is at least \( 2^i \), here \(m= \log(d \Lambda n)\).  
		Thus, according to \Cref{theo:main}, we have
		
		\begin{equation*}
			\label{app:cost_corllary}
			\begin{aligned}
				&\mathbb{E}\Brac{\max_{i\in [m]}\frac{\COST_T(P, S_{k_i}, k_i)}{\OPT(P, k_i)}}\le m \cdot \max_{i\in [m]} \mathbb{E}\Brac{\frac{\COST_T(P, S_{k_i}, k_i)}{\OPT(P, k_i)}}= O(m d \log \Lambda \cdot (1+\varepsilon)^k).
			\end{aligned}
		\end{equation*}
		
		Now, observe that for any \( i \) and any \( k \) with \( k_i < k < k_{i-1} \), we have \( \COST_T(P, S_k, k) = O(\COST_T(P, S_{k_{i-1}}, k_{i-1})) \) since \( \COST_T(P, S_k, k) \) is non-increasing in \( k \) and we have \( 2 \cdot \OPT(P, k) \leq \OPT(P, k_i) \) by the choice of the \( k_i \). This implies that 
		\[
		\frac{\COST_T(P, S_{k}, k)}{\OPT(P, k)} = O\left( \frac{\COST_T(P, S_{k_i}, k_i)}{\OPT(P, k_i)} \right).
		\]
		This completes the proof.
	\end{proof}
	
	\section{Proof of \Cref{lem:CLNSS}}
	\label{sec:proofsensCLNSS}
	
	Now, we present the proof of \Cref{lem:CLNSS}.
	
	\begin{proof}
		For simplicity, assume the number of clusters $k$ is $2$. 
		Consider a pair of adjacent $2$-dimensional datasets $X$ and $X'$, where $X'$ is obtained by removing a point from $X$. The first three layers of their RHST trees and the subtrees containing the first two selected centers are shown in \Cref{fig:RHST}. In both of these RHST trees, subtrees $1$ to $16$ each contain $\Omega(n)$ points.
		
		\begin{figure}[!htb]
			\centering
			\begin{subfigure}[b]{0.45\linewidth}
				\includegraphics[width=.98\linewidth]{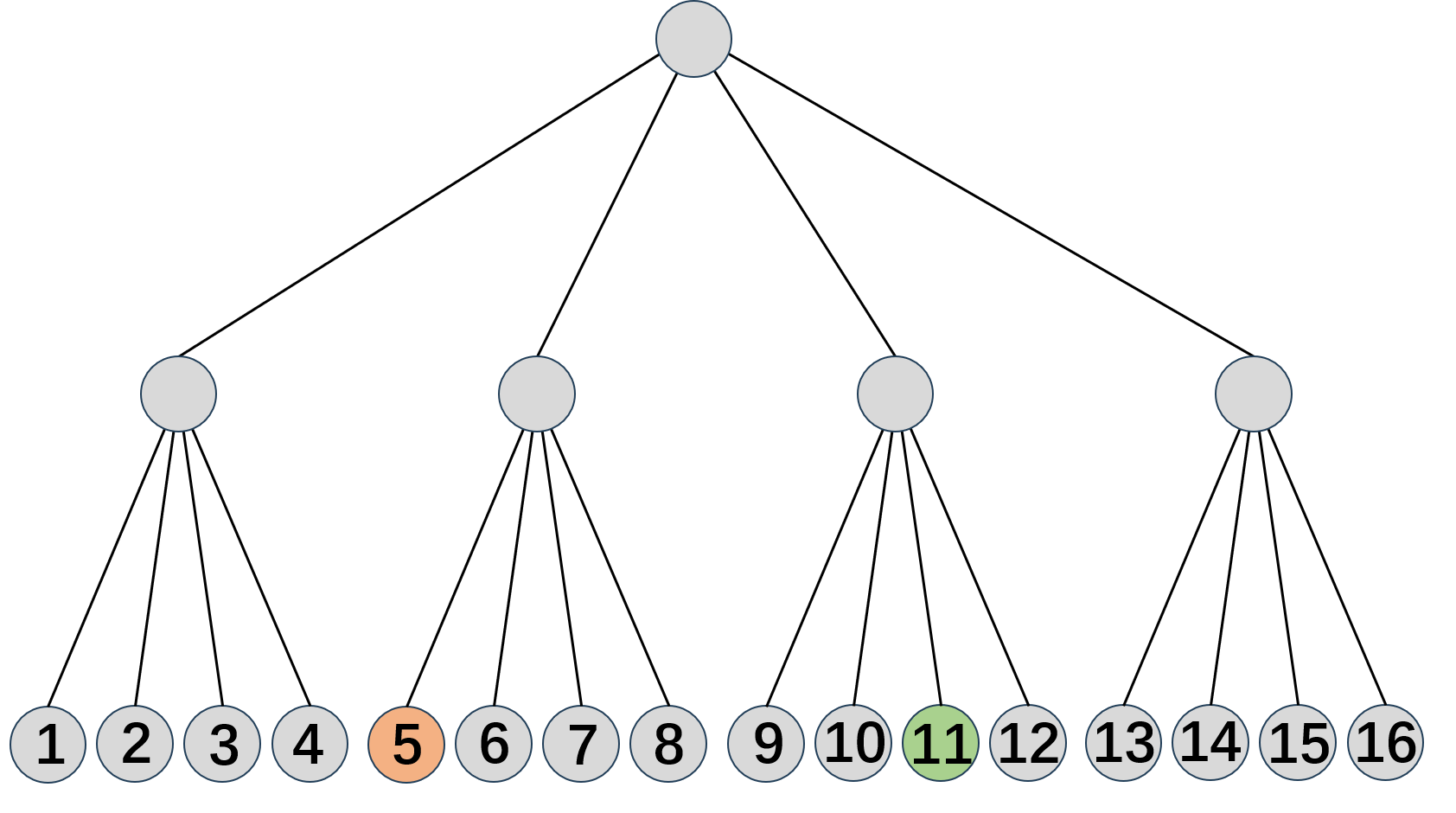}
				\caption{}
				\label{fig:rhst1}
			\end{subfigure}
			\vspace{1em}
			\begin{subfigure}[b]{0.45\linewidth}
				\includegraphics[width=0.98\linewidth]{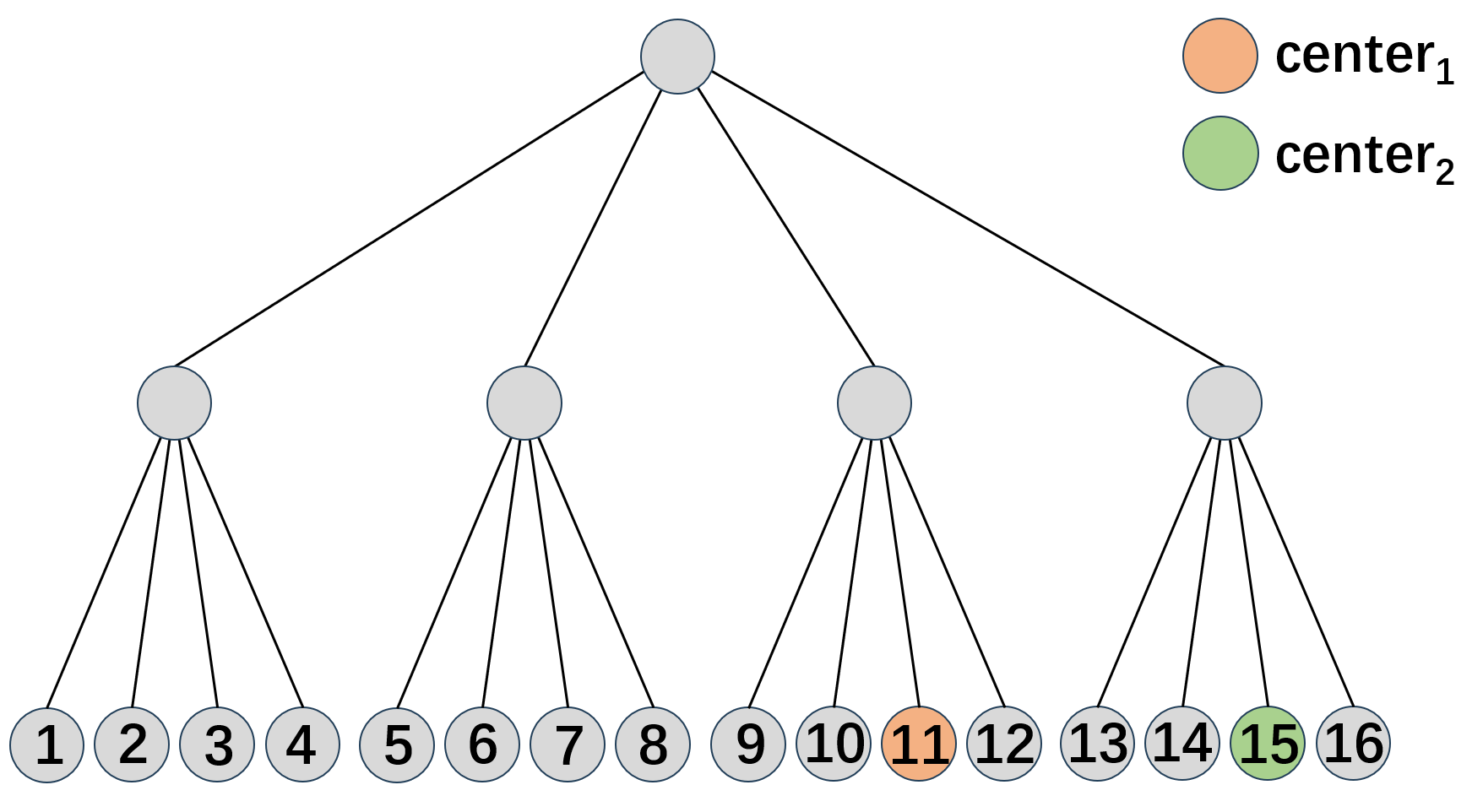}
				\caption{}
				\label{fig:rhst2}
			\end{subfigure}
			\caption{\Cref{fig:rhst1} is the clustering of the $2$-RHST constructed from the original dataset when \( k = 2 \), while \Cref{fig:rhst2} is the clustering of the $2$-RHST after deleting a single point, also with \( k = 2 \).
			}
			\label{fig:RHST}
		\end{figure}
		
		Assume that in the first RHST tree, a point chosen as the center in subtree $5$ has the minimum COST, while points in subtree $11$ and subtree $15$ chosen as centers have the second and third smallest COST, with a difference of less than $\frac{\Lambda}{2}$ from the minimum. And, the COST of points chosen as centers from other subtrees is somewhat higher. Then, after deleting any point in subtree $5$, the first two centers will change to points in subtree $11$ and subtree $15$, as shown in \Cref{fig:RHST}. Note that such examples are easy to construct. Thus the average sensitivity on the above graph is at least
		$\frac{1}{n} \cdot \Omega(n) \cdot \Omega(n) = \Omega(n).
		$
	\end{proof}
	
	\section{More on Experiments}
	
	\label{sec:appendix_experiments}
	\textbf{Synthetic Datasets}
	
	We will now provide additional details on the data generation settings used in the experiment.
	
	(1) Specific instances: we first generate synthetic data as described in \Cref{lem:singlelinkage} and \Cref{lem:CLNSS}. 
	
	In the experiments comparing our algorithm with the CLNSS algorithm and single linkage on these datasets, we set the value of \( \varepsilon \) to 1, \( k \) to 2, and the dataset sizes ranged from 50 to 300 in increments of 50. Specifically, we generated two types of datasets: In \Cref{fig:lowerbound_single}, the data points start from the coordinate origin and are distributed along a random line with progressively increasing distance intervals, while in \Cref{fig:lowerbound_CLNSS}, the data points are concentrated at a position in the Euclidean space corresponding to the third-layer nodes of the RHST tree, with settings identical to those described in the lemmas.
	
	(2) Random datasets: 
	In these experiments \Cref{fig:k_eps_sen}, \( \varepsilon \) takes the values $1$, $10$, and $1000$, with the dataset size fixed at $500$ points. We conducted hierarchical clustering experiments, as shown in \Cref{fig:k_eps_sen}, on a regression dataset generated using scikit-learn \cite{pedregosa2011scikit}. The results of the experiments on synthetic datasets were averaged over $50$ independent trials. Since our algorithm is stochastic, each trial was repeated 10 times to compute the average.

	\textbf{Real Datasets}
	\begin{table}[h!]
		\centering
		\begin{tabular}{lccc}
			\toprule
			\textbf{Dataset}         & \textbf{n} & \textbf{d}  & \textbf{clusters} \\ 
			\midrule
			Iris                     & $150$        & $4$           & $3$              \\ 
			Wine                     & $178$        & $13$          & $3$              \\ 
			Wholesale                & $440$        & $6$           & $2$              \\ 
			Diabetes                 & $442$        & $10$          & $2$              \\ 
			WDBC                     & $569$        & $30$          & $2$              \\ 
			
			Digits                   & $1797$       & $64$          & $10$             \\ 
			Yeast                    & $2417$       & $24$          & $10$             \\ 
			\bottomrule
		\end{tabular}
		\caption{Summary of the datasets used in the experiments. Here, $n$ represents the size of the dataset, $d$ denotes its original dimensionality, and clusters indicate the number of classifications in the dataset.}
		\label{tab:real_data_set}
	\end{table}
	Table \Cref{tab:real_data_set} summarizes the real datasets used in the experiments. It lists the dataset names along with their respective sizes (\( n \)), original dimensionalities (\( d \)), and the number of clusters. For high-dimensional datasets (e.g., Digits and Yeast), dimensionality reduction was performed prior to analysis.
	
	\begin{figure*}[!htb]
		\centering
		\begin{subfigure}{.22\linewidth}
			\centering
			\includegraphics[width=.98\linewidth]{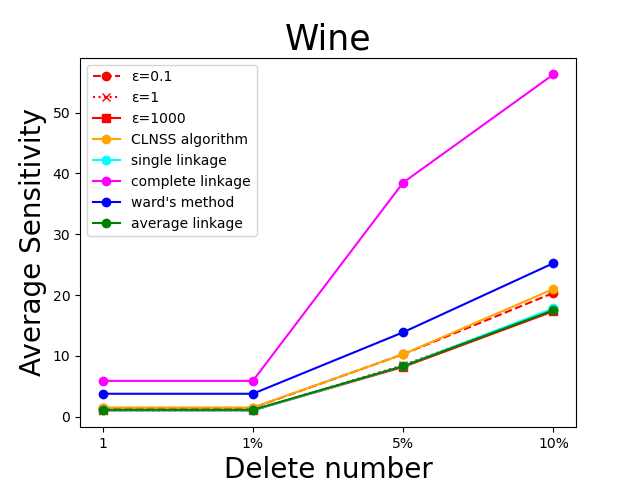}
			\caption{$k=3$}
		\end{subfigure}
		\vspace{0.3em}
		\begin{subfigure}{.22\linewidth}
			\centering
			\includegraphics[width=.98\linewidth]{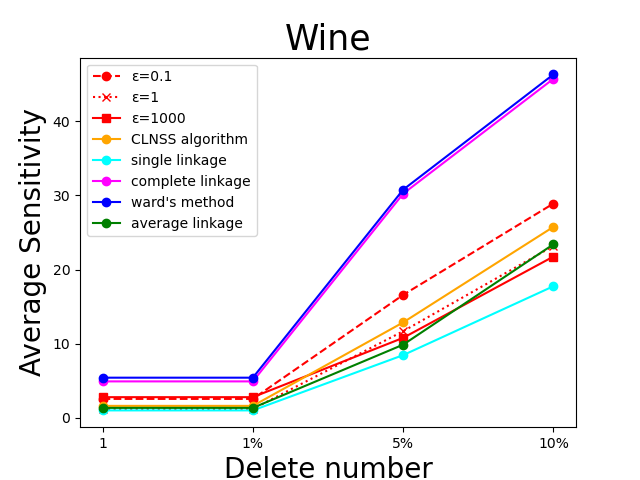}
			\caption{$k=6$}
		\end{subfigure}
		\vspace{0.3em}
		\begin{subfigure}{.22\linewidth}
			\centering
			\includegraphics[width=.98\linewidth]{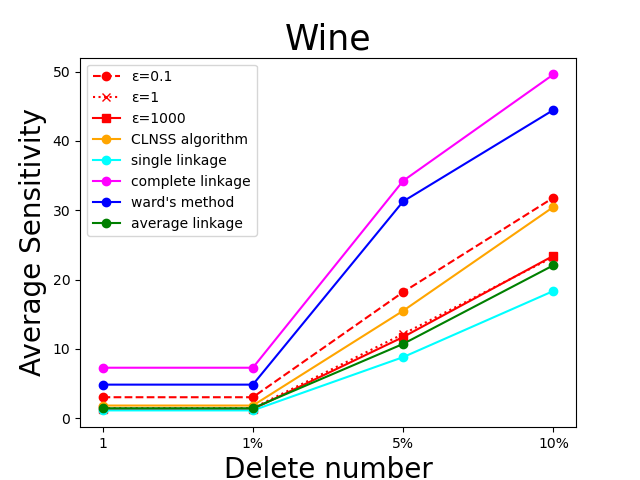}
			\caption{$k=9$}
		\end{subfigure}
		\vspace{0.3em}
		\begin{subfigure}{.22\linewidth}
			\centering
			\includegraphics[width=.98\linewidth]{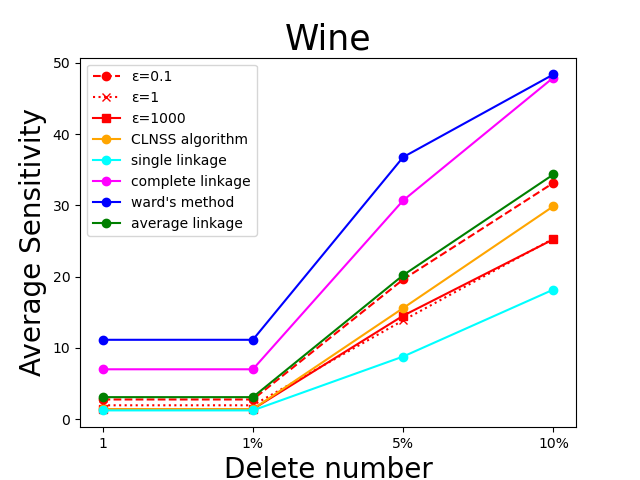}
			\caption{$k=12$}
		\end{subfigure}
		\caption{Results for different values of $k$ on the Wine dataset. Our algorithm exhibits lower average sensitivity across all values of \( k \) in the graph, whereas other algorithms (except for single linkage) perform better only for specific values of \( k \).}
		\label{fig:any_k}
	\end{figure*}
	
	We also evaluated the performance of the algorithm on the same dataset for different values of $k$ (see \Cref{fig:any_k}). \Cref{fig:any_k} shows that the average sensitivity of our algorithm for certain values of \( k \) is quite small. Additionally,  we can see complete linkage, Ward's method, and the CLNSS algorithm exhibit low average sensitivity only for specific values of \( k \).
	
	The appendix also includes the average sensitivity for certain datasets with respect to a given value of \( k \) (\Cref{fig:rel_data_app}). Since the Yeast dataset is larger, we chose a higher value of \( k \) to evaluate the average sensitivity. Additionally, we provide the \( k \)-median cost for a continuous range of \( k \) values across the three datasets (\Cref{fig:rel_data_cost_app}).
	
	\begin{figure*}[!htb] 
		\centering
		\begin{subfigure}{0.3\linewidth}
			\includegraphics[width=\linewidth]{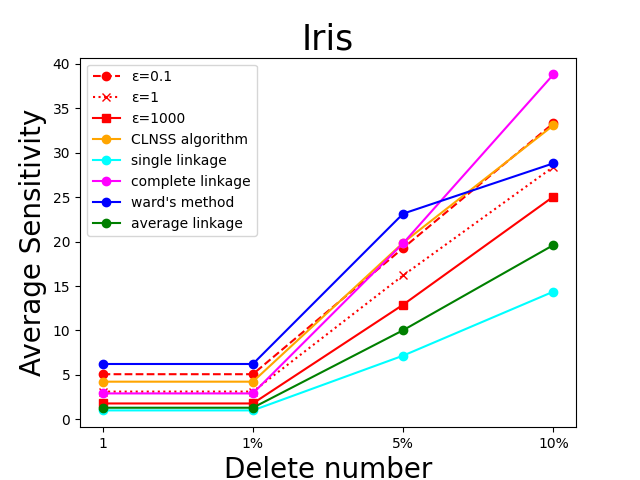}
			\caption{Iris}
		\end{subfigure}
		\begin{subfigure}{0.3\linewidth}
			\centering
			\includegraphics[width=\linewidth]{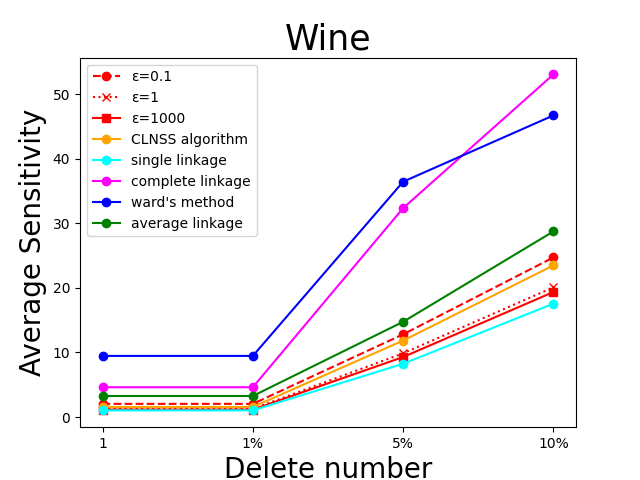}
			\caption{Wine}
		\end{subfigure}
		\begin{subfigure}{0.3\linewidth}
			\centering
			\includegraphics[width=\linewidth]{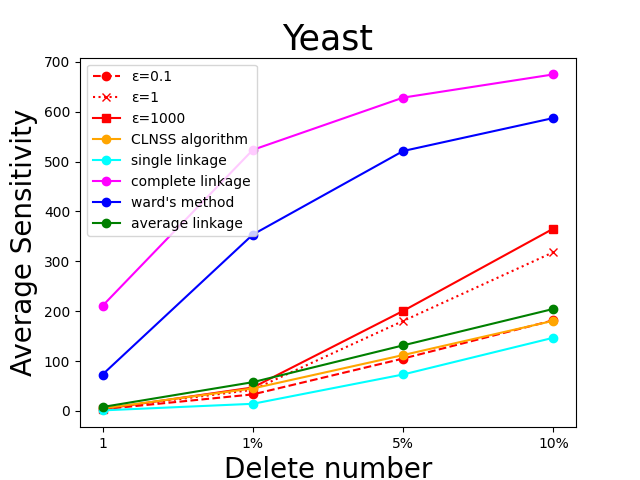}
			\caption{Yeast}
		\end{subfigure}
		\caption{Results on real-world data sets. The $x$-axis represents the deleted points number and the $y$-axis represents the corresponding average sensitivity. Here, \( k\) is fixed: Iris \( k = 4 \), Wine \( k = 4 \), Yeast \( k = 12 \).}
		\label{fig:rel_data_app}
	\end{figure*}
	\begin{figure*}[!htb] 
		\centering
		\begin{subfigure}{0.3\linewidth}
			\centering
			\includegraphics[width=\linewidth]{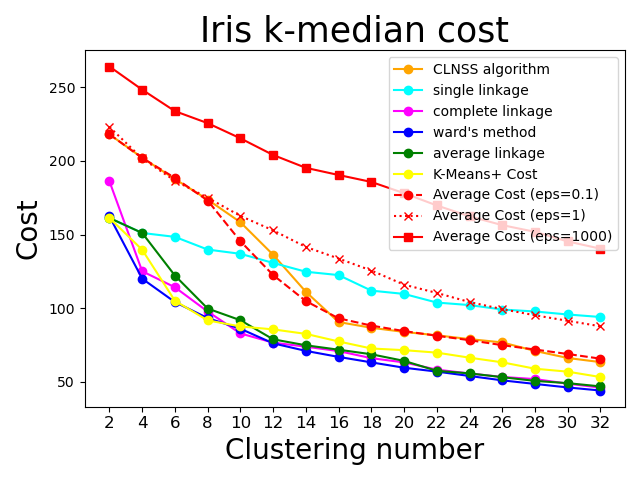}
			\caption{Iris}
		\end{subfigure}
		\begin{subfigure}{0.3\linewidth}
			\centering
			\includegraphics[width=\linewidth]{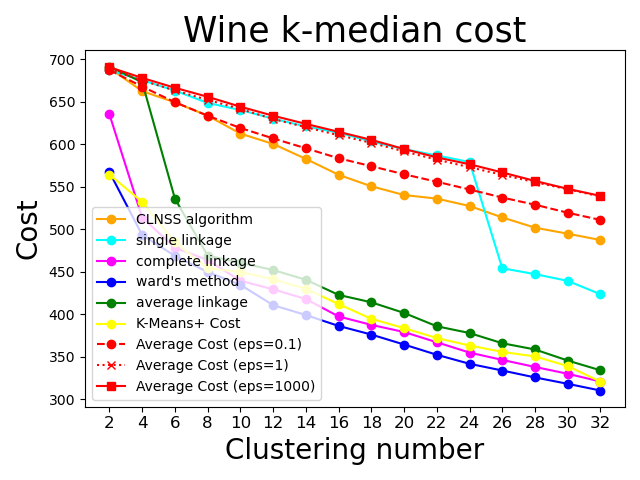}
			\caption{Wine}
		\end{subfigure}
		\begin{subfigure}{0.3\linewidth}
			\centering
			\includegraphics[width=\linewidth]{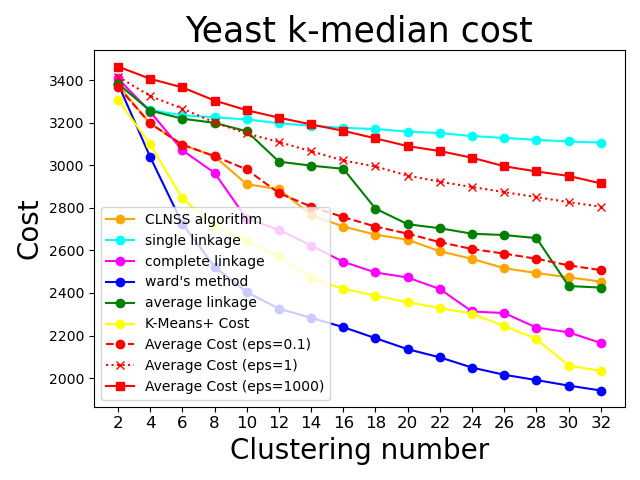}
			\caption{Yeast}
		\end{subfigure}
		\caption{Results on real-world data sets. The $x$-axis represents the centers, and the $y$-axis represents the $k$-median cost corresponding to the $k$ centers. On the Iris dataset, our algorithm gradually outperforms the single linkage algorithm as \( k \) increases and performs similarly to other algorithms. On the Wine dataset, our algorithm closely matches the performance of CLNSS and single linkage. On the WDBC dataset, only the single linkage algorithm exhibits poor performance.}
		\label{fig:rel_data_cost_app}
	\end{figure*}

	\section{Low Average Sensitivity of Single Linkage on Well-Clusterable Datasets}\label{sec:lowsinglelinkage}
	We introduce the following definition of well-clusterable datasets.
	\label{sec:example_single}
	\begin{definition}[Well-clusterable datasets]
		\label{def:wellclusterable}
		Let $m\leq n$ be an integer. Let $P$ be a set of $n$ points, and let  $\{C_1, \dots, C_m\}$ be a partition of  $P$  into  $m$  clusters. Let $d_i$ represent the maximum weight of edges in the minimum spanning tree (MST) of cluster $C_i$, and let $d_{i,j} = \min_{x \in C_i, y \in C_j} \text{DIST}(x, y)$  denote the minimum distance between clusters  $C_i$  and  $C_j$.
		
		We say $P$ is a well-clusterable dataset if the distance between any two clusters  $C_i$  and  $C_j$ satisfies:
		\[
		d_{i,j} > 2 \max(d_i, d_j), \quad \forall i \neq j.
		\]
	\end{definition}
	
	We provide the following guarantee for single linkage clustering on well-clusterable datasets: the single linkage clustering exhibits small average sensitivity for  $k$-clustering if  $m$ is small and  $k \leq m$.
	\begin{lemma}
		\label{exp:single}
		Let $P$ be a well-clusterable dataset. Then for $1\le k \le m$, the sensitivity of the $k$-clustering result of single linkage is bounded by $2(m-k)+1$, which is at most $2m$, where $m$ is as specified in \cref{def:wellclusterable}.
	\end{lemma}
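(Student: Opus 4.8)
The plan is to first nail down the structure of the single-linkage dendrogram on a well-clusterable dataset, and then argue that a random deletion almost never disturbs the \emph{macro} part of that structure.

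\textbf{Step 1 (structure on $P$).} Single linkage is exactly Kruskal's algorithm on the complete graph of $P$, so it merges clusters in increasing order of minimum inter-cluster distance. Every edge of $\mathrm{MST}(C_i)$ has weight at most $d_i$, and $d_i<2\max(d_i,d_j)<d_{i,j}$ for all $j\ne i$ by \cref{def:wellclusterable}; hence every intra-cluster merge is performed before any inter-cluster merge, so the $m$-clustering produced is precisely $\{C_1,\dots,C_m\}$, and above that level single linkage merges the super-clusters $C_1,\dots,C_m$ in the order given by Kruskal on the quotient graph $H$ with vertex set $[m]$ and edge weights $d_{i,j}$. Consequently, for $k\le m$ the clustering $\mathcal P_k(P)$ is the partition of $P$ induced by the connected components of the forest $F$ consisting of the $m-k$ lightest edges of $\mathrm{MST}(H)$ (equivalently, $\mathrm{MST}(H)$ with its $k-1$ heaviest edges removed).

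\textbf{Step 2 (structure on $P^{(i)}$).} Fix $p\in C_\ell$ with $|C_\ell|\ge 2$. Removing the vertex $p$ from $\mathrm{MST}(C_\ell)$ splits it into the subtrees hanging from $p$'s neighbours $q_1,\dots,q_\delta$, each at distance at most $d_\ell$ from $p$; reconnecting them through the edges $q_1q_t$ costs at most $\DIST(q_1,p)+\DIST(p,q_t)\le 2d_\ell$ each, so the new internal MST weight satisfies $d_\ell'\le 2d_\ell<d_{\ell,j}\le\DIST(C_\ell\setminus\{p\},C_j)$ for all $j\ne\ell$. Thus the Step 1 reasoning still applies on $P^{(i)}$: the $m$-clustering is $\{C_1,\dots,C_\ell\setminus\{p\},\dots,C_m\}$, and the relevant quotient graph $H^{(i)}$ differs from $H$ only in the weights of edges incident to $\ell$, which can only increase, and only when $p$ is the unique point of $C_\ell$ realizing $\DIST(C_\ell,C_j)$ for the corresponding neighbour $C_j$.

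\textbf{Step 3 (which deletions matter) and Step 4 (accounting).} Call an edge of $F$ \emph{critical}, and call $p\in C_i$ a \emph{critical point} if it is the unique realizer of $\DIST(C_i,C_j)$ for some critical edge $(i,j)$; there are at most $2(m-k)$ critical points. If $p$ is not critical (and $|C_\ell|\ge 2$), then by Step 2 its deletion changes the weight only of edges of $H$ that are either not in $\mathrm{MST}(H)$ or among its $k-1$ heaviest. Increasing the weight of a non-tree edge never changes the MST, and increasing the weight of one of the $k-1$ heaviest tree edges can at worst swap it for another edge that is still among the $k-1$ heaviest, so the forest $F$ of the $m-k$ lightest tree edges, and hence the coarsening of $\{C_1,\dots,C_m\}$ into $k$ parts, is unchanged even if several such edges change at once; therefore $\mathcal P_k(P^{(i)})$ is just $\mathcal P_k(P)$ with $p$ removed from its part, and $\lvert\mathcal P_k(P)\triangle\mathcal P_k(P^{(i)})\rvert=1$. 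For each of the at most $2(m-k)$ critical points we use the crude bound $\lvert\mathcal P_k(P)\triangle\mathcal P_k(P^{(i)})\rvert\le n$ coming from a near-identity matching of the parts. Averaging over the $n$ deletions then yields $\beta\le\frac1n\big((n-2(m-k))\cdot 1+2(m-k)\cdot n\big)\le 2(m-k)+1$.

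\textbf{Main obstacle and caveats.} The crux is Step 3: one must show a single deletion perturbs the macro-partition only when the deleted point ``anchors'' one of the $m-k$ merges used to build the $k$-clustering, which is a monotonicity/stability statement about how $\mathrm{MST}(H)$, and in particular its set of $m-k$ lightest edges, responds to raising the weights of edges incident to a single vertex. The $2d_\ell$ reconnection bound in Step 2 is the small but essential ingredient that lets the separation hypothesis survive a deletion. Some care is needed for clusters of size one (deleting a singleton cluster removes a vertex of $H$ rather than merely reweighting, and can split one macro-part), and for two-element clusters; these cases only affect the lower-order terms and are what the weaker blanket bound ``at most $2m$'' absorbs.
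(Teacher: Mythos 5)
Your proposal is correct and follows essentially the same route as the paper's proof: identify single linkage with Kruskal's algorithm, use the triangle-inequality reconnection bound of $2d_i < d_{i,j}$ to show the $m$-clustering survives a deletion, charge at most two ``critical'' realizer points to each of the $m-k$ inter-cluster merges, and average $(n-2(m-k))\cdot 1 + 2(m-k)\cdot n$ over $n$ deletions. Your Step 3 (MST stability under raising weights of non-critical edges) and the singleton-cluster caveat are spelled out more carefully than in the paper, but the argument is the same.
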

	
	\begin{proof}
		
		Let \( P' = \{ C_1, \dots, C_i - x, \dots, C_m \} \) be the data obtained by deleting a vertex \( x \) from \( C_i \). We need to measure the difference in the single linkage \( k \)-clustering results between \( P \) and \( P' \).
		
		First, consider the case when \( k = m \). Note that the process of single linkage is essentially the same as Kruskal's algorithm for constructing the MST, and since \( d_{i,j} > 2 \max(d_i, d_j) \), no clusters \( C_i \) and \( C_j \) (with \( i \neq j \)) will be merged before \( C_1, \dots, C_m \) are clustered separately. Therefore, the single linkage \( k \)-clustering result of \( P \) is \( \{ C_1, \dots, C_m \} \). For \( P' \), if \( x \) is a leaf in the MST of \( C_i \), deleting it will not affect \( d_i \). If \( x \) is not a leaf in the MST of \( C_i \), deleting it will split the original MST into several trees. We then consider connecting these trees in an arbitrary way to form a new spanning tree for \( C_i \). By the triangle inequality, the newly added edges will not exceed \( 2 d_i \) and thus will be less than \( d_{i,j} \) for any \( j \neq i \). Therefore, the clustering result for \( P' \) is \( \{ C_1, \dots, C_i - x, \dots, C_m \} \), and the sensitivity of the \( k \)-clustering is 1.
		
		For $k = m - 1, \dots, 1$, this is equivalent to performing $m - k$ additional single linkage merges starting from the $k$-clustering result. Since in each single linkage merge, the inter-cluster distance is affected by only two vertices, and deleting other vertices will not increase the inter-cluster distance between clusters, the single linkage merge will not be affected as long as the deleted point $x$ is not one of these two points. Thus, the average sensitivity can be bounded as follows:
		\[
		1 \times \frac{n - 2(m - k)}{n} + n \times \frac{2(m - k)}{n} \leq 2(m - k) + 1, 
		\]
		where the first term $1 \times \frac{n - 2(m - k)}{n}$ corresponds to deleting vertices that do not affect the single linkage merge, and the second term $n \times \frac{2(m - k)}{n}$ accounts for deleting vertices that do affect the merge.
	\end{proof}

	\paragraph{Many real datasets closely resemble well-clusterable ones}
	
	To further validate the clustering structure, we applied the DBSCAN algorithm\footnote{{Note that we do not use DBSCAN for preprocessing the data; it is only used to verify whether the real-world dataset is well-clusterable. Other clustering methods can also be employed to assess the clusterability property.
			For the evaluation of our algorithms, we apply our clustering method and other baselines directly to the original dataset.
	}}, a classic density-based clustering method. We chose DBSCAN over the original labels because the dataset contains outliers, which cause the clusters defined by the labels to be poorly connected. As a density-based algorithm, DBSCAN is better suited to handle such cases, as it preserves the inherent structure of the clusters.
	
	To use DBSCAN, two parameters must be set: \( \varepsilon \) and \( \text{min\_samples} \). \( \varepsilon \) defines the neighborhood radius and determines whether points are considered neighbors, while \( \text{min\_samples} \) specifies the minimum number of neighbors required for a point to be classified as a core point, thereby controlling the density of the clusters.
	DBSCAN begins by examining each point in the dataset. If a point has enough neighboring points within its \(\varepsilon\)-radius (i.e., at least \(\text{min\_samples}\) neighbors), it is designated as a core point, and a cluster is formed. Points that are within the \(\varepsilon\)-radius of a core point but do not themselves have enough neighbors are classified as border points.

	We use the output clustering of the DBSCAN algorithm to calculate the  max-intra distance \(d_i\) and the inter-cluster distance \(d_{i,j}\), as defined in \cref{def:wellclusterable}.

	\begin{figure}[!htb]
		\centering
		\label{fig:exp_linkage}
		\begin{subfigure}[b]{0.3\linewidth}
			\centering
			\includegraphics[width=.98\linewidth]{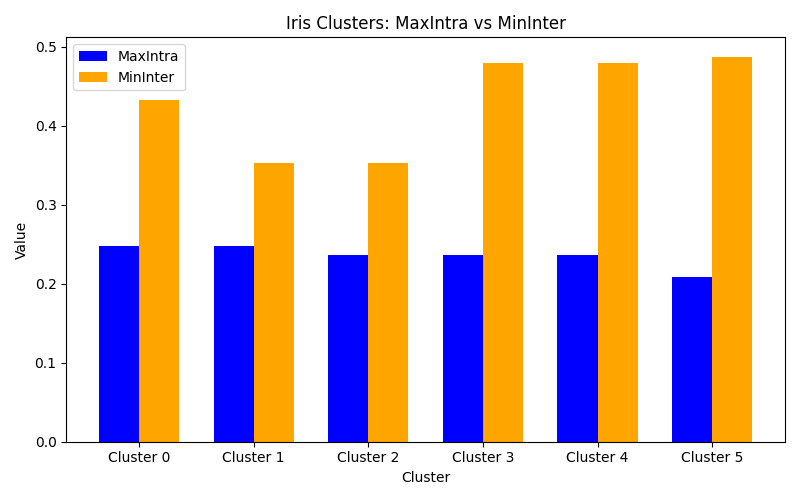}
			\caption{In the Iris dataset, the DBSCAN parameters are set to $\varepsilon = 0.25$ and $\text{min\_samples} = 3$.}  \label{fig:Iris_Clusters_MaxIntra_vs_MinInter}
		\end{subfigure}
		\hspace{1em} 
		\begin{subfigure}[b]{0.3\linewidth}
			\centering
			\includegraphics[width=0.98\linewidth]{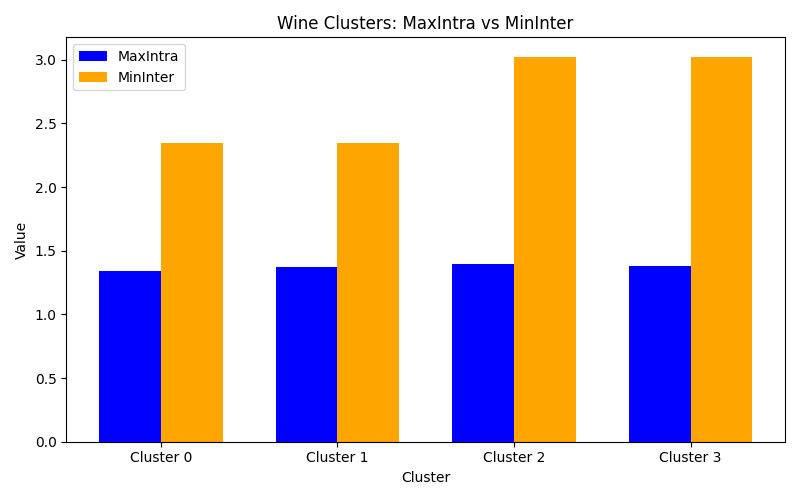}
			\caption{In the Wine dataset, the DBSCAN parameters are set to $\varepsilon = 1.5$ and $\text{min\_samples} = 3$.}
			\label{fig:Wine_Clusters_MaxIntra_vs_MinInter}
		\end{subfigure}
		\hspace{1em} 
		\begin{subfigure}[b]{0.3\linewidth}
			\centering
			\includegraphics[width=0.98\linewidth]{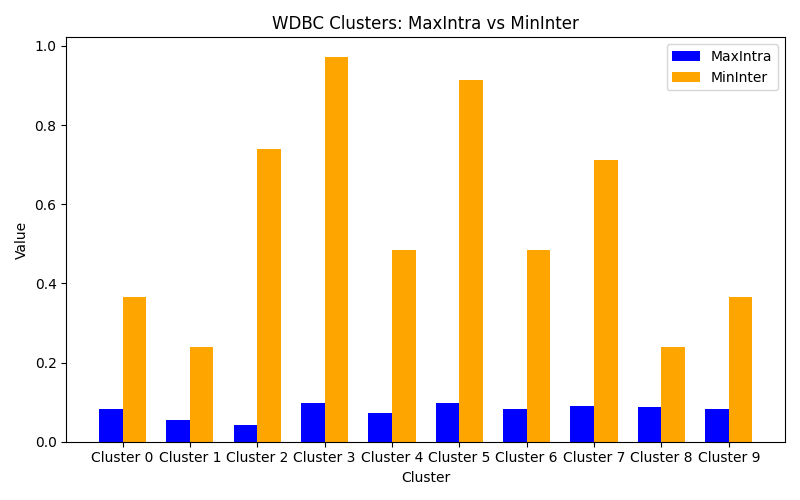}
			\caption{In the WDBC dataset, the DBSCAN parameters are set to $\varepsilon = 0.1$ and $\text{min\_samples} = 3$.}
			\label{fig:WDBC_Clusters_MaxIntra_vs_MinInter}
		\end{subfigure}
		\caption{The $x$-axis represents the number of clusters, while the $y$-axis corresponds to the Euclidean distance. MaxIntra denotes $d_i$, while MinInter represents the minimum inter-cluster distance among $d_{i,j}$ values adjacent to $d_i$. 
			Experiments show that these dataset exhibit a strong clustering structure, i.e., there is large gap between MaxIntra and MinInter, which explains why single linkage achieves lower average sensitivity on such datasets.
		}
		\label{fig:single_linkage_example}
	\end{figure}
	\begin{table}[t]
		\centering
		\begin{tabular}{lccl}
			\toprule
			\textbf{Dataset} & \textbf{$\varepsilon$} & \textbf{min\_samples} \\ 
			\midrule
			Wholesale        & $0.1$                 & $3$                   \\ 
			Diabetes         & $0.6$                & $3$                   \\ 
			Digits           & $0.5$                & $3$                  \\ 
			Yeast            & $0.5$                & $3$                  \\ 
			\bottomrule
		\end{tabular}
		\caption{Parameter settings of the DBSCAN algorithm for various datasets, including details of the \( \varepsilon \) and \text{min\_samples} parameters for each dataset. The parameter \( \varepsilon \) defines the neighborhood range (neighborhood radius) and determines whether points are considered neighbors, while \text{min\_samples} specifies the minimum number of neighbors required for a point to be classified as a core point, thus controlling the density of the cluster.}
		\label{tab:dbscan_parameter}
	\end{table}
	The experimental results in \Cref{fig:single_linkage_example} confirm our hypothesis that these real-world datasets (Iris, Wine and WDBC) exhibit a strong cluster structure, i.e. closely resemble well-clusterable datasets. Thus, single linkage exhibits lower sensitivity. Other datasets exhibit similar properties, however, due to variations in the number of well-defined clusters across datasets and the difficulty of visualizing large datasets in graphs, we provide the settings \Cref{tab:dbscan_parameter} for reproducibility, allowing readers to verify the results themselves.

\end{document}